\documentclass[mnsc,nonblindrev]{informswp}
\OneAndAHalfSpacedXI
\usepackage{geometry}
\usepackage{endnotes}
\usepackage{amsmath}
\usepackage{amsfonts}
\usepackage{amssymb}
\usepackage{mathrsfs} 
\usepackage{graphicx}
\usepackage{subfig}
\usepackage{color}
\usepackage{multirow}
\usepackage{dsfont}
\usepackage{float}
\usepackage{epstopdf}
\usepackage{lipsum}
\usepackage{appendix}
\usepackage{enumitem}
\usepackage{bm}
\usepackage{algorithmic}
\usepackage{algorithm}
\usepackage{url}
\usepackage{hyperref}
\usepackage{bbm}
\usepackage{gensymb}

\hypersetup{
    colorlinks=true, 
    linktoc=all,     
    linkcolor=black!70!green,  
    citecolor=black!70!green 
}

\usepackage{nicefrac} 
\usepackage{soul}
\usepackage{algorithmic}
\usepackage{algorithm}
\usepackage{booktabs} 
\usepackage{comment}
\usepackage{multicol}

\newcommand{\mb}[1]{\mbox{\boldmath $#1$}}
\newcommand{\mbt}[1]{\mbox{\boldmath $\tilde{#1}$}}

\newcommand{\E}{\mathbb{E}}

\newcommand{\mbc}[1]{\mbox{\boldmath $\check{#1}$}}

\newcommand{\bmb}[1]{\bm{\bar{#1}}}
\newcommand{\bmc}[1]{\bm{\check{#1}}}
\newcommand{\bmh}[1]{\bm{\hat{#1}}}

\newcommand{\bmt}[1]{\bm{\tilde{#1}}}

\newcommand{\cd}{\mathcal{D}}

\newcommand{\cu}{\mathcal{U}}

\newcommand{\cw}{\mathcal{W}}
\newcommand{\cx}{\mathcal{X}}
\newcommand{\cy}{\mathcal{Y}}
\newcommand{\cz}{\mathcal{Z}}

\newcommand{\bbr}{\mathbb{R}}

\usepackage{tabularx}
\newcolumntype{L}[1]{>{\raggedright\arraybackslash}p{#1}}
\newcolumntype{C}[1]{>{\centering\arraybackslash}p{#1}}
\newcolumntype{R}[1]{>{\raggedleft\arraybackslash}p{#1}}

\newtheorem{illustration}{Illustration}

\newcommand{\mbh}[1]{\mbox{\boldmath $\hat{#1}$}}

\newcommand{\sstext}[1]{\scriptscriptstyle{\text{#1}}}

\usepackage{color}
\usepackage[usenames,dvipsnames]{xcolor}
\definecolor{strcolor}{rgb}{0.6, 0.2, 0.6}
\definecolor{commentcolor}{rgb}{0.3125, 0.5, 0.3125}
\definecolor{keycol}{rgb}{0, 0, 1}
\definecolor{ballblue}{rgb}{0.13, 0.67, 0.80}
\newcommand{\gged}[1]{{\color{Mahogany} GG: #1}}

\newcommand{\code}[1]{\texttt{\bfseries{#1}}}

\usepackage{color-edits}
\addauthor{rv}{black}
\addauthor{xz}{magenta}
\addauthor{ks}{cyan}
\addauthor{gg}{ForestGreen}
\addauthor{yg}{purple}

\usepackage{listings}
\lstdefinelanguage{Julia}%
{morekeywords={abstract,break,case,catch,const,continue,do,else,elseif,%
		end,export,false,for,function,immutable,import,importall,if,in,%
		macro,module,otherwise,quote,return,switch,true,try,type,typealias,%
		using,while},%
	sensitive=true,%
	alsoother={\$},%
	morecomment=[l]\#,%
	morecomment=[n]{\#=}{=\#},%
	morestring=[s]{"}{"},%
	morestring=[m]{'}{'},%
}[keywords,comments,strings]%

\usepackage{natbib}
\bibpunct[, ]{(}{)}{,}{a}{}{,}%
\def\bibfont{\small}%
\def\bibsep{\smallskipamount}%
\def\bibhang{24pt}%

\TheoremsNumberedThrough     
\ECRepeatTheorems
\EquationsNumberedThrough    

\begin{document}
\RUNAUTHOR{Loke et al.}
\RUNTITLE{Decision-Driven Regularization}
\TITLE{Decision-Driven Regularization \\ \large{A Blended Model for Learning and Optimization}}
\ARTICLEAUTHORS{
\AUTHOR{Gar Goei Loke}
\AFF{Department of Management and Marketing, Durham University Business School, Waterside, Durham DH1 1SL, 
United Kingdom, \EMAIL{gar.g.loke@durham.ac.uk}}

\AUTHOR{Qinshen Tang\footnote{Corresponding author. The authors are listed in alphabetical order with implied equal authorship.}}
\AFF{Nanyang Business School, 
Nanyang Technological University, Singapore, 
\EMAIL{qinshen.tang@gmail.com}}

\AUTHOR{Yangge Xiao}
\AFF{
Faculty of Business and Economics, The University of Melbourne, 
\EMAIL{yangge.xiao@unimelb.edu.au}}

\AUTHOR{Xun Zhang}
\AFF{International Institute of Finance, School of Management, University of Science and Technology of China, Hefei 230036, China.
\EMAIL{xunzhang2023@outlook.com}}
} 

\ABSTRACT{
In contextual optimization, the decision-maker seeks optimal decisions to minimize a cost function, that varies based on observed features. This context is common in many business applications ranging from on-demand delivery and retail operations to portfolio optimization and inventory management. In this paper, we study the learning and optimization approach, which first learns how outcomes result from the features, and then selects optimal decisions based on these outcomes. We focus on the integrated learning and optimization literature, and identify that a lack of control for prediction accuracy can lead to overfitting and a loss of decision effectiveness against simple separate learning and optimization models. Instead, we propose a bi-objective formulation that balances prediction accuracy and cost minimization, termed \emph{decision-driven regularization}. It also addresses ambiguity in the definition of the cost function via a surrogate that depends on a new hyperparameter. We additionally show that alternative perspectives for formulating the problem, namely robust optimization and regret minimization, lead to models that are closely related to our proposed model. As a consequence, our framework generalizes models such as SPO+ in \cite{Elmachtoub_Grigas_2022smart}.
Our model is shown to be numerically superior to other benchmarks, such as OLS, Random Forest, XGBoost, SPO+, Perturbation Gradient, and Learning and Rank, in our synthetic studies.
}

\KEYWORDS{learning and optimization, robust optimization, worst-case regret minimization, regularization, 
contextual optimization }

\maketitle

\vspace{-1cm}
\section{Introduction}

In this paper, we consider the decision-making under uncertainty setting, where a decision-maker solves a cost minimization problem
$
    \min\limits_{\bm y \in \cy} ~ \mathbb{E}_{\bm z} [c(\bm y; \bm z)],
$
for some decisions $\bm y$ in a feasibility set $\mathcal{Y}\subseteq \mathbb{R}^d$ and some unobserved outcomes $\bm z \in \cz \subseteq \mathbb{R}^d$. In most practical settings, the distribution of $\bm z$ is unknown and the focus is on using a collection of, say $N$, data points $\{\bmt{z}_1, \ldots, \bmt{z}_N\}$ to make good decisions $\bm y$, cognizant that this is a sample that may not fully represent the true distribution.
such as \cite{BenTal_etal_2013robust}, \cite{Delage_Ye2010distributionally},
\cite{Esfahani_Kuhn2018data}, \cite{Bertsimas_Kallus_2020predictive}.

Efforts to improve the inference of the distribution of $\bm z$ has led to the \emph{contextual} setting, where features (also called side information or covariates), denoted as $\bm x \in \cx \subseteq \mathbb{R}^p$, are related to unobserved outcomes $\bm z$ via $\bm z = \bm g(\bm x) + \bm \epsilon$ for some map $\bm g: \cx \rightarrow \cz$ and some noise $\bm \epsilon \in \mathbb{R}^d$. The presence of features $\bm x$ allows the decision-maker not just to improve the estimation of uncertain outcomes $\bm z$, but also to allow decisions $\bm y$ to adapt to features $\bm x$ \citep{Hertog_Postek_2016_bridging}. Specifically, the decision-maker usually solves the following problem:
\begin{align}
    \bm y(\bm x) &\in \argmin_{\bm y \in \mathcal{Y}}~\mathbb{E}_{\bm z\mid \bm x}\big[c(\bm y; \bm z) \mid \bm x \big], \qquad \forall \bm x \in \mathcal{X} \label{eq.contextual}
\end{align}
where $\bm z | \bm x$ are the conditional random outcomes.

This setting is often referred to as contextual stochastic optimization (or sometimes, decision-aware learning, or joint prediction and optimization). 
It is increasingly common and can be seen in various contexts ranging from on-demand delivery (\citealp{Liu_He_Shen_2021time}) to retail operations (\citealp{
Perakis_Sim_Tang_Xiong_2023joint}), and from portfolio optimization (\citealp{Ban_ElKaroui_Lim_2018machine}) to inventory management (\citealp{Siege_Wagner_l2020profit, Qi_2023practical}), among many others. 
A substantial and growing body of work proposes different approaches to solve the contextual stochastic optimization problem (see, for example, \citealt{Mandi_2024decision, Sadana_et_al_2024survey}). In what follows, we organize our discussion on related works according to the classification framework of \cite{Sadana_et_al_2024survey}. In particular, we focus on the learning and optimization framework.

The learning and optimization framework \eqref{eq.contextual}, which includes sequential learning and optimization (SLO) and integrated learning and optimization (ILO), proceeds in two stages. First, it fits a predictive model to approximate the conditional distribution of $\bm z | \bm x$ in \eqref{eq.contextual}; second, it solves an optimization problem that minimizes the expected cost under this predicted distribution. 

\subsubsection*{Sequential learning and optimization.} 

The SLO approach decouples the process of learning from the subsequent optimization step. In other words, in no part is any information about the cost minimization problem used in the estimation of the relationship of $\bm z|\bm x$. It is broadly divided into the predict-then-optimize (PTO) and estimate-then-optimize (ETO) streams \citep{Qi_Grigas_Shen_2021integrated}. In the former, `point predictions' $\mathbb{E}[\bm z | \bm x]$ are estimated from the learning model to solve for the optimal decision under $\min_{\bm y} c(\bm y; \mathbb{E}[\bm z | \bm x])$. Examples of this literature include, for example, \cite{Ferreira_Lee_Simchi_Levi_2016analytics} and \cite{Liu_He_Shen_2021time}.

PTO is increasingly less popular as it interchanges the cost function $c$ and the expectation operator $\mathbb{E}$, as compared to the original stochastic problem \eqref{eq.contextual}. When $c$ is affine in $\bm z$, this is equivalent to \eqref{eq.contextual}. In more general settings, others have tried to explore how to estimate the distribution $\bm z | \bm x$. This is termed as ETO. A notable stream of work focuses on constructing a distribution from machine learning predictions \citep{Hannah_et_al_2010nonparametric, Bertsimas_Kallus_2020predictive, Srivastava_et_al_2021data, Lin_et_al_2022data, Notz_Pibernik_2022}, or equivalently from the regression residuals \citep{Deng_Sen_2018learning, Ban_Gallien_Mersereau_2019dynamic,  Kannan_et_al_2025data}. To better account for uncertainty in the estimation of the conditional distribution, distributionally robust optimization techniques have been applied \citep{Bertsimas_VanParys2022bootstrap, Esteban_Morales_2022distributionally, Perakis_Sim_Tang_Xiong_2023joint, Kannan_Bayraksan_Luedtke_2024residuals}. 

A growing body of literature highlights the potential suboptimality of separating the learning and optimization stages \cite[see \emph{e.g.},][and the ILO literature that we discuss later]{Liyanage_Shanthikumar_2005_practical, Mundru_2019predictive}. The core issue is that high prediction accuracy in the learning stage does not necessarily translate into good performance in the subsequent decision-making step. To illustrate this, consider the widely adopted parametric setting for problem~\eqref{eq.contextual}, which posits that the true map $\bm g: \mathcal{X} \rightarrow \mathcal{Z}$ lies in a parametric family $\mathcal{H} = \left\{ \bm{f}(\bm x; \bm w) \,:\, \bm w \in \mathcal{W} \right\}$. Given a dataset of observed feature-outcome pairs $\{(\bmt{x}_n, \mbt{z}_n)\}_{n=1}^N$, the decision-maker estimate parameters $\bmh w  \in\argmin_{\bm w} L(\bm w)$, where $L(\bm w)$ denotes an accuracy-based loss function. For SLO, as explained in \cite{Elmachtoub_Grigas_2022smart}, the direction in the parameter space of $\bm w$ that minimizes loss function $L(\bm w)$ does not necessarily align with the direction that minimizes the cost function $c(\bm y; \bm{f}(\bm x; \bm w))$. To further illustrate this misalignment, we present a knapsack example in Appendix~\ref{append.relationship}.

\subsubsection*{Integrated learning and optimization.}

Most initially, ILO departs from SLO by embedding the subsequent optimization problem directly into the training objective, so that model parameters are chosen to minimize empirical decision cost rather than the conventional accuracy loss. This paradigm dates back to \cite{Bengio_1997using}, who, in the context of portfolio management, first advocated tuning predictive models to maximize realized returns rather than to accurately predict prices.

In recent years, this concept has gained significant attention. One major stream of work focuses on adapting the prediction models in SLO methods so that it takes into consideration the subsequent cost minimization problem.
When PTO methods are adapted to the ILO context, this is sometimes termed the expected value-based model (\textit{e.g.}, \citealt{Amos_Kolter_2017optnet}). ETO methods have also been adapted to factor in cost minimization (\textit{e.g.}, \citealt{Donti_Amos_Kolter_2017task}, \citealt{Qi_Grigas_Shen_2021integrated}, \citealt{Kallus_Mao_2023stochastic}). This also includes robust models that consider uncertainty in the estimation (\textit{e.g.}, \citealt{Sim_Tang_Zhou_Zhu_2024analytics}, \citealt{Wang_Chen_Wang_2024Contextual}). 

A separate stream of work focuses on minimizing regret, which measures the gap between the decision cost under the learned model and the oracle's ``hindsight" decision (\textit{e.g.}, \citealt{Elmachtoub_Grigas_2022smart}, \citealt{Jeong_et_al_2022exact}, \citealt{Estes_Richard_2023smart}). However, the regret minimization problem is often non-convex, leading to a focus on algorithms for optimizing the surrogate loss (\textit{e.g.}, \citealt{Athey_Tibashirani_Stefan_2019generalized,
el_Elmachtoub_Grigas_2019generalization, 
Vlastelica_et_al_2019differentiation,
Wilder_et_al_2019end,
Mandi_Demirovic_Stuckey_Guns_2019smart,
Chung_et_al_2022decision,
HoNguyen_KilincKarzan_2022risk, 
Hu_Kallus_Mao_2022fast,  
Lawless_Zhou_2022note,
Munoz_Pineda_Morales_2022bilevel, 
Estes_Richard_2023smart}). 
\rvedit{
Recent studies have also developed differentiable or solver-efficient approaches for decision-focused learning, including differentiating through regularized integer linear programs \citep{mckenzie2024differentiating}, designing surrogate losses for differentiable optimization layers \citep{mandi2025minimizing}, and constructing solver-free losses for linear optimization problems \citep{berden2026solverfree}.}

A final stream assumes that the optimizer is exposed to an oracle who is able to generate $\bm y^\star(\bm x)$ given $\bm x$, and seeks a function that best imitates the optimal solution $\bm y^\star$. This is referred to as imitation learning (\textit{e.g.}, \citealt{Kong_et_al_2022end}).
When the model assumed by the optimizer explicitly models the learning through an assumed learner, we sometimes see the application of inverse optimization techniques, see \cite{Chan_Mahmood_Zhu_2025inverse} for a comprehensive review.

In relation to our work, we find that the first two streams, namely, extending SLO to ILO and regret minimization to be most closely related. The setting we consider is linear, thus both adaptations of PTO and ETO to the ILO setting are relevant. Additionally, we later theoretically illustrate deep connections between the regret minimization problem and the cost minimization approach (Theorem \ref{thm.regret ddr}), drawing our link to the regret minimization stream.

A commonality observed in the numerical results of works of these two streams is that ILO only begins to outperform the SLO model with ordinary least squares (OLS) estimator when the misspecification is raised to a high degree.  
This is sometimes justified as being due to the ``inherent robustness properties of the least squares loss" \citep{Elmachtoub_Grigas_2022smart}. Additionally, these works note that their models consistently perform similarly or better than other SLO models that employ state-of-the-art machine learning techniques to form predictions, to further justify the outlying behavior of OLS. We, however, beg to differ. There is growing evidence in the consistent superiority of OLS across multiple applications and cost functions against ILO models \citep{Hu_Kallus_Mao_2022fast}, culminating in the work \cite{elmachtoub2023estimate} which proves the superiority of SLO under the non-misspecified regime.

In our work, we take a different approach---we view OLS as the most basic benchmark that any ILO model should defeat, and aspire to craft a general ILO framework that can do so in the low misspecification regime and by extension, outperform other ILO models. We believe the missing ingredient thus far in the ILO literature is the lack of careful control for prediction accuracy when undertaking cost or regret minimization, which we discuss in depth in \S\ref{subsec.cost fun ambiguity}.

\subsection{Approach and contributions} \label{subsec.approach and contributions}

In this paper, we propose a model within the ILO setting that balances prediction accuracy and cost minimization. This is done by incorporating a \emph{decision-driven regularization} in the learning process that captures the optimal value of the decisions that can be obtained if a particular choice of weights $\mb w$ was chosen in the predictive stage. In particular, we make the following contributions:

\begin{enumerate}[label = \alph*., itemsep = 2pt]
    \item \ul{We explain the importance of prediction accuracy in ILO to motivate our blended bi-objective framework, termed `DDR'.} Because in ILO, the decision-maker can alter the weights of the estimation model to one that artificially depresses the cost function so that an extreme decision becomes attainable where it would not be optimal or feasible under the true cost function. This will lead to overfitting, which we exhibit in Illustration \ref{illust.bad_weights}. Hence, we propose a bi-objective formulation that balances prediction accuracy and cost minimization.
	
    \item \ul{We propose a combination of the empirical and estimated cost to define a surrogate for the cost function}. To balance prediction accuracy and cost minimization, we have to first define the cost function, which is ambiguous. We illustrate that the most accurate depiction of the cost function is not the estimated cost, but rather one with a slight bias in the estimate, as a result of bias-variance trade-off (Proposition~\ref{prop.bv}). In particular, we propose a surrogate of the cost function by minimizing the predicted cost while controlling its difference from the empirical cost. 
	
    \item \ul{We prove that the robust optimization and regret minimization starting points can lead to DDR, under specific circumstances}. Specifically, we construct model formulations from the perspectives of robust optimization and regret minimization, and prove their relations to our model in Theorems \ref{thm.rddr} and \ref{thm.regret ddr} respectively. Thus, we show that multiple models in the literature are special cases of DDR, including SPO$+$; \label{point.persp} 
	
    \item \ul{We provide comprehensive synthetic numerical evidence to illustrate the superiority of DDR against the benchmarks tested}, including popular prediction models for SLO and multiple ILO models. DDR outperforms all these benchmarks under low misspecification. Only SPO+ manages to outperform DDR under large sample sizes and high misspecification, but loses out to DDR when small-order polynomial terms are added to both learners.
\end{enumerate}

\section{Decision-Driven Regularization} \label{section::model description} 

In this work, we restrict our attention to the simple but widely used bilinear cost function $c(\bm y; \bm z) := \bm y^{\top} \bm z$. We also assume a parametric model for the map $\bm g: \mathcal{X} \rightarrow \mathcal{Z}$, namely, $\bm g(\bm x) = \bm{f}(\bm x; \bm w)$ for some $\bm w \in \cw \subseteq \mathbb{R}^q$. Under no misspecification, there exist some true but unobserved weights $\bmc w$ such that $\bm g(\bm x) = \bm{f}(\bm x; \bmc w)$. In practice, however, the decision-maker observes only finite, noisy samples $\bmt z_n = \bm g(\mbt{x}_n) + \mbt{\epsilon}_n$ for $n \in [N]$, where each $\bmt \epsilon_n$ is the realized error and $[N] = \{1,2,\cdots, N\}$ is the set of indices up to $N$. We collect these into the training data set $\mathcal{D}^N = \left\{(\mbt{x}_n,\mbt{z}_n)_{n\in[N]} \right\}$. Under the SLO and ILO frameworks, the decision process occurs in two stages:
\begin{enumerate}
    \item (Prediction) Estimate the weights $\bmh{w}$ from training data set $\mathcal{D}^N$ through optimizing a loss function $L(\bm w)$, \textit{i.e.},
    \begin{equation*}
        \bmh w \in \argmin_{\bm w}\; L(\bm w),
    \end{equation*}
    yielding the predictor $\bmh z(\bm x) =\bm{f}(\bm x; \bmh w)$ for any new features $\bm{x}$.
    \item (Optimization) Decisions are then made by solving the approximate problem
    \begin{equation*}
        \min_{\bm y \in \mathcal{Y}} \; \bm y^\top \bmh z(\bmh x).
    \end{equation*}
\end{enumerate}
Hereinafter, we denote true values with the accent $\check{ }$\;, empirical values with the accent $\tilde{ }$\;, and estimated or inferred values with the accent $\hat{ }$\;.

In SLO, the decision-maker estimates weights $\bmh{w}$ by minimizing some \textit{accuracy-based} loss function $L(\bm w)$ that is chosen independently of the cost function. The loss can be based on a measure of prediction error, such as $L(\bm w) = \mathbb{E}_{\bm x, \bm z}[\ell(\bm{f}(\bm x; \bm w), \bm z)]$, where $\ell: \cz \times \cz \mapsto \bbr$ measures the closeness between predicted and observed outcomes. A common example is the mean squared error (MSE), where $\ell(\bmh z; \bmt z) = \|\bmh z - \mbt z\|_2^2$. This loss is typically estimated in-sample using the dataset $\cd^N$ as
$
L(\bm w) = \frac{1}{N} \sum_{n \in [N]} \ell\big(\bm{f}(\bmt x_n; \bm w); \bmt z_n\big).
$ In many cases, the loss function is augmented with a regularization term to prevent overfitting, for instance Lasso regression with $L(\bm w) = \frac{1}{N}\sum\limits_{n \in [N]} \ell\big(\bm{f}(\bmt x_n; \bm w); \bmt z_n\big) + \theta \|\bm{w}\|_1$ or Ridge regression with $L(\bm w) = \frac{1}{N}\sum\limits_{n \in [N]}\ell\big(\bm{f}(\bmt x_n; \bm w); \bmt z_n\big) + \theta \|\bm{w}\|_2^2$.

\subsection{The peril of forgoing prediction accuracy} \label{subsec.The peril of forgoing prediction accuracy}
As discussed in the introduction, when the loss function $L(\bm w)$ does not factor in the cost function---as in SLO---the direction of optimal $\mbh w$ might not align with the one leading to lower cost. ILO addresses this limitation by involving the cost function $c(\bm y; \bm z)$ in the prediction stage. Let $\bm y^\star(\bm z) = \argmin_{\bm y \in \mathcal{Y}} \bm y^\top \bm z$. Many ILO approaches adopt a \textit{decision-focused} loss function $L(\bm w)$\footnote{For simplicity, we use the same notation $L(\bm w)$ to denote both accuracy-based and decision-focused loss functions.}, which is explicitly related to expected cost or regret, such as, 
\begin{subequations}\label{eqn.decision-focused loss funtion}
\begin{align}
    L(\bm w) &= \E_{\bm x, \bm z} \left[\bm y^\star(\bm{f}(\bm x, \bm w))^\top\bm z \right] \label{model:fully_cost-driven} \\
  ~~\text{or}~~ L(\bm w) &= \E_{\bm x, \bm z}\left[\bm y^\star(\bm{f}(\bm x, \bm w))^\top\bm z - \min_{\bm y\in\mathcal{Y}} \; \bm y^\top\bm z \right]. \label{model:fully_cost-driven_expost}
\end{align}
\end{subequations}

Before proceeding, we remark here that in \eqref{model:fully_cost-driven_expost}, since $\bm z$ in the oracle term $\min \; \bm y^{\top} \bm z$ is unaffected by $\bm w$ (such as when one uses SAA to approximate \eqref{model:fully_cost-driven_expost}), the $\mbh w$'s obtained from minimizing \eqref{model:fully_cost-driven} and \eqref{model:fully_cost-driven_expost} coincide.

One representative ILO approach is the SPO loss, proposed by  \cite{Elmachtoub_Grigas_2022smart}. 
They study a model to find the weights for the prediction problem that minimizes the regret: 

\begin{equation}\label{model:spoloss} \tag{\code{SPO}}
	\bmh w^{\rm SPO}  := 
    \argmin_{\bm{w}}~\frac{1}{N} \sum_{n \in [N]}\Big[\max_{\bm y\in \mathcal{Y}^*(\bm{\hat z})}\{\bm y^{\top} \bmt z_n\} - \min_{\bm y_n\in\cy}\{ \bm{y}_n^{\top} \bmt z_n\}\Big],
\end{equation} 
where $\mathcal{Y}^*(\bm{\hat z}) = \arg\min_{\bm y\in\mathcal{Y}}\bm y^{\top}\bm {\hat z}$.
This is a sample average approximation of \eqref{model:fully_cost-driven_expost}. The~\ref{model:spoloss} formulation can potentially be nonconvex. Instead, the following surrogate, which is convex and Fisher consistent to~\ref{model:spoloss}, is proposed. This is named ``SPO+": 
\begin{align}
	\bmh{w}^{\rm SPO+} = \argmin\limits_{\bm{w}} & \quad 2\frac{1}{N}\sum_{n\in [N]} \bm y^\star(\bmt z_n) ^{\top}\bm{f}(\bmt x_n; \bm w)  + \frac{1}{N}\sum_{n\in[N]} \max\limits_{\bm y_n \in \cy} \big\{\bm y_n^{\top} \bmt z_n - 2 \bm y_n^{\top}\bm{f}(\bmt x_n; \bm w)\big\}. \label{model:spo+}\tag{\code{SPO+}}
\end{align}

\ref{model:spo+} and many ILO literature are cost-driven, in the sense that the 
criteria for selecting $\mbh{w}$ is the cost function (or its regret) and does not take into account prediction accuracy. While these works argue that it is possible to obtain high-performing decisions without prediction accuracy, it is worthwhile to note that they do not in fact prove or argue that prediction accuracy is a \emph{necessary sacrifice} in the path to seeking high-performing solutions.

Instead, there may be drawbacks to not accounting for prediction accuracy. We scrutinized the numerical results in \ref{model:spo+} and note that under low mis-specification (lower than order $8$), \ref{model:spo+} is systematically outperformed by an approach that has a larger focus on predictive accuracy. We recover these findings in our numerical simulations (see \S\ref{subsec.misspecified} later). 

To better understand why forgoing prediction accuracy in \eqref{model:fully_cost-driven} or \eqref{model:fully_cost-driven_expost} is not necessarily desirable, we examine the following illustration:

\begin{illustration}\label{illust.bad_weights}

\rvedit{
We consider the knapsack problem, which involves two routes, and the decision maker must choose one to traverse. Each route $ j \in \{a,b\} $ incurs a cost $z_j$, which depends on a two-dimensional feature $\bm x $ such as weather information and current congestion, modeled as 
\begin{equation*}
   \begin{bmatrix}
z_a \\
z_b
\end{bmatrix} =  \begin{bmatrix}
\check w_{11} & \check w_{12} \\
\check w_{21} & \check w_{22}
\end{bmatrix}\begin{bmatrix}
x_a \\
x_b
\end{bmatrix} + \begin{bmatrix}
\epsilon_a \\
\epsilon_b
\end{bmatrix}
\end{equation*}
where $\epsilon_j$ for $j\in\{a,b\}$ are independently sampled noise. Suppose the true coefficient matrix is given by $\mbc W = \begin{bmatrix} 1 & 0 \\ 0 & 1.5 \end{bmatrix}$, and the decision-maker is availed $N=4$ data points: 
\begin{align*}
    \bmt{x}_1 = (1, 1)^\top, & \qquad \bmt{z}_1 = (1.6, 1.3)^\top, \\
    \bmt{x}_2 = (2, 1)^\top, & \qquad \bmt{z}_2 = (1.8, 1.7)^\top, \\
    \bmt{x}_3 = (1, 2)^\top, & \qquad \bmt{z}_3 = (0.8, 3.1)^\top, \\
    \bmt{x}_4 = (3, 2)^\top, & \qquad \bmt{z}_3 = (3.0, 2.9)^\top.
\end{align*} 
\noindent \ul{Claim}: Using SLO with OLS estimator, route $b$ is chosen if and only if $1.5 \hat x_2\le \hat x_1$, which is the ground truth; whereas using ILO under cost-driven rule \eqref{model:fully_cost-driven} leads to a degenerate solution where route $b$ is chosen if and only if $\hat x_2\le \omega \hat x_1$ for some $1\le \omega <2$ that corresponds to the chosen optimal coefficient matrix under ILO. (Please refer to Appendix~\ref{Proof of Claim 1} for the proof.)
}


\rvedit{We visualize the decision difference described in the Claim in Figure~\ref{fig:Illustration} below.}
\begin{figure}[h!]
     \centering
     \includegraphics[width=0.48\linewidth]{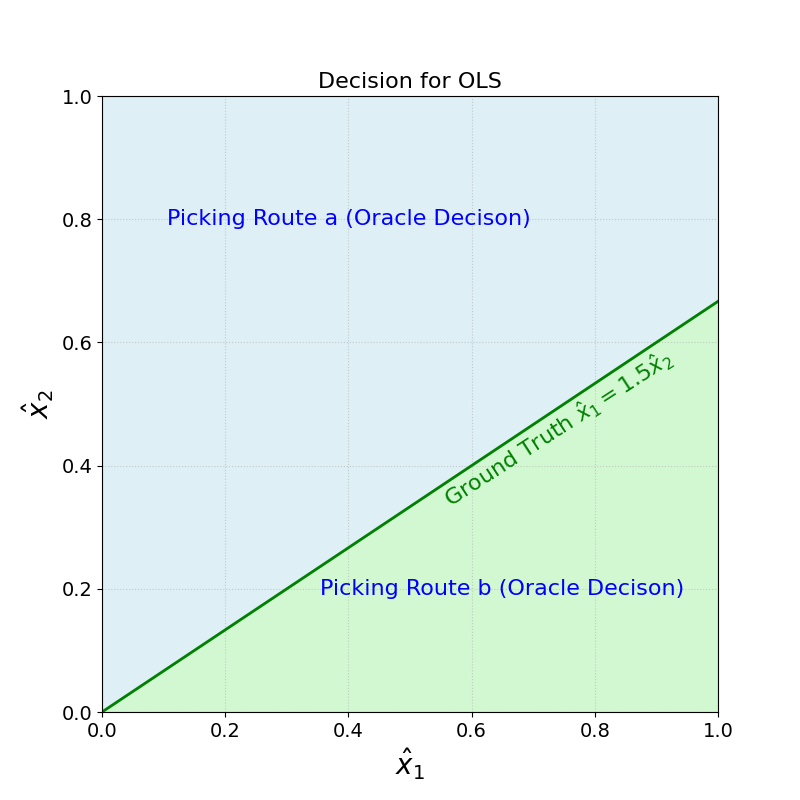}
\includegraphics[width=0.48\linewidth]{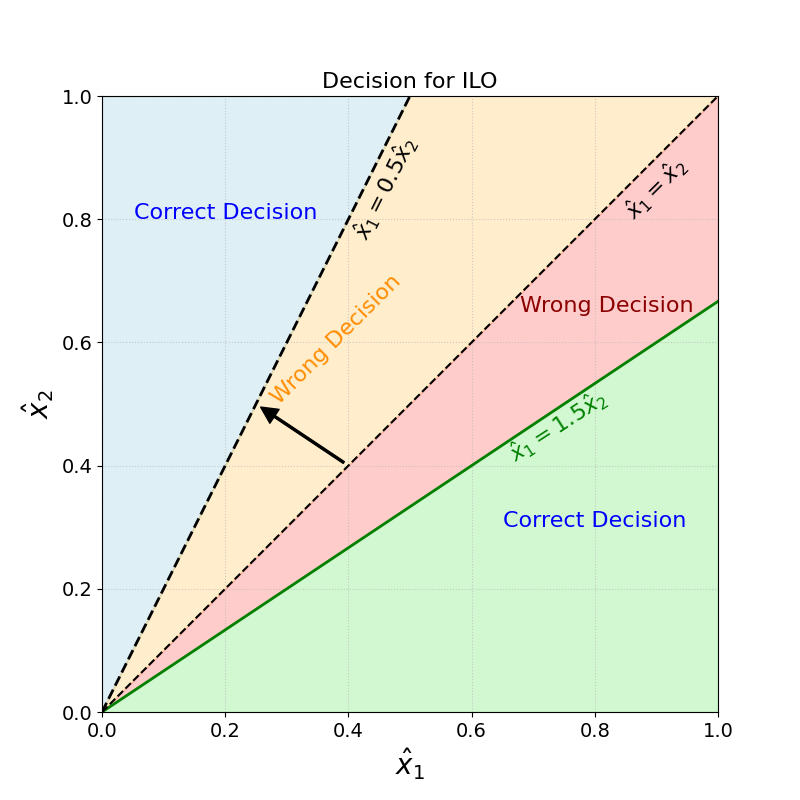}
     \caption{Visualization of decision for OLS (left) and ILO (right). The green line represents the Ground Truth ($\hat{x}_1 = 1.5\hat{x}_2$).} 
     \label{fig:Illustration}
\end{figure}
The decision region for picking route $b$ with OLS lies underneath the green Ground Truth line (see the green region in the left panel of Figure~\ref{fig:Illustration}). Notice that by construction, OLS coincides with the ground truth, thus this decision also represents the oracle's decision. As for ILO, if the \rvedit{ratio $\omega$} 
is chosen at the lowest extreme \rvedit{$\omega = 1$},
then ILO's choice of route $b$ lies underneath the line $\hat{x}_1 = \hat{x}_2$, which includes the correct decision region, plus the red region, see the right panel of Figure~\ref{fig:Illustration}. If the \rvedit{ratio $\omega$}
is chosen at the other extreme \rvedit{$\omega = 2$},
then the decision region of choosing route $b$ contains the correct decision region plus both yellow and red regions, see right panel of Figure~\ref{fig:Illustration}. In either case, both yellow and red regions are sub-optimal under the truth: ILO will pick route $b$ when in fact it is optimal to pick route $a$. 
\end{illustration}

\rvedit{To see why ILO arrives at the wrong decision region in the right pannel of Figure~\ref{fig:Illustration}, we retrace what constrains ILO to require $\omega \geq 1$ that eliminated the true decision region $\omega = 2/3$. A closer scrutiny of the proof of Claim 1 reveals that this arises from the first data point. If we had the oracle cost for this data point, we would obtain $f(\mbt x_1; \mbc w) = (1,1.5)^{\top}$ which implies that route $a$ should be chosen, however, empirically the noisy outcomes $\mbt z_1 = (1.6, 1.3)^{\top}$ flipped the decision to choose route $b$. Choosing route $b$ induces the constraint $\omega \geq 1$, whereas choosing route $a$ results in $\omega \leq 1$ which would not eliminate the true decision region. We can liken the knapsack problem with learning in Illustration 1 to a binary classification problem of choosing route $a$ or $b$ as a function of covariates $\bm x$, using a linear classifier. Unfortunately, the true labels are not available and ILO uses the empirical costs to label the data point as route $a$ if $\tilde{z}_a \leq \tilde{z}_b$ and vice versa. But $\mbt z$ is noisy, thus ILO undertakes a binary classification with \emph{noisy labels} in Illustration 1. 
More generally, any decision-focused learning scheme cannot avoid using $\mbt z$ to define its in-sample loss in some way or another, which introduces noise into the objective. 
}

\subsection{Cost function ambiguity} \label{subsec.cost fun ambiguity}

To involve the cost function in the prediction stage, each ILO model must define the cost function. However, this definition differs across the ILO literature. For example, in \ref{model:spoloss} studied by \cite{elmachtoub2023estimate}, the authors have defined the cost $\bm y^\star(\bm{f}(\bmt x, \bm w))^\top\bmt z$, using the observed outcomes $\bmt z$ in the data; in \cite{Tulabandhula_Rudin_2013machine} and \cite{Zhu_Xie_Sim_2022joint}, they use $\bm y^\top \bm{f}(\bmh x, \bm w)$---cost estimated on the testing data---in the prediction stage. In other words, these models have used different surrogates for the definition of the cost function $\bm y^\top \bm z$ for a given set of decisions $\bm y$. In reality, the true cost $\bm y^\top \bmc z$ will never be known at the point of prediction---we can have a sample or model estimates. In other words, there is ambiguity in the cost function.

In this work, we are not considering the use of testing data in the estimation of the cost function as in \cite{Tulabandhula_Rudin_2013machine} and \cite{Zhu_Xie_Sim_2022joint} since there is no ground truth established by the model. Instead, we utilize the estimated or predicted outcomes $\bmh z = \bm{f}(\bmt x, \bm w)$, which leads to what we term as the estimated cost---$\bm y^\top \bm{f}(\bmt x, \bm w)$. Solely using estimated outcomes $\bmh z$ incurs a bias-variance trade-off in the estimation of the cost function $\bm y^\top\bmc z$, as illustrated in the following proposition.

\begin{proposition}\label{prop.bv} Let $\bm x \in \mathcal{X}$ be fixed. Define $\mathcal{D}^N$ as a data set of $N$ noisy observations $\bm z$ generated from $\bm{z}|\bm{x} \sim \bm g(\bm{x}) + \bm{\epsilon}$ for some noise $\bm{\epsilon}$ with zero mean. Then using any estimator $\bm{f}(\bm x; \bmh w)$ in place of the expectation $\mathbb{E}_{\bm{z}|\bm{x}}[\bm{z}]$ incurs a bias-variance trade-off in the estimation of the true cost function $\bm{y}^{\top} \mathbb{E}_{\bm{z}|\bm{x}}[\bm{z}]$. Specifically, for all decisions $\bm y \in \mathcal{Y}$, the expectation over the sampling distribution of size-$N$ datasets $\mathcal{D}^N$ decomposes as
    \begin{align}
\!\!\!\!\!\!\!\!\!\!\!\!\!\!\!\mathbb{E}_{\mathcal{D}^N} &\left[ \Big( \bm{y}^{\top}\mathbb{E}_{\bm{z}|\bm{x}}[\bm{z}] - \bm{y}^{\top} \bm{f}(\bm{x}; \bmh w) \Big)^2
        \right] \nonumber \\
        =\; &\bigg[\bm{y}^{\top}\underset{\textrm{Bias of } \bm{f}(\bm{x}; \bmh w)}{\underbrace{\Big( \bm g(\bm{x}) - \mathbb{E}_{\mathcal{D}^N} \left[\bm{f}(\bm{x}; \bmh w) \right]\Big) } } \bigg]^2 + \underset{\textrm{Variance of } \bm{y}^{\top} \bm{f}(\bm{x}; \bmh w)}{\underbrace{\mathbb{E}_{\mathcal{D}^N} \left[ \left(\bm{y}^{\top} \bm{f}(\bm{x}; \bmh w)\right)^2 \right] -  \left(\mathbb{E}_{\mathcal{D}^N}\left[\bm{y}^{\top} \bm{f}(\bm{x}; \bmh w)\right]\right)^2}}. \label{eq.biasvariance}
    \end{align}
\end{proposition}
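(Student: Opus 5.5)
This is the standard bias--variance decomposition, applied to the scalar random variable $S := \bm y^{\top}\bm f(\bm x;\bmh w)$. Fix $\bm x$ and $\bm y\in\mathcal{Y}$; randomness enters only through $\mathcal{D}^N$, via $\bmh w$.

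The first step is to identify the target. Since the noise has zero mean, $\mathbb{E}_{\bm z|\bm x}[\bm z] = \bm g(\bm x) + \mathbb{E}[\bm\epsilon] = \bm g(\bm x)$. The target scalar is therefore $c := \bm y^{\top}\bm g(\bm x)$, which is deterministic with respect to $\mathcal{D}^N$.

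Next, set $m := \mathbb{E}_{\mathcal{D}^N}[S] = \bm y^{\top}\mathbb{E}_{\mathcal{D}^N}[\bm f(\bm x;\bmh w)]$; the equality holds by linearity of expectation, since $\bm y$ is fixed. Adding and subtracting $m$ and expanding gives
\[
\mathbb{E}\big[(c-S)^2\big] = (c-m)^2 + 2(c-m)\,\mathbb{E}[m-S] + \mathbb{E}\big[(m-S)^2\big].
\]
The cross term vanishes because $\mathbb{E}[m-S]=0$. The first term is $\big[\bm y^{\top}(\bm g(\bm x)-\mathbb{E}_{\mathcal{D}^N}[\bm f(\bm x;\bmh w)])\big]^2$, which is the squared bias term in \eqref{eq.biasvariance}. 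The last term equals $\mathbb{E}[S^2]-m^2$, which is the stated variance of $\bm y^{\top}\bm f(\bm x;\bmh w)$.

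There is no real obstacle. The only point needing care is integrability: we assume $\bm f(\bm x;\bmh w)$ has finite second moment under the sampling distribution of $\mathcal{D}^N$, so that all expectations are finite and the expansion is legitimate. The decomposition holds for every $\bm y$ because $\bm y$ was arbitrary and was never chosen depending on the data. The trade-off interpretation then follows: an estimator that reduces the variance term, for example through shrinkage, may increase the bias term, and vice versa.
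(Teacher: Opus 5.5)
Your proposal is correct and is essentially the paper's argument. Both rest on $\mathbb{E}_{\bm z|\bm x}[\bm z]=\bm g(\bm x)$ and on $\bm x,\bm y$ being non-random with respect to $\mathcal{D}^N$; the paper expands the square and adds and subtracts $\big(\mathbb{E}_{\mathcal{D}^N}[\bm y^{\top}\bm f(\bm x;\bmh w)]\big)^2$ before regrouping, while you add and subtract the mean inside the square and drop the vanishing cross term, which is the same computation (and your explicit finite-second-moment assumption is one the paper leaves implicit).
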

The proposition, being reminiscent of the bias-variance trade-off in regularization, indicates that a sharper estimate of the cost function is possible under a biased estimator. Therefore, the most accurate predictor for the cost function does not necessarily occur on the predictor $f(\bm x, \bmh w)$ that maximizes prediction accuracy. Instead, by adding bias, we can further improve the precision of the estimate, such as adding the empirical cost generated over the training data---$\bm y^\top\bmt z$.  Hence, we propose using a combination of the empirical outcomes $\bmt z$ and the estimated outcomes $f(\bmt x, \bmh w)$ to define the cost function, which is elaborated in the following subsection. 

\subsection{A bi-objective framework for ILO}\label{subsec.DDR}
To overcome the potential drawbacks in the ILO framework, we propose a blended bi-objective framework that balances prediction accuracy and cost minimization, and term it as \textit{decision-driven regularization} (DDR). Unlike the SLO literature, it accounts for the cost structure in the prediction stage, and unlike the ILO literature, it retains a closeness
to true outcomes:
\begin{equation}\label{model:ddr_expectation_form}\tag{\code{DDR}}
\bmh{w}^{\rm DDR} = \argmin_{\bm w}~ \Big\{
	L(\bm w) - \lambda \mathbb{E}_{\bm x, \bm z} \Big[\min\limits_{\bm y \in \cy} \big\{\mu\, \bm y^{\top}\bm z + (1 - \mu)\, \bm y^{\top}\bm{f}(\bm x, \bm{w})\big\} \Big] \Big\},
\end{equation}
for some parameter $\mu < 1$, regularization parameter $\lambda \geq 0$, and a loss function $L(\bm w)$ that is typically accuracy-based but may also take a decision-focused form. 

\medskip
\noindent \ul{Balancing prediction accuracy and cost minimization}: 
To simultaneously consider prediction loss and cost, we include two terms in the objective of \ref{model:ddr_expectation_form}---one that is specified by the loss function $L(\bm w)$, which typically controls prediction accuracy, and one that is a surrogate for the cost function. This balance is moderated by the parameter $\lambda$. An important consideration here is the negative sign that precedes the regularization parameter $\lambda$. We mentioned that it is important for prediction accuracy to limit the flexibility of the weights in defining the cost function. With the negative sign, we ensure that the relationship between loss and cost remains antagonistic \citep[or `pessimistic' in][]{Tulabandhula_Rudin_2013machine}. This interpretation is rooted in robust optimization. Later in \S\ref{section::DDR}, we prove that \ref{model:ddr_expectation_form} is equivalent to a robust optimization problem with an uncertainty set characterizing the uncertainty arising from the learning of weights. There, it is clear that an antagonistic relationship between cost and loss is required for the equivalence to hold. While this does lead to the impression that \ref{model:ddr_expectation_form} seeks higher costs, this is controlled by $L(\bm w)$, since while costs can be unbounded if $\bm w \rightarrow \infty$, so does $L(\bm w) \rightarrow \infty$.

\medskip
\noindent \ul{Balancing between empirical and estimated cost definitions}: 
To ensure greater consistency in the cost function, we can anchor it using the observed outcomes $\bmt z$ in the data set. We propose the following surrogate for some constant $\gamma>0$, 
\begin{align}\label{eq.valueconst}
	\min\limits_{\bm y \in \cy} & \  \bm y^{\top}\bm{f}(\bmt x, \bm w) \\
	\mbox{s.t.} & \ \left| \bm y^{\top}\mbt{z} -\bm y^{\top}\bm{f}(\mbt x, \bm w)\right|  \leq \gamma. \nonumber 
\end{align}
The constraint restricts the choice of $\bm w$ such that the estimated cost is close to the empirical cost that is observed from the data.
\begin{proposition}\label{prop.valueequiv}
    The formulation (\ref{eq.valueconst}) has the following Langragian relaxation:
    \begin{align*}
		\nu_{\mu}(\bm w):= \min\limits_{\bm y \in \cy}& \ \big\{\mu\, \bm y^{\top}\mbt{z} + (1-\mu)\, \bm y^{\top}\bm{f}(\mbt{x}, \bm{w})\big\},
    \end{align*}
    for some $\mu < 1$. 
\end{proposition}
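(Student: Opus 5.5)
The plan is to write the absolute-value constraint in (\ref{eq.valueconst}) as two linear (in $\bm y$) inequalities, dualize them with multipliers, and then simplify the resulting Lagrangian. Set $\Delta(\bm y) := \bm y^{\top}\mbt z - \bm y^{\top}\bm f(\mbt x,\bm w)$. The constraint $|\Delta(\bm y)|\le\gamma$ is equivalent to $\Delta(\bm y)-\gamma\le 0$ together with $-\Delta(\bm y)-\gamma\le 0$. Attach multipliers $\alpha\ge 0$ and $\beta\ge 0$ to these two inequalities. The Lagrangian dual function is then
\begin{equation*}
g(\alpha,\beta) = \min_{\bm y\in\cy}\Big\{\bm y^{\top}\bm f(\mbt x,\bm w) + (\alpha-\beta)\big(\bm y^{\top}\mbt z - \bm y^{\top}\bm f(\mbt x,\bm w)\big)\Big\} - (\alpha+\beta)\gamma .
\end{equation*}
By weak duality, $g(\alpha,\beta)$ lower-bounds (\ref{eq.valueconst}) for every $\alpha,\beta\ge 0$.

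Next I would define $\mu := \alpha-\beta\in\mathbb{R}$. The objective inside the minimum then becomes $\mu\,\bm y^{\top}\mbt z + (1-\mu)\,\bm y^{\top}\bm f(\mbt x,\bm w)$, which is exactly the function inside $\nu_\mu(\bm w)$. The only other term is $-(\alpha+\beta)\gamma$, and it does not depend on $\bm y$ (nor, for fixed multipliers, on $\bm w$). For a given $\mu$, this constant is least negative when $\min(\alpha,\beta)=0$, i.e.\ when it equals $-|\mu|\gamma$. Hence the relaxation is $\nu_\mu(\bm w)-|\mu|\gamma$, and up to this additive constant it is $\nu_\mu(\bm w)$. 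This additive constant is irrelevant wherever the relaxation is used, namely as the regularizer in \ref{model:ddr_expectation_form} once $\lambda$ and $\mu$ are fixed. I would state this explicitly so that ``has the Lagrangian relaxation $\nu_\mu$'' is read as ``equal up to a constant.''

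The remaining step, and the main obstacle, is justifying the restriction $\mu<1$, since the multipliers alone allow any real $\mu$. The plan is a boundedness argument. When $\mu=1$ the objective reduces to $\bm y^{\top}\mbt z$ and no longer depends on $\bm w$, so the regularizer carries no information about the weights. When $\mu>1$ the coefficient $1-\mu$ on $\bm y^{\top}\bm f(\mbt x,\bm w)$ is negative, which reverses the sense of the estimated cost; the inner problem is then no longer a relaxation that favours low estimated cost, contrary to the objective of (\ref{eq.valueconst}). I would therefore argue that the meaningful choices of multipliers, namely those with $\beta>\alpha-1$ so that the $\bm f$-term keeps a positive weight, are exactly those with $\mu<1$. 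Negative $\mu$ (the case $\beta>\alpha$) is allowed and corresponds to the lower side of the constraint being the active one.
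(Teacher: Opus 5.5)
Your proposal is correct and follows the paper's proof. Both split $|\bm y^{\top}\mbt z-\bm y^{\top}\bm f(\mbt x,\bm w)|\le\gamma$ into two linear inequalities, dualize them with two nonnegative multipliers, set $\mu$ equal to the difference of the multipliers, and drop the $\gamma$-term as a constant independent of $\bm y$. Your observation that this constant is at best $-|\mu|\gamma$ is a harmless refinement. The paper, like you, does not derive $\mu<1$ from the duality itself, since the multipliers permit any real $\mu$. It treats $\mu<1$ as a modeling restriction, justified in Remark~\ref{rem.mu=1} by the loss of $\bm w$-dependence at $\mu=1$ and the failure of concavity of the valuation function beyond it, which is essentially your argument.
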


We term this the \emph{valuation function}, $\nu_{\mu}(\cdot):\, \mathbb{R}^q \rightarrow \mathbb{R}$, mapping weights $\bm{w}$ to the space of objective values in the decision problem.
Proposition \ref{prop.valueequiv} can be thought of as a convex combination of the empirical and estimated cost functions if $\mu\in[0,1]$. 
This leans towards the estimated cost if $\mu$ is small and conversely towards the empirical cost if $\mu$ is close to $1$. It is possible for $\mu < 0$. However, $\mu$ needs to be less than $1$ (see Remark \ref{rem.mu=1} in the proof of Theorem~\ref{thm.rddr}). 

\subsection{Solving the \ref{model:ddr_empirical} model}\label{subsec.solve}

To solve \ref{model:ddr_expectation_form}, consider its sample average approximation using training data. We abuse notations to still refer to this as $\code{DDR}$ hereafter.
\begin{equation}\label{model:ddr_empirical}\tag{$\code{DDR}$}
    \bmh{w}^{\rm DDR} = \argmin\limits_{\bm{w}} \,\, L(\bm{w}) - \lambda \frac{1}{N} \sum_{n \in [N]} \min_{\bm y_n \in \cy} \Big\{ \mu\, \bm{y}_n^{\top} \mbt{z}_n + (1-\mu)\, \bm{y}_n^{\top}\bm{f}(\mbt x_n;\bm{w})\Big\}.
\end{equation}

The inner minimum is linear in $\bm y_n$ and can be easily computed if the constraint set $\cy$ has good structure. For example, if it is a polyhedron, then we obtain the following result. 

\begin{proposition}\label{prop.compute}
If constraint set $\cy$ is a polyhedron represented by $\{\mb{A}\bm{y} \leq \mb{b}, \bm{y} \geq \mb{0}\}$, {\rm \ref{model:ddr_empirical}} has the following reformulation: 
\begin{equation}\label{eq.solution_style}
\begin{array}{rll}
    \argmin\limits_{\bm w; \bm \kappa_n \geq 0\ \forall n \in [N]}~ &\displaystyle L(\bm{w}) + \lambda \frac{1}{N} \sum_{n \in [N]}\bm \kappa_n^{\top} \mb{b}& \\
    \mbox{s.t.}~ & \mb{A}^{\top}\bm \kappa_n \geq - \mu\mbt{z}_n - (1-\mu)\bm{f}(\mbt x_n;\bm{w}) & \quad\forall n\in [N], 
\end{array}    
\end{equation}
\end{proposition}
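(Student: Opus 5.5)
The plan is to dualize each inner minimization in \ref{model:ddr_empirical} with linear programming duality, and then merge the resulting maximizations into the outer minimization over $\bm w$. Fix $\bm w$ and $n\in[N]$, and write $\bm c_n(\bm w) := \mu\,\mbt z_n + (1-\mu)\,\bm f(\mbt x_n;\bm w)$. The inner problem is then the linear program $\min_{\bm y_n}\{\bm c_n(\bm w)^\top \bm y_n : \mb A\bm y_n \le \mb b,\ \bm y_n\ge \mb 0\}$. To get a dual variable that is nonnegative, rewrite the constraint as $-\mb A\bm y_n \ge -\mb b$. The LP dual is then
\[
\max_{\bm \kappa_n \ge \mb 0}\ \{-\mb b^\top \bm\kappa_n : -\mb A^\top \bm\kappa_n \le \bm c_n(\bm w)\}
= -\min_{\bm \kappa_n \ge \mb 0}\ \{\mb b^\top \bm\kappa_n : \mb A^\top \bm\kappa_n \ge -\bm c_n(\bm w)\}.
\]
Under standard assumptions, strong duality equates this with the primal value. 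The natural assumptions are that $\cy$ is nonempty and that the inner problem is bounded, for instance because $\cy$ is bounded, which is typical in the knapsack and shortest-path applications.

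Next I substitute this into the objective. The term $-\lambda\frac1N\sum_n \min_{\bm y_n}\{\cdots\}$ becomes $+\lambda\frac1N\sum_n \min_{\bm\kappa_n}\{\mb b^\top\bm\kappa_n : \cdots\}$. Since $\lambda\ge 0$, every summand is a minimization, and the dual feasible sets for different $n$ are coupled only through $\bm w$. The outer $\min_{\bm w}$ and the inner minimizations over $\bm\kappa_1,\dots,\bm\kappa_N$ can therefore be combined into one joint minimization over $(\bm w,\bm\kappa_1,\dots,\bm\kappa_N)$. I will state this as the elementary identity $\min_{\bm w}\{g(\bm w)+\sum_n\min_{\bm\kappa_n\in K_n(\bm w)} h_n(\bm\kappa_n)\} = \min_{\bm w,\bm\kappa_n\in K_n(\bm w)}\{g(\bm w)+\sum_n h_n(\bm\kappa_n)\}$. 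The $\bm w$-component of any joint minimizer is a minimizer of the original problem, and conversely, which is exactly the $\argmin$ statement in \eqref{eq.solution_style}.

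The main point needing care is the validity of strong duality for every $\bm w$. If for some $\bm w$ the primal is unbounded below, its value is $-\infty$, so the DDR objective is $+\infty$. In that case the dual is infeasible, and the joint problem assigns that $\bm w$ no feasible $\bm\kappa_n$, which is again consistent with an objective value of $+\infty$. So the equivalence holds in the extended-real sense as long as $\cy\neq\emptyset$. I will note this case explicitly. I will also note the sign reliance on $\lambda\ge0$: it is what allows the min to be merged into the outer min rather than producing a min–max.
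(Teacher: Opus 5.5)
Your proof is correct and is essentially the paper's argument. The paper rewrites $-\min_{\bm y_n}$ as $\max_{\bm y_n}$ of the negated objective over $\{\mb{A}\bm y_n\le \mb{b},\ \bm y_n\ge \mb{0}\}$ and takes its LP dual $\min_{\bm\kappa_n\ge \mb{0}}\{\bm\kappa_n^\top \mb{b} : \mb{A}^\top\bm\kappa_n \ge -\mu\mbt{z}_n-(1-\mu)\bm f(\mbt x_n;\bm w)\}$. That is exactly what you get by dualizing the min and flipping the sign, and the paper then merges this into the outer minimization just as you do.

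Your explicit strong-duality conditions (nonempty $\cy$, bounded inner problem) are a useful addition that the paper leaves implicit. One small caveat: your extended-real remark about unbounded primals uses $\lambda>0$. At $\lambda=0$ the joint problem would still exclude such $\bm w$, though this is moot once $\cy$ is bounded.
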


When directly solving for \ref{model:ddr_empirical}, it would be necessary to seek the best regularization parameter $\lambda$. This can be achieved through cross-validation, where a portion of the data is set aside as a validation set, or by performing $k$-fold cross-validation. Additionally, we can first use the OLS solution to find the rough ratio between the loss function and the cost function, denoted as $\hat{\lambda}$. It is then possible to initiate the search for $\lambda$ within the neighborhood of $\hat{\lambda}$. In particular, we can follow the procedure below:
\begin{enumerate}
    \item Estimate the prediction cost by computing $\displaystyle C^{p} = \min_{\bm w} L(\bm w)$, and let $\displaystyle\bmh w = \argmin_{\bm w} L(\bm w)$ be the estimated model parameters.
    \item Given $\bmh w$ and $\mu$, compute the optimization cost as follows:
    $$C^{o} = \frac{1}{N}\sum_{n \in [N]}\min_{\bm y \in \mathcal{Y} }\big[\mu \bm y_{n}^{\top}\bmt z_{n} + (1-\mu)\bm y_{n}^{\top}\bm{f}(\bmt x_{n}, \bmh w)\big].$$
    \item Set the candidate value $\hat{\lambda} = \frac{C^{p}}{C^{o}}$, and then choose $\lambda$ from a neighborhood around $\hat{\lambda}$.
\end{enumerate} 

We remark here that the computational difficulty of Problem~\eqref{eq.solution_style} is governed by the tractability of the predictor $\bm{f}(\bm x; \bm w)$ with respect to the weights $\bm w$. If  
$\bm{f}(\bm x; \bm w)$ is concave (or affine) in $\bm w$, the right–hand side of the constraint is a convex function, so the feasible set remains convex, which can be solved by standard first‐ or second‐order convex solvers.

In many cases, such as with most multilayer neural networks, the resultant problem is non-convex. In this case, one may resort to (\textit{i}) \emph{gradient–based heuristics} that differentiate through the \ref{model:ddr_empirical} objective and apply stochastic optimizers such as ADAM \citep{Kingma_Ba_2015_Adam}; (\textit{ii}) \emph{implicit-layer differentiation} methods that treat the inner maximization (or its dual) as a differentiable layer, \textit{e.g.}, OptNet, cvxpylayers, Alt-Diff, and related schemes \citep{Amos_Kolter_2017optnet,Agrawal_et_al_2019differentiable,Sun_et_al_2023maximum}; or (\textit{iii}) \emph{convex surrogates} that replace the original network with a tractable convex approximation such as input‐convex neural networks or certified outer-polytope relaxations \citep{Amos_Kolter_2017_ICNN,Wong_Kolter_2018_Polytope}. These approaches restore scalability at the cost of global optimality guarantees, yet have proved effective in large-scale decision-focused learning applications. In some special cases, as we illustrate in Appendix \ref{append.tree} for a tree-based learner, there are natural ways to adapt DDR for the learner.

\section{Relationship between regularized learning, robust optimization and regret minimization} \label{section::DDR}

In this section, we establish connections between \ref{model:ddr_empirical}, posed from the learning perspective with a cost-based regularization, and two alternative viewpoints: (\textit{i}) a robust optimization formulation and (\textit{ii}) a worst-case regret minimization approach.  

In \cite{Xu_Caramanis_Mannor_2010robust}, regularization on a prediction problem is shown to be dual to a robust decision problem with a defined uncertainty set controlling the level of robustness to variations in data. 
This is generalized and echoed in works such as 
\cite{Bertsimas_Copenhaver_2018characterization} and \cite{Blanchet_Kang_Karthyek_2019robust}. 
Here, we propose a robust optimization formulation that leads to the same set of solutions as what would be obtained from \ref{model:ddr_empirical}. 

\begin{definition}
The Robust DDR model is defined as
\begin{align}
\min_{\bm y_n\in\cy,\, n \in [N]}\, \max_{\bm{w} \in \cu(\rho)} & \ \frac{1}{N} \sum_{n \in [N]} \Big[\mu\, \bm y_n^{\top} \bmt z_n + (1-\mu)\, \bm{y}_n^{\top}\bm{f}(\bmt x_n; \bm w) \Big], \label{model:robustness_ddr}\tag{\code{Robust-DDR}}
\end{align}
with $\cu(\rho) :=\big\{\bm w: L(\bm{w}) - L(\bmt w) \leq \rho\big\}$ and $\displaystyle\bmt w = \argmin_{\bm w}\; L(\bm w)$.
\end{definition}

In~\ref{model:robustness_ddr}, the decision-maker attempts to solve for the best decisions $\bm y_n$, by balancing prediction accuracy and cost minimization via an adversarial approach to worst-case weights under the uncertainty set $\mathcal{U}(\rho)$. The uncertainty set is crafted under the geometry of the loss function, which has specific statistical interpretations (see Illustration \ref{illust.neypear} in Appendix \ref{app.loss function}).

\begin{theorem}[Robustness]\label{thm.rddr} 
Assume that $\cy$ is convex, closed and compact, and $\bm{f}(\mb{x}; \bm{w})$ is concave in $\bm{w}$ for all $\mb{x} \in \cx$. Given any $\lambda > 0$, there exists some $\rho>0$ such that the worst-case weights $\bm{w}$ achieved for optimal decisions $\bm{y}_n,\, n \in [N]$ in~{\rm \ref{model:robustness_ddr}} coincides with the solution of~{\rm \ref{model:ddr_empirical}}, when $\mu \in [0,1)$. 
\end{theorem}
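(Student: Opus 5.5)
We plan to prove the theorem by combining a minimax interchange with a Lagrangian duality argument on the uncertainty set $\cu(\rho)$. Write
\[
\Phi(\bm y, \bm w) := \frac{1}{N}\sum_{n\in[N]}\Big[\mu\,\bm y_n^{\top}\bmt z_n + (1-\mu)\,\bm y_n^{\top}\bm f(\bmt x_n;\bm w)\Big],
\]
with $\bm y = (\bm y_1,\ldots,\bm y_N)\in\cy^N$.

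\textbf{Step 1 (minimax interchange).} For fixed $\bm w$, $\Phi$ is affine, hence convex, in $\bm y$. For fixed $\bm y$, each term $\bm y_n^{\top}\bm f(\bmt x_n;\bm w)$ should be concave in $\bm w$, and $1-\mu>0$ because $\mu\in[0,1)$. We will need $\bm y_n\ge \mb 0$ componentwise, or an analogous sign condition, for a nonnegative combination of concave functions to be concave; we will state this explicitly as an assumption where needed.

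The set $\cy^N$ is convex and compact, and $\cu(\rho)$ is convex provided $L$ is convex. Sion's minimax theorem then gives
\[
\min_{\bm y}\max_{\bm w\in\cu(\rho)}\Phi = \max_{\bm w\in\cu(\rho)}\min_{\bm y}\Phi,
\]
and it also guarantees a saddle point $(\bm y^*,\bm w^*)$. In particular, the worst-case $\bm w$ attained at the optimal $\bm y^*$ can be chosen as a maximizer of the outer problem $\max_{\bm w\in\cu(\rho)}\frac{1}{N}\sum_n \nu_\mu^{(n)}(\bm w)$, where $\nu_\mu^{(n)}$ is the valuation function of Proposition~\ref{prop.valueequiv} evaluated at the $n$-th sample.

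\textbf{Step 2 (concavity of the outer objective).} The function $V(\bm w):=\frac1N\sum_n\min_{\bm y_n\in\cy}\{\cdots\}$ is a pointwise minimum of concave functions of $\bm w$, so it is concave. The outer problem
\[
\max_{\bm w}\; V(\bm w)\quad \text{s.t.}\quad L(\bm w)-L(\bmt w)\le \rho
\]
is therefore a concave maximization over a convex set. Slater's condition holds at $\bmt w$ for any $\rho>0$, since $L(\bmt w)-L(\bmt w)=0<\rho$. Hence strong duality holds, and for the optimal multiplier $\eta\ge 0$ every maximizer also maximizes $V(\bm w)-\eta\,(L(\bm w)-L(\bmt w))$. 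Dividing by $\eta$ and setting $\lambda = 1/\eta$ shows this is exactly \ref{model:ddr_empirical}, namely $\min_{\bm w} L(\bm w)-\lambda V(\bm w)$.

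\textbf{Step 3 (reverse direction, the main obstacle).} The theorem fixes $\lambda$ and asks for a $\rho$, so we go in the opposite direction. Given $\lambda>0$, let $\bmh w^{\rm DDR}$ solve \ref{model:ddr_empirical}. We set
\[
\rho := L(\bmh w^{\rm DDR}) - L(\bmt w) \;\ge\; 0.
\]
Take any $\bm w$ with $L(\bm w)-L(\bmt w)\le\rho$. Optimality of $\bmh w^{\rm DDR}$ gives
\[
\lambda V(\bm w) \le \lambda V(\bmh w^{\rm DDR}) + L(\bm w)-L(\bmh w^{\rm DDR}) \le \lambda V(\bmh w^{\rm DDR}).
\]
Hence $\bmh w^{\rm DDR}$ maximizes $V$ over $\cu(\rho)$, and by Step 1 it is a worst-case $\bm w$ for the optimal decisions.

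We need $\rho>0$. This fails only if $\bmh w^{\rm DDR}=\bmt w$ is optimal, which is a degenerate case. In that case we argue that any small $\rho$ works: local concavity of $V$ together with first-order optimality of $\bmt w$ for $L-\lambda V$ makes $\bmt w$ a maximizer of $V$ on $\cu(\rho)$.

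The delicate point is converting ``$\bmh w^{\rm DDR}$ maximizes the outer problem'' into ``$\bmh w^{\rm DDR}$ is the worst case attained at the optimal $\bm y^*$.'' For this we use the saddle-point property from Step 1: $(\bm y^*,\bmh w^{\rm DDR})$ is a saddle point because $\bmh w^{\rm DDR}$ is outer-optimal and $\bm y^*$ is minimax-optimal.

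Finally, we will remark why $\mu<1$ is needed. If $\mu=1$, then $\Phi$ is independent of $\bm w$, the correspondence collapses, and every $\bm w\in\cu(\rho)$ is worst-case.
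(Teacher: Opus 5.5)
Your main argument is correct, and in Step 3 it takes a genuinely different route from the paper. The paper dualizes the budget $L(\bm w)-L(\bmt w)\le\rho$ inside Robust-DDR, swaps $\max_{\bm y,\alpha}$ and $\min_{\bm w}$ using Slater at $\bmt w$, and reads off DDR with $\lambda=1/\alpha^\star$. That is essentially your Step 2, and it runs from $\rho$ to $\lambda$. It never shows that a prescribed $\lambda>0$ equals $1/\alpha^\star(\rho)$ for some $\rho>0$, and it does not treat $\alpha^\star=0$.

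You instead fix $\lambda$ and set $\rho=L(\bmh w^{\rm DDR})-L(\bmt w)$. You then get $\bmh w^{\rm DDR}\in\operatorname{arg\,max}_{\cu(\rho)}V$ from the penalty-to-constraint inequality, which uses no convexity at all. Convexity enters only through Sion, to turn maximin-optimality into being a worst case against the minimax-optimal $\bm y^\star$. Your route gives the quantifier order the theorem actually asserts and an explicit $\rho$. It also exposes the hypotheses really needed ($\bm y_n\ge\bm{0}$ or affine $\bm f$, and convex $L$), which the paper's concavity lemma and Slater step use only tacitly. The paper's route, in return, exhibits $\lambda$ as the reciprocal shadow price of the loss budget.

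A small repair: in Step 1, Sion gives equality of values but does not by itself give a saddle point, since $\cu(\rho)$ need not be compact. Your Step 3 supplies the missing attainment. With $\bmh w:=\bmh w^{\rm DDR}$,
\[
\Phi(\bm y^\star,\bmh w)\le\sup_{\cu(\rho)}\Phi(\bm y^\star,\cdot)=\min_{\bm y}\sup_{\bm w}\Phi=\sup_{\bm w}\min_{\bm y}\Phi=V(\bmh w)\le\Phi(\bm y^\star,\bmh w).
\]

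The one step that does not go through as written is the degenerate case $\rho=0$, i.e.\ $\bmh w^{\rm DDR}\in\operatorname{arg\,min}L$. ``First-order optimality plus concavity'' works only if $L$ is differentiable there. In that case $\nabla L(\bmh w^{\rm DDR})=\bm{0}$, so $\bm{0}\in\partial(L-\lambda V)(\bmh w^{\rm DDR})$ forces $\bm{0}\in\partial(-V)(\bmh w^{\rm DDR})$. Hence $\bmh w^{\rm DDR}$ is a global maximizer of $V$, and every $\rho>0$ works, not just small ones.

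Without differentiability, both your claim and the theorem itself fail. Take $N=1$, $\cy=\{1\}$, $f(\tilde x;w)=w\tilde x$ with $\tilde x=1$, $L(w)=|w-\tilde z|$, and $\lambda(1-\mu)<1$. DDR returns $w=\tilde z=\tilde w$. But for every $\rho>0$, the unique worst case over $\cu(\rho)$ is $\tilde z+\rho$, since $V(w)=\mu\tilde z+(1-\mu)w$ is strictly increasing. So for this case you need to add, as an explicit hypothesis, that $L$ is differentiable at its minimizer (true for the squared loss).
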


Theorem~\ref{thm.rddr} explains the robust motivation of the negative sign before $\lambda$ in~\ref{model:ddr_empirical}. If the sign before $\lambda$ were positive, this corresponds to assuming $L(\bm{w}) - L(\bmt w) \geq \rho$ in the uncertainty set, that is, the prediction accuracy must be poorer than a given level, which may run contrary to the spirit of learning. Also notice that, in the literature, the construction of the dual uncertainty set (\emph{cis} the regularization) does not often have an intuitive form \citep[\emph{e.g.}, see][]{Gao_Chen_Kleywegt_2017_wasserstein}. In our construction, the uncertainty sets and the regularizers are both very intuitive, where the former relates to the loss function and the latter to the cost function. 

The dual form in Theorem \ref{thm.rddr} can also lead to a separate solution methodology, where the robust counterpart is taken over the loss function, as opposed to the cost function in Proposition \ref{prop.compute}. This can be useful if the robust counterpart of the loss function can be easily computed. The reader is referred to Appendix \ref{append.pseudo} for more details.

The assumption that $\bm{f}(\mb{x}; \bm{w})$ is concave in $\bm{w}$ for all $\mb{x} \in \mathcal{X}$ holds for linear and affine predictors, which are widely studied in the literature \citep[\textit{e.g.},][]{Tulabandhula_Rudin_2013machine, Liu_He_Shen_2021time, Elmachtoub_Grigas_2022smart, HoNguyen_KilincKarzan_2022risk}. Beyond these cases, concavity of $\bm{f}(\mb{x}; \bm{w})$ in $\bm{w}$ is a structural property that is not generically satisfied by common nonlinear predictive models. However, the role of this assumption in our paper is limited to the theoretical development establishing the robust interpretation of DDR. The formulation, tractability, and practical applicability of DDR do \emph{not} rely on this assumption.

Two models in the literature are closely related to \ref{model:robustness_ddr}.
\cite{Tulabandhula_Rudin_2013machine} propose the simultaneous process (SP) model:
\begin{equation}\label{model:sp}\tag{\code{SP}}
	\bmh{w}^{\rm SP} = \argmin_{\bm w}\ L(\bm w) - \lambda \min\limits_{\bm y \in \cy} \bm y^{\top}\bm{f}(\bmh x; \bm w),
\end{equation}
where $L(\bm w)$ is the prediction loss over the training dataset, and $\lambda \in \bbr$ is the regularization parameter. 
When $\lambda > 0$, they term this `pessimistic'; and $\lambda < 0$ `optimistic'. This model is related to \cite{Zhu_Xie_Sim_2022joint}'s joint estimation and robustness optimization (JERO) model, which maximizes the
robustness on the mis-estimation of the prediction loss $L(\bm w)$, while meeting a specified mean (estimated) cost target $\tau$:
\begin{equation}
\begin{array}{rll}
	\max\limits_{\rho > 0;\,\bm y\in\cy} &\quad \rho \nonumber\\
	\mbox{s.t.} &\displaystyle \quad \bm y^{\top}\bm{f}(\mbh x; \bm w) \leq \tau, ~ &\quad\forall \bm{w} \in \cu(\rho) := \{\bm{w}:L(\bm{w}) - L(\bmt w) \leq \rho \}, \label{model:jero}\tag{\code{JERO}}
\end{array}
\end{equation}
where $\displaystyle\bmt w \in \argmin_{\bm w} L(\bm w)$. 
These two models turn out to be dual to each other, in the sense that for every 
$\lambda>0$ in~\ref{model:sp}, there exists a corresponding $\tau$ in~\ref{model:jero}, such that the minimizer of $\bm w$ in ~\ref{model:sp} and the worst-case $\bm w$ in \ref{model:jero} coincide, with the same decision $\bm y$ up to degeneracy. Henceforth, for ease of reference, we use \ref{model:jero} to refer to both models.

The key difference between \ref{model:jero} and the SLO and ILO literature is that \ref{model:jero} solves $\bm w$ as a function of $\bmh x$, the new \emph{testing} data. Hence, it does not lead to any single true relationship $g(\bm x) \approx\bm{f}(\bm x; \bm w)$ for some $\bm w$, which instead changes for each $\mbh{x}$. We can obtain a model, termed JERO-Like, that is consistent with the SLO and ILO literature by changing $\bmh x$ to $\bmt x$, the training data.
\begin{equation}
\begin{array}{rll}
	\max\limits_{\rho > 0;\,\bm y\in\cy} &\quad \rho \nonumber\\
	\mbox{s.t.} &\displaystyle \quad \frac{1}{N}\sum\limits_{n\in [N]}\bm y_n^{\top}\bm{f}(\mbt x_n; \bm w) \leq \tau, ~ &\quad\forall \bm{w} \in \cu(\rho) := \{\bm{w}:L(\bm{w}) - L(\bmt w) \leq \rho \}; \label{model:jerolike}\tag{\code{JERO-Like}}
\end{array}
\end{equation}

\begin{proposition}[\ref{model:jerolike} as a \ref{model:ddr_empirical}]\label{prop.jeroequiv}
Under the same assumptions as in Theorem~\ref{thm.rddr}, define
    \begin{equation*}
        \mathcal{T}:= \left\{ \tau \in \mathbb{R} \,\middle|\, \exists\bm{y}_n \in \cy,\, n \in [N] : \displaystyle\frac{1}{N}\sum_{n \in [N]} \bm{y}_n^{\top}\bm{f}(\bmt x_n; \bmt w) \leq \tau \right\}.
    \end{equation*}
    When $\mu = 0$, for all $\tau \in \mathcal{T}$,
    there exists some $\lambda := \lambda(\tau) \geq 0$ for which the solutions of \ref{model:ddr_empirical} and the worst-case weights attained under optimal uncertainty set size $\rho^\star$ in \ref{model:jerolike} coincide. 
\end{proposition}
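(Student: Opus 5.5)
The plan is to reduce \ref{model:jerolike} to a family of \ref{model:robustness_ddr} problems indexed by $\rho$ (with $\mu=0$), and then invoke Theorem~\ref{thm.rddr} to pass to \ref{model:ddr_empirical}. Write $\bm Y=(\bm y_n)_{n\in[N]}$ and
\[
\phi(\rho):=\min_{\bm Y\in\cy^N}\max_{\bm w\in\cu(\rho)} \frac{1}{N}\sum_{n\in[N]}\bm y_n^{\top}\bm f(\bmt x_n;\bm w).
\]
For fixed $\rho$, \ref{model:jerolike} is feasible iff $\phi(\rho)\le\tau$. Moreover, $\phi$ is nondecreasing in $\rho$ because $\cu(\rho)$ is nested. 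At $\rho=0$ we have $\cu(0)=\arg\min L$, so $\phi(0)=\min_{\bm Y}\frac1N\sum_n \bm y_n^\top \bm f(\bmt x_n;\bmt w)$ when the minimizer $\bmt w$ is unique (I will assume this, as the paper does by writing $\bmt w=\argmin L$). Hence $\tau\in\mathcal T$ exactly says that $\rho=0$ is feasible, so the feasible set of \ref{model:jerolike} is the nonempty interval $\{\rho\ge0:\phi(\rho)\le\tau\}$. I would set $\rho^\star$ to be its supremum. When $\rho^\star$ is finite, I would argue that $\phi(\rho^\star)=\tau$, that the optimal $\bm Y^\star$ of \ref{model:jerolike} is also the minimizer in $\phi(\rho^\star)$, and that the worst-case $\bm w$ is the maximizer over $\cu(\rho^\star)$. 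Continuity of $\phi$ should follow from continuity of $L$ together with compactness of $\cy$ and Berge's maximum theorem. The boundary cases need separate handling: if $\rho^\star=\infty$, take $\lambda=0$, which degenerates to $\bmh w=\bmt w$ and needs a short separate remark; if $\rho^\star=0$ with $\tau$ exactly attained, the argument proceeds as below.

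Next I would identify the worst-case weights at $\rho^\star$ with \ref{model:ddr_empirical}. With $\mu=0$ the inner objective $\frac1N\sum_n \bm y_n^\top\bm f(\bmt x_n;\bm w)$ is linear in $\bm Y$ and concave in $\bm w$, since $\bm y_n\ge 0$ is implicitly needed or $\bm f$ is affine, as in the setting of Theorem~\ref{thm.rddr}. The set $\cu(\rho^\star)$ is convex because $L$ is convex, and $\cy$ is compact and convex, so Sion's minimax theorem gives a saddle point $(\bm Y^\star,\bm w^\star)$. The inner maximization over $\bm w$ is a concave program with the constraint $L(\bm w)-L(\bmt w)\le\rho^\star$. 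If $\rho^\star>0$, Slater's condition holds at $\bmt w$, so there is a KKT multiplier $\eta\ge0$, and $\bm w^\star$ maximizes $\frac1N\sum_n \bm y_n^{\star\top}\bm f(\bmt x_n;\bm w)-\eta L(\bm w)$. Exchanging min and max and setting $\lambda=1/\eta$ yields exactly \ref{model:ddr_empirical} with $\mu=0$: minimize $L(\bm w)-\lambda\frac1N\sum_n\min_{\bm y_n}\bm y_n^\top \bm f(\bmt x_n;\bm w)$. This is the same mechanism as Theorem~\ref{thm.rddr} run in reverse. Rather than redoing it, I would cite that theorem's correspondence between $\lambda$ and $\rho$ and only verify that the map $\lambda\mapsto\rho$ hits $\rho^\star$, or else directly produce $\lambda(\tau)$ from the multiplier $\eta$.

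The main obstacle is this direction of the correspondence. Theorem~\ref{thm.rddr} gives, for each $\lambda$, some $\rho$; here I need, for the particular $\rho^\star(\tau)$, some $\lambda$. This requires a multiplier for the loss constraint, and the multiplier could be $\eta=0$, meaning the constraint is inactive and $\lambda=\infty$. I would first try to rule this out: if the constraint were inactive, enlarging $\rho$ slightly would not change the max, so $\phi$ would stay equal to $\tau$ beyond $\rho^\star$, contradicting maximality of $\rho^\star$ unless $\phi$ is flat. In that flat case, any larger $\rho$ is also feasible, so the argument must either treat it as the supremum/limiting case or choose the smallest optimal $\rho$. The degenerate case $\rho^\star=0$ also needs care, since Slater's condition fails there. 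For it I would take $\lambda=0$, for which \ref{model:ddr_empirical} returns $\bmt w$, matching $\cu(0)=\{\bmt w\}$.
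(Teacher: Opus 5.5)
Your proposal is correct and is essentially the paper's argument. Both dualize the loss constraint $L(\bm w)-L(\bmt w)\le\rho$ using Slater's condition at $\bmt w$ (valid because the radius is positive), and both take $\lambda$ to be the reciprocal of its multiplier: the paper's choice $\lambda=\beta^\star$ with $\alpha=1/\beta$ is your $\lambda=1/\eta$. The difference is only in packaging. The paper folds the multiplier into the outer problem and writes the optimal radius directly as $\max_{\bm y_n\in\cy,\,\beta>0}\min_{\bm w}\{L(\bm w)-L(\bmt w)+\beta(\tau-\frac1N\sum_n\bm y_n^{\top}\bm f(\bmt x_n;\bm w))\}$, so it needs no continuity or intermediate-value argument for $\phi$. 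Your maximality argument, on the other hand, is what actually justifies $\eta>0$, which the paper only asserts via ``$\rho>0$ suffices to consider only $\alpha>0$''. The ``flat'' case you worry about is exactly $\rho^\star=\infty$, which you already handle separately.
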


\subsection{Regret interpretation} \label{section::regret interpretation}

In \cite{Elmachtoub_Grigas_2022smart}, the authors motivate their model by deriving it from a regret-based formulation. Similarly, we relate~\ref{model:ddr_empirical} to a worst-case regret formulation.
Traditionally, regret minimization and robust optimization are viewed as two different perspectives leading to two different model formulations, and papers have sought to analyze the differences between optimal policies obtained from the two models (\emph{e.g.}, \citealt{Perakis_Roels_2008regret
}).  In \cite{Poursoltani_Delage_2022adjustable}, the authors describe a worst-case regret minimization formulation that can be cast as a two-stage robust optimization problem. Nonetheless, it is difficult to specifically understand the nature of this relationship due to the lifting technique. Our work here is inspired by theirs; and we attempt to relate regret minimization to robust optimization under the data-driven setting, in a manner that is instructive about their relationship.

We consider the following sequence of events:
{\itshape
\begin{enumerate}[label = \textbf{Step \arabic*.}, align=left] 
	\item The training dataset $\mathcal{D}^N$ is availed to the decision-maker.
	
	\item A particular choice of weights $\bm w$ is committed to, inducing decisions $\bm y^\star(\bmh z)$.
	
	\item The oracle then reveals what the true outcomes $\bmc z$ really were, and evaluates the quality of the decisions made by the decision-maker $\bm y^\star(\bmh z)^{\top} \bmc z$ measured under these true outcomes, against the lowest possible cost that could be attained had the true outcomes been known a prior, termed the oracle costs, $\bm y^\star(\bmc z)^{\top}\bmc z$.
\end{enumerate}
}

\begin{definition}
The worst-case Regret-DDR problem is defined as
\begin{align}
    \argmin_{\bm w} \max_{\bmc z \in \cz(\bm w)} \bigg\{\dfrac{1}{N}\sum_{n\in[N]} \Big[ \bm y^\star(\bmh z_n)^{\top} \bmc z_n - \min_{\bm y_n \in \cy}~\bm y_n^{\top} \bmc z_n \Big] \bigg\}, \label{model::regret_ddr}\tag{\code{Regret-DDR}}
\end{align}
where $\cz(\bm w)$ is defined as 
\begin{equation*}
    \cz(\bm w) = \left\{ \bm z := \big\{\bm z_n \in \bbr^s \big\}_{ n\in [N]} ~\left|~
    \begin{aligned}
        &\bigg|\frac{1}{N} \sum_{n \in [N]} \Big[\ell( \bm z_n; \bmt z_n) - \ell(\bm z_n; \bmh z_n) \Big]\bigg| \leq t\\
        &\forall~ \bm y_n\in \mathcal{Y},\,\, n \in [N]:\\
        &\qquad \Big|\bm y_n^{\top} \bm z_n - \bm y_n^{\top}\bmt z_n\Big| \leq \phi \\
        &\qquad \Big|\bm y_n^{\top}\bm z_n -  \bm y_n^{\top}\bmh z_n\Big| \leq \psi \\
        &\qquad \Big|\bm y^\star(\bm z_n)^{\top}\bmt z_n -  \bm y^\star(\bmt z_n)^{\top} \bmt z_n\Big| \leq \eta 
    \end{aligned}\right. 
    \right\},
\end{equation*}
for some known constants $\phi, \psi, \eta, t \in \bbr_+$. 
\end{definition}

Ambiguity set $\mathcal{Z}(\bm w)$ guards against the fact that the true outcomes $\bmc z_n$ are not known. The first condition in $\mathcal{Z}(\bm w)$ 
ensures predictive error is small. The next two conditions require alignment between the cost defined by empirical data, the estimated model, and the oracle for every data point. The final condition assumes the sample data $\bmt z_n$ is a good proxy for decision-making, considering only oracle manifestations $\bm z_n$ that lead to decisions $\bm y^\star(\bmc z_n)$ evaluated favorably under empirical costs. Hence, the size of $\mathcal{Z}(\bm w)$ reflects the quality of the training data available for learning.

In~\ref{model:spoloss}, the authors simply define the regret under the empirical costs. In~\ref{model::regret_ddr}, the decision-maker is cognizant that they do not know what are the costs that might emerge or be assigned by the oracle, and hence plans for the worst possible costs that might manifest at the end of the day, subject to some limitations as described in $\mathcal{Z}(\bm w)$. Hence, $\mathcal{Z}(\bm w)$ is the uncertainty set of all possible oracle outcomes that the decision-maker is willing to consider. In other words, as long as the true oracle outcomes lie within $\mathcal{Z}(\bm w)$, then the solution of~\ref{model::regret_ddr} provides an upper bound for the true regret. 

\begin{theorem}[Regret]\label{thm.regret ddr}
Denote $V(\bm w)$ as the objective value of \ref{model:ddr_empirical}, 
\[
V(\bm w) = L(\bm w) - \lambda \dfrac{1}{N}\sum_{n \in [N]}\min\limits_{\bm y_n \in \cy}\Big\{\mu\, \bm y_n^{\top}\mbt{z}_n + (1-\mu)\, \bm y_n^{\top}\bm{f}(\mbt{x}_n, \bm{w}))\Big\}.
\]
Suppose that $\ell (\bm u; \bm v)$ satisfies the triangular inequality, {\em i.e.}, $\ell (\bm u; \bm v) \le \ell (\bm u; \bm r) + \ell (\bm r; \bm v) ~ \forall \bm u, \bm v, \bm r \in \bbr^s$;
Then for any dataset $\mathcal{D}^N$, there exists some constant $a \in \mathbb{R}$, that may depend on $\mathcal{D}^N$ but does not depend on $\bm w$, such that, for every $\bm w \in \bbr^{q}$,
\[
\max_{\bmc z \in \cz(\bm w)} \bigg\{\dfrac{1}{N}\sum_{n\in[N]}\Big[\bm y^\star(\bmh z_n)^{\top} \bmc z_n - \min_{\bm y_n \in \cy}\bm y_n^{\top} \bmc z_n \Big] \bigg\} \leq \frac{1}{\lambda}V(\bm w) + a.
\]
In particular, the worst-case regret for choosing the weights $\bmh{w}^{\rm DDR}$ is bounded and, 
\[
\max_{\bmc z \in \cz(\bmh w^{\rm DDR})} \bigg\{\dfrac{1}{N}\sum_{n\in[N]} \Big[\bm y^\star(\bmh z_n )^{\top} \bmc z_n - \min_{\bm y_n \in \cy} \bm y_n^{\top} \bmc z_n\Big] \bigg\} \leq \frac{1}{\lambda}V(\bmh{w}^{\rm DDR}) + a.
\]
\end{theorem}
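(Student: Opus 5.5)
The idea is to bound the worst-case regret term by term, using the constraints defining $\cz(\bm w)$. Each term should become either a quantity independent of $\bm w$ or a piece of $\frac{1}{\lambda}V(\bm w)$. Fix $\bm w$, write $\bmh z_n = \bm f(\bmt x_n;\bm w)$, and take any $\bm z\in\cz(\bm w)$. The regret summand is $\bm y^\star(\bmh z_n)^\top \bm z_n - \bm y^\star(\bm z_n)^\top \bm z_n$, and we handle its two parts separately.

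\emph{Oracle term.} By the $\phi$-constraint applied with $\bm y_n=\bm y^\star(\bm z_n)$, and then the $\eta$-constraint,
\[
-\bm y^\star(\bm z_n)^\top \bm z_n \le -\bm y^\star(\bm z_n)^\top\bmt z_n + \phi \le -\bm y^\star(\bmt z_n)^\top\bmt z_n + \eta+\phi .
\]
The right-hand side does not depend on $\bm w$. Its sample average therefore goes into the constant $a$.

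\emph{Decision term.} We want to turn $\bm y^\star(\bmh z_n)^\top \bm z_n$ into the negative of the valuation function. Splitting $\bm z_n$ as a $\mu$-weighted combination and applying the $\phi$- and $\psi$-constraints with $\bm y_n=\bm y^\star(\bmh z_n)$ gives, for $\mu\in[0,1)$,
\[
\bm y^\star(\bmh z_n)^\top \bm z_n \le \mu\,\bm y^\star(\bmh z_n)^\top\bmt z_n+(1-\mu)\,\bm y^\star(\bmh z_n)^\top\bmh z_n + \mu\phi+(1-\mu)\psi .
\]
This expression evaluates the valuation objective at a particular $\bm y$, so it upper-bounds $\nu_\mu(\bm w)$. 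That is the wrong direction. The sign in $V$ is negative, so we need an upper bound by $-\nu_\mu$ plus terms controlled by $L(\bm w)$.

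This sign reconciliation is the main obstacle, and I expect to use the $t$-constraint and the triangle inequality here. The triangle inequality gives
\[
\frac1N\sum_n\ell(\bmh z_n;\bmt z_n)\le \frac1N\sum_n\big[\ell(\bmh z_n;\bm z_n)+\ell(\bm z_n;\bmt z_n)\big],
\]
and the $t$-constraint then links $L(\bm w)$, which I assume is the empirical loss $\frac1N\sum\ell(\bmh z_n;\bmt z_n)$, to quantities over $\bm z$. The plan is to write the decision term as
\[
\bm y^\star(\bmh z_n)^\top \bm z_n = -\nu_n(\bm w) + \big[\bm y^\star(\bmh z_n)^\top\bm z_n+\nu_n(\bm w)\big],
\]
and then bound the bracket. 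Since $\cy$ is compact, $|\bm y^\top\bm z_n|$ is bounded by $\sup_{\cy}\|\bm y\|$ times norms of $\bm z_n$. Those norms I would control by the $\phi$ bound together with $\bmt z_n$, which is data-dependent only, and so also absorbable into $a$. The $(1-\mu)\bm y^\top\bmh z_n$ part of $\nu_n$ is then bounded using $\psi$. Everything that remains which depends on $\bm w$ should have the form $\frac1\lambda L(\bm w)$ plus nonnegative slack, because $L\ge 0$ can be added freely, since $\ell$ is nonnegative. We then have $\frac{1}{N}\sum[-\nu_n]\le\frac1\lambda V(\bm w)-\frac1\lambda L(\bm w)+\ldots$, so adding $\frac1\lambda L(\bm w)\ge 0$ where needed is harmless. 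Here I anticipate the constant $a$ absorbing terms such as
\[
\eta+\phi+\mu\phi+(1-\mu)\psi+2\sup_{\bm y\in\cy}\|\bm y\|\cdot\max_n\|\bmt z_n\|+\dots
\]

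Summing over $n$, dividing by $N$, and taking the maximum over $\cz(\bm w)$ gives the first inequality, since the right-hand side does not depend on the chosen $\bm z$. The second inequality is its specialization to $\bm w=\bmh w^{\rm DDR}$. If the sign step cannot be closed with $\nu_\mu$ directly, my fallback is to exploit the triangle inequality more aggressively. That would mean using the $t$-constraint to trade $\ell(\bm z_n;\bmh z_n)$ for $\ell(\bm z_n;\bmt z_n)$, which lets the $\psi$-alignment be re-expressed through $\bmt z_n$ and thus as a $\bm w$-independent constant.
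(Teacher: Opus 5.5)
Your route is sound but differs from the paper's. You add and subtract $\nu_n(\bm w)$, bound the remainder $\bm y^\star(\bmh z_n)^\top\bm z_n+\nu_n(\bm w)$ by data-only constants, and then add $\frac1\lambda L(\bm w)\ge 0$.

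The paper instead \emph{derives} $-\nu_n(\bm w)$ through two rounds of Lagrangian weak duality with hand-picked multipliers:
\begin{itemize}
\item It writes the oracle term as $\bm y^\star(\bm z_n)^\top\bm z_n-2\bm y^\star(\bm z_n)^\top\bm z_n$. The $\phi,\psi,\eta$ constraints then replace it by $-2\min_{\bm y\in\cy}\bm y^\top\bmh z_n$ plus constants.
\item It merges this with the decision term and relaxes $\bm y^\star(\bmh z_n)$ to a free $\bm y$, giving $\max_{\bm y\in\cy}\{\bm y^\top\bm z_n-2\bm y^\top\bmh z_n\}$.
\item $\mu$-weighted multipliers on the $\phi,\psi$ constraints turn this into $-\nu_n(\bm w)$. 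Multipliers $\frac{1\pm\bar\lambda}{2}$ on the $t$-constraint, together with the triangle inequality, contribute $\bar\lambda L(\bm w)+t$.
\end{itemize}

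The paper's route mirrors the SPO$\to$SPO+ derivation, which supports its ``canonical approximation'' reading. It also yields the explicit constant $a=2\phi+4\psi+\eta+t+\frac1N\sum_n\bm y^\star(\bmt z_n)^\top\bmt z_n$. However, its multipliers are nonnegative only for $\mu\in[-1,1)$ and $\lambda=1/\bar\lambda\ge 1$, and the statement mentions neither restriction.

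Your argument is more elementary and works for every $\lambda>0$ and every $\mu$. It also shows how weak the bound is. The $\psi$-constraint alone already caps the regret on $\cz(\bm w)$ at $2\psi$, while $-\nu_n(\bm w)$ is bounded below by a data constant whenever $\cz(\bm w)\neq\emptyset$.

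Two repairs are needed.

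First, the $\phi$-constraint does not bound $\|\bm z_n\|$. On a shortest-path feasible set, adding $\pi_j-\pi_i$ (with $\pi_s=\pi_t$) to every arc changes no path cost, yet can make $\|\bm z_n\|$ arbitrarily large. You need no norms, though. You have $\bm y^\star(\bmh z_n)^\top\bm z_n\le\min_{\bm y\in\cy}\bm y^\top\bmh z_n+\psi\le\bm y^\star(\bmt z_n)^\top\bmt z_n+\phi+2\psi$, and similarly $\nu_n(\bm w)\le\bm y^\star(\bmt z_n)^\top\bmt z_n+|1-\mu|(\phi+\psi)$.

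Second, $L\ge 0$ is not among the hypotheses. If you do not want to assume it, the $t$-constraint and the triangle inequality give $L(\bm w)\ge -t$ whenever $\cz(\bm w)\neq\emptyset$. Then enlarge $a$ by $t/\lambda$. This is exactly the job the $\bar\gamma$-multipliers perform in the paper's proof.
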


Requiring triangle inequality is reasonable---$\ell(\cdot,\cdot)$ is a closeness measure, and would usually be at least a pseudo-metric. Theorem \ref{thm.regret ddr} connects regret minimization to the original regularized problem, implying that~\ref{model:ddr_empirical} is its canonical approximation for the cost-ambiguous regret-minimization problem under general fidelity measure.

How~\ref{model:ddr_empirical} is obtained as an approximation of~\ref{model::regret_ddr} is akin to how \ref{model:spo+} was obtained from the regret minimization problem in \cite{Elmachtoub_Grigas_2022smart}. In fact, the very next result states that under such a perspective, \ref{model:spo+} is a special case of our model. 

\begin{proposition}[\ref{model:spo+} as a~\ref{model:ddr_empirical}]\label{prop.spoequiv}
	The solution of~{\rm \ref{model:spo+}} coincides with that of ~{\rm \ref{model:ddr_empirical}} for $\lambda =1$, $\mu = -1$, and the loss function is chosen as $\displaystyle L(\bm{w}) = 2\frac{1}{N}\sum\limits_{n\in[N]} \bm y^\star(\bmt z_n)^{\top}\bm{f}(\mbt{x}_n; \bm{w})$. 
\end{proposition}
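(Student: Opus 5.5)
This is a direct algebraic substitution: plug the stated parameters into \ref{model:ddr_empirical} and check that its objective is identical, term by term, to the \ref{model:spo+} objective. The two argmin sets then coincide.

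With $\lambda=1$ and $\mu=-1$, we have $1-\mu=2$, so the regularization term becomes
\[
-\frac{1}{N}\sum_{n\in[N]}\min_{\bm y_n\in\cy}\Big\{-\bm y_n^{\top}\bmt z_n + 2\,\bm y_n^{\top}\bm f(\bmt x_n;\bm w)\Big\}.
\]

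Apply the identity $-\min_{\bm y}\{h(\bm y)\}=\max_{\bm y}\{-h(\bm y)\}$ to each summand. This turns the term into
\[
\frac{1}{N}\sum_{n\in[N]}\max_{\bm y_n\in\cy}\big\{\bm y_n^{\top}\bmt z_n-2\,\bm y_n^{\top}\bm f(\bmt x_n;\bm w)\big\},
\]
which is exactly the second term of \ref{model:spo+}.

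Now add the prescribed loss $L(\bm w)=2\frac{1}{N}\sum_{n}\bm y^\star(\bmt z_n)^{\top}\bm f(\bmt x_n;\bm w)$. This reproduces the first term of \ref{model:spo+} verbatim.

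Hence the objective of \ref{model:ddr_empirical} equals that of \ref{model:spo+} for every $\bm w$. Identical objective functions over the same domain have identical minimizer sets, so the solutions coincide, including the case where the argmin is a set rather than a singleton.

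There is one point to check. The problem statement requires $\mu<1$ and $\lambda\ge 0$, and both $\mu=-1$ and $\lambda=1$ satisfy these. So the choice is admissible in \ref{model:ddr_empirical}, even though it falls outside the range $\mu\in[0,1)$ used in Theorem~\ref{thm.rddr}. This is consistent with the remark after Proposition~\ref{prop.valueequiv} that negative $\mu$ is allowed.

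The only care needed is with the min/max sign flip. I also need to confirm that $\bm y^\star(\bmt z_n)$ is a fixed, $\bm w$-independent selection from the argmin, which it is, since $\bmt z_n$ is data. No step is a genuine obstacle.
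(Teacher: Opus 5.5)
Your proposal is correct and follows the paper's own proof: substitute $\lambda=1$, $\mu=-1$ (so $1-\mu=2$) and the given loss into the DDR objective, then rewrite $-\min$ as $\max$ to read off the SPO+ objective term by term. Your extra checks are sound additions that the paper leaves implicit: $\mu=-1<1$ and $\lambda=1\ge 0$ are admissible, and $\bm y^\star(\bmt z_n)$ is a fixed, $\bm w$-independent selection.
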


\ref{model:spo+} is a very specific example of \ref{model:ddr_empirical}, where not just the loss function $L(\bm w)$ is chosen to be decision-focused in a particular way, but the actual Lagrange multiplier $\lambda$ is also specified. 

\section{A Numerical Illustration on the Shortest Path Problem}
In this section, we follow the setting of \cite{Elmachtoub_Grigas_2022smart} to compare the performance of our decision-driven regularization approach against various benchmarks on the shortest path problem defined over an $G \times G$ grid network. Let $\mathcal{V}$ and $\mathcal{E}$ denote the sets of nodes and arcs, respectively. For each arc $(i,j) \in \mathcal{E}$, let $z_{ij}$ represent the cost associated with traversing that arc. Let $s$ and $t$ denote the designated source and sink nodes, respectively, which are located at the opposite corners of the grid. The shortest path problem can be formulated as the following optimization problem: 
\begin{equation*}
\begin{array}{rll}
    \min~ &\displaystyle \sum_{(i,j) \in \mathcal{E} } y_{ij}z_{ij}  \\
    \mbox{s.t.}~ 
    & \displaystyle \sum_{j: (j,i) \in \mathcal{E}} y_{ji} - \sum_{j:(i,j) \in \mathcal{E}} y_{ij} = 
    \begin{cases}
         -1 & \mbox{ if } i = s \\
         1 & \mbox{ if } i = t\\
         0 & \mbox{ otherwise }
    \end{cases}
     &~\forall i \in \mathcal{V}\\
    & \displaystyle y_{ij} \in \{0,1\} &~\forall (i,j) \in \mathcal{E}.
\end{array}
\end{equation*} 

\smallskip
\noindent\ul{Data generation}. In this study, we generate synthetic datasets that closely mirror the procedure in \cite{Elmachtoub_Grigas_2022smart}. We assume that the true relationship is 
$$
\tilde{z}_{ij} = \Big[\frac{1}{\sqrt{p}}\big(\bmt x^\top \bmc w_{ij} + 3\big)^{\beta} + 1\Big]\cdot \tilde{\epsilon}_{ij},
$$
where $\bm{x} \in \mathbb{R}^{p}$ represents the features, $\bmc{w}_{ij} \in \mathbb{R}^{p}$ are the true (but unknown) weights associated with arc $(i,j)$, $\tilde{\epsilon}_{ij}$ are multiplicative noise terms independently sampled from a uniform distribution on the interval $[1 - \bar{\epsilon}, 1 + \bar{\epsilon}]$ for some noise level $\bar{\epsilon} \geq 0$, and $\beta$ is the misspecification parameter (when $\beta = 1$, there is no misspecification). Note that $p$ in the root is the same $p$ that denotes the dimension of features $\bm x$. The features $\bmt{x}$ are drawn from a multivariate normal distribution with independent standard normal components, and each entry of $\bmc{w}_{ij}$ is independently sampled from a Bernoulli distribution with success probability $0.5$. Let $\mathbf W$ denote the collection of all weights $(\bm w_{ij})_{(i,j) \in \mathcal{V}}$ and training dataset is denoted $\mathcal{D}^{N} = \{(\bmt{x}_n, \bmt{z}_n)\}_{n=1}^{N}$. Similarly, we simulate $\mathcal{D}^{M} = \{(\bmt{x}_m, \bmt{z}_m)\}_{m=1}^{M}$ as testing dataset.

\smallskip
\noindent\ul{Benchmarks}. We consider a range of SLO and ILO benchmarks. For SLO, we use the OLS, random forest (RF) and boosting (implemented by XGBoost) learners. We implement the RF model using the \texttt{scikit-learn} library and the boosting model using the \texttt{XGBoost} package. For the RF model, we set the number of trees to 100, chosen based on empirical performance metrics. For the XGBoost model, we calibrate the optimal number of boosting rounds to 2. For ILO, we consider \ref{model:spo+}, Perturbation Gradient (PG) proposed by \cite{huang2024decision}, and Learning to Rank (LTR) introduced by \cite{mandi2022decision}. 

\smallskip
\noindent\ul{Optimization Models}. 
Given the historical dataset $\mathcal{D}^{N}$, the OLS approach can be readily implemented to obtain parameter estimates. For \ref{model:spo+}, PG and LTR, we utilize the publicly available and well-documented \texttt{PyEPO} package developed by \cite{tang2024pyepo}.

The \ref{model:ddr_empirical} model for the shortest path problem is formulated as follows:

\begin{equation*}
\begin{array}{rll}
    \min\limits_{\mathbf W}~ &\displaystyle L(\mathbf W) + \frac{\lambda}{N}\sum_{n \in [N]}\max\limits_{y_{ij}\,:\, \forall (i,j) \in \mathcal{E}} \Big\{\sum_{(i,j) \in \mathcal{E} } y_{ij}^{n}\big(-\mu \tilde{z}^{n}_{ij} - & (1-\mu) f_{ij}(\bmt{x}^{n},\bm w_{ij}) \big)\Big\}  \\
    \mbox{s.t.}~ 
    & \displaystyle \sum_{j: (j,i) \in \mathcal{E}} y^n_{ji} - \sum_{j:(i,j) \in \mathcal{E}} y^n_{ij} = \left\{ 
    \begin{array}{ll}
         -1 & \mbox{ if } i = s \\
         1 & \mbox{ if } i = t\\
         0 & \mbox{ otherwise }
    \end{array}
    \right. &~\forall i \in \mathcal{V}, n \in [N]\\
    & \displaystyle y^n_{ij} \in \{0,1\} &~\forall (i,j) \in \mathcal{E}, n \in [N]
    \end{array}
\end{equation*}
Here, we use the mean square error for the loss function $L(\mathbf W)$. The model is reformulated using Proposition~\ref{prop.compute}, and solved by standard commercial optimization solvers such as Gurobi.

\smallskip
\noindent\ul{Measuring performance}. 
Given an estimate $(\bm{\hat{w}}, \bm{\hat{w}}_0)$ of the weights, we compute the predicted costs for each observed features $\bmt{x}_m$ in the test dataset as $\bm{\hat{z}}_m = \bm{\hat{w}}^\top \bmt{x}_m + \bm{\hat{w}}_0$. Based on the predicted cost $\bm{\hat{z}}_m$, we obtain the optimal solutions $\bm{\hat{y}}_m$ by solving the shortest path problem.
Next, we evaluate the true cost incurred by adopting the solutions $\bm{\hat{y}}_m$, using the oracle costs defined as $\bmc z_{m} = \big(\frac{1}{\sqrt{p}}(\bmt x_{m}^\top \bmc w_{ij} + 3\big)^{\beta} + 1$.
Finally, the average cost over the test dataset is given by
$$ 
P(\bmh w,\bmh w_{0}) = \frac{1}{M}\sum_{m \in [M]}\bmh y_{m}^\top \bmc z_{m}.
$$
Similarly, the oracle's decisions $\bmc y_{m}$ can be obtained by solving the shortest path problem based on true average cost $\bmc z_{m}$. 
We denote oracle's performance as $P^\star = \frac{1}{M}\sum\limits_{m \in [M]} \mbc{y}_m^{\top}\mbc{z}_m $. By definition, the oracle's decisions achieve the lowest possible costs.
Therefore, the regret associated with the estimate $(\bmh w,\bmh w_{0})$ is denoted as 
$$
R(\mbh{w}, \mbh{w}_0) := P(\mbh{w}, \mbh{w}_0) - P^\star.
$$

When comparing any two models $\left(\mbh{w}^A, \mbh{w}^A_0\right)$ and $\left(\mbh{w}^B, \mbh{w}^B_0\right)$, we use the following metrics:
\begin{enumerate}
    \item \ul{Regret reduction}: The regret reduction of Model A over Model B is defined as how much lesser regret (and thus improvement) Model A incurs, as a proportion of the regret incurred by Model B,
    \begin{equation*}
    \Delta R(A,B) := \frac{R\left(\mbh{w}^B, \mbh{w}^B_0\right) - R\left(\mbh{w}^A, \mbh{w}^A_0\right) }{R\left(\mbh{w}^B, \mbh{w}^B_0\right)} \times 100\%.
    \end{equation*}
    
    \noindent Equivalently, $\Delta R(A,B) := \left[P\left(\mbh{w}^B, \mbh{w}^B_0\right) - P\left(\mbh{w}^A, \mbh{w}^A_0\right)\right]\Big/(P\left(\mbh{w}^B, \mbh{w}^B_0\right) - P^\star) \times 100\%$. Note that if $\Delta R(A,B) > 0$, then Model A incurs smaller regret than Model B.
    \item \ul{Head-to-head}: Let the proportion of test data points $\mbh{x}_m$ where the optimal decisions $\mbh{y}_m$ disagree across the two models, be the discordance $D(A,B) = \Big[\sum\limits_{m \in [M]}\mathbbm{1}\left\{\mbh{y}_m^A \neq \mbh{y}_m^B\right\} \Big]\Big/M$. We define the head-to-head ratio between two models as the proportion of test data points $\mbh{x}_m$ where the first model achieves a lower cost, $H(A,B) = \dfrac{1}{D(A,B)}\Big[\sum\limits_{m \in [M]}\mathbbm{1} \left\{\mbc{z}_m^{\top} \mbh{y}_m^A <\mbc{z}_m^{\top} \mbh{y}_m^B\right\}\Big]$, out of all data points where the models disagree. We define $H(A,B) = 0.5$ if $D(A,B) = 0$.
\end{enumerate}

\subsection{Baseline Setting---No misspecification} \label{section:shortest-baseline}

We first consider a $3 \times 3$ grid network with a training dataset of size $N = 100$, a testing dataset of size $M = 1000$, and a feature dimension of $p = 5$. We also assume no model misspecification by setting $\beta = 1.0$, and the noise level is fixed at $\bar{\epsilon} = 0.5$. Further results demonstrating the robustness of our findings to different parameters (\textit{i.e.}, $N,\; p,\; \beta$, and $\bar{\epsilon}$) and different network sizes are presented in Appendix~\ref{append.more_simu}. 

\smallskip

\noindent\ul{Computational efficiency}.
We first evaluate the computational efficiency of the proposed DDR approach by varying the grid size $G \in \{2,3,4,5\}$ across five different hyperparameter configurations for $(\mu, \lambda)$. Table~\ref{table:solution-time} presents details on the average computational time required to solve each model instance. The results demonstrate that the DDR model is computationally efficient. While the solution time increases monotonically with the grid size, the model is insensitive to variations in $\mu$ and $\lambda$, as these hyperparameters have negligible impact on computational performance.

\begin{table}[ht]
\centering
\caption{Average solution time (in seconds) required to compute DDR model.}
\label{table:solution-time}
\begin{tabular}{c|ccccc}
\hline
\multirow{2}{*}{Grid Size} & \multicolumn{5}{c}{$(\mu, \lambda)$}\\ \cline{2-6}
& (0.5,0.25) & (0.5,0.5) & (0.5,0.75) & (0.25,0.5) & (0.75,0.5) \\ \hline\hline
(2,2)         & 0.02       & 0.02      & 0.02       & 0.02       & 0.02       \\
(3,3)         & 0.11       & 0.11      & 0.12       & 0.12       & 0.11       \\
(4,4)         & 0.18       & 0.17      & 0.18       & 0.18       & 0.17       \\
(5,5)         & 0.31       & 0.30      & 0.30       & 0.31       & 0.32       \\ \hline
\end{tabular}
\end{table}

\smallskip
\noindent\ul{Calibration of DDR parameters $\lambda$ and $\mu$}. 
In order to select a suitable set of hyperparameters $\lambda$ and $\mu$ for DDR, we examine their influence on DDR's performance. Specifically, we consider values of $\mu$ from the set $\{0.6, 0.65, \ldots, 0.85\}$ and vary $\lambda$ over the range $\{0.0, 0.1, \ldots, 12.0\}$. Note that when $\lambda = 0$, DDR simplifies to OLS. Hence, we compare the regret reduction of DDR relative to OLS in Figure~\ref{fig.baseline calibration shortest path} below.
\begin{figure}[htbp]
	\centering
	\includegraphics[width=0.80\linewidth]{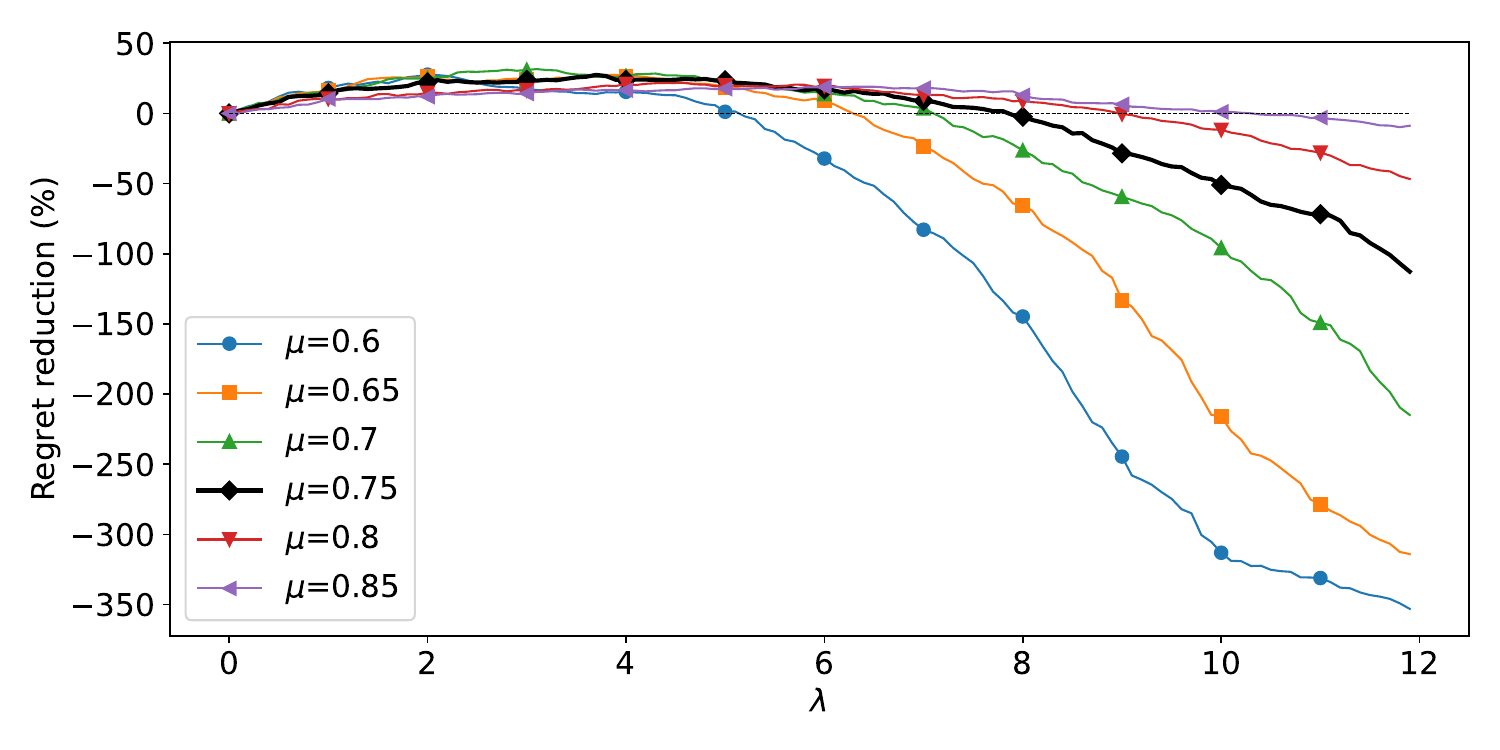}
	\vspace{-0.15cm}
	\caption{Performance of DDR against OLS for different parameters of $\mu$ and $\lambda$}
	\vspace{-0.2cm}
	\label{fig.baseline calibration shortest path}
\end{figure}

As expected, when $\lambda = 0$, DDR coincides with OLS and there is no regret reduction as they obtain the same solution. When $\lambda$ is relatively small (\textit{i.e.}, $\lambda \leq 5$), DDR consistently outperforms OLS across these values of $\mu$ considered. Indeed, $\lambda$ is the parameter that shifts the focus between prediction accuracy and cost minimization. Thus, it is expected that at small $\lambda$, it will behave like a regularizer exhibiting bias-variance trade-off. At large values of $\lambda$, prediction accuracy is de-prioritized to the point that the cost function can no longer be trusted to be close to the true cost function. It is poignant to also note that the fall off in performance from large $\lambda$ occurs earlier for smaller values of $\mu$. Indeed, when $\mu$ is small, there is a larger dependence on the estimated cost and so there is a higher chance that the cost function would depart from the true cost.  

Based on these observations, we fix $\mu = 0.75$ and $\lambda = 0.8$, as a more conservative choice of the regularization parameters, in the subsequent numerical experiments. Hereon, we will no longer mention the values of the hyperparameters unless they are varied.

\smallskip
\noindent\ul{Comparison with SLO benchmarks}. We first examine the performance of DDR against SLO models (OLS, RF, and XGBoost).  Figure~\ref{fig:DDR-3by3-mu=0.75-lamb=0.8} illustrates the performance comparison. Each point in the plot represents a new testing data set generated under the same true distribution.
\begin{figure}[htbp]
	\centering
	\includegraphics[width=0.30\linewidth]{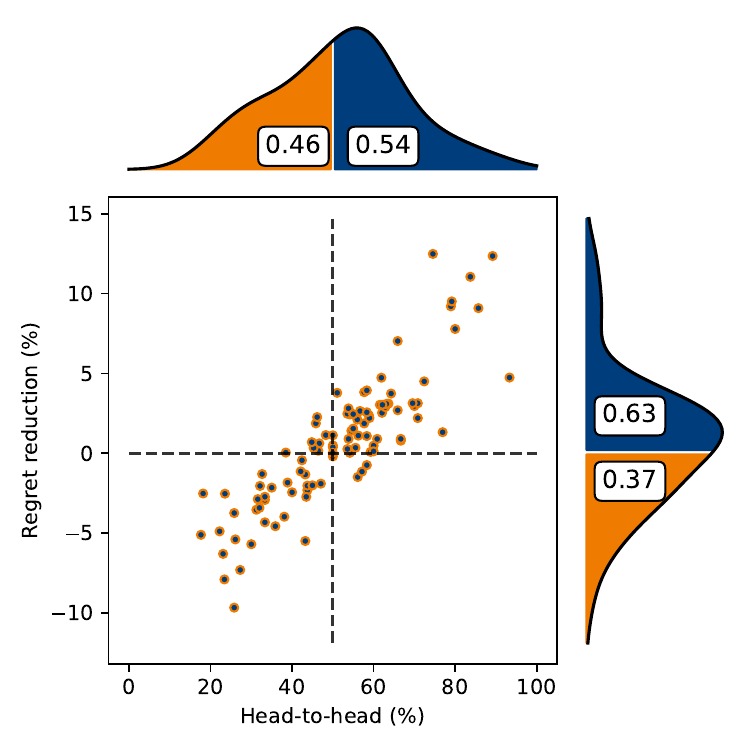}
	\includegraphics[width=0.3\linewidth]{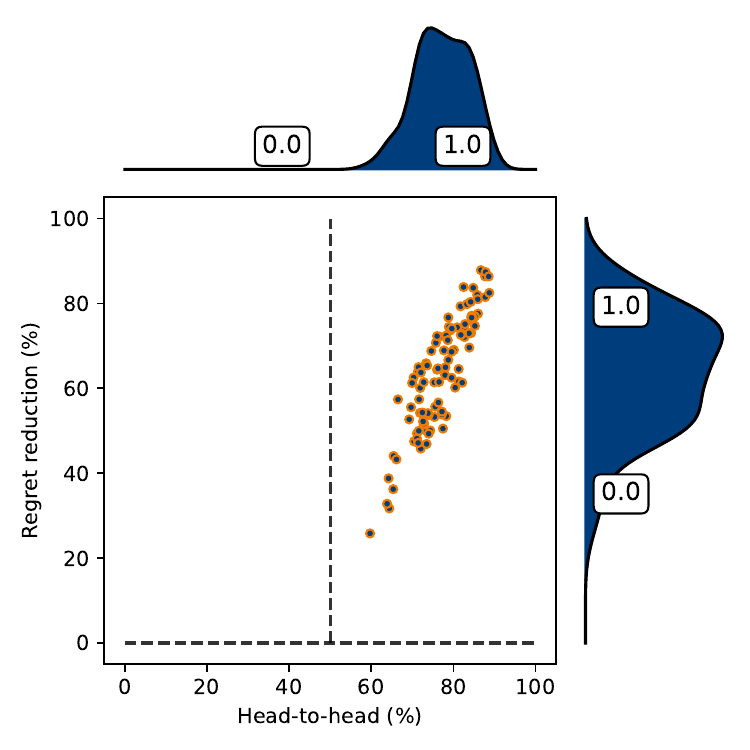} 
    \includegraphics[width=0.3\linewidth]{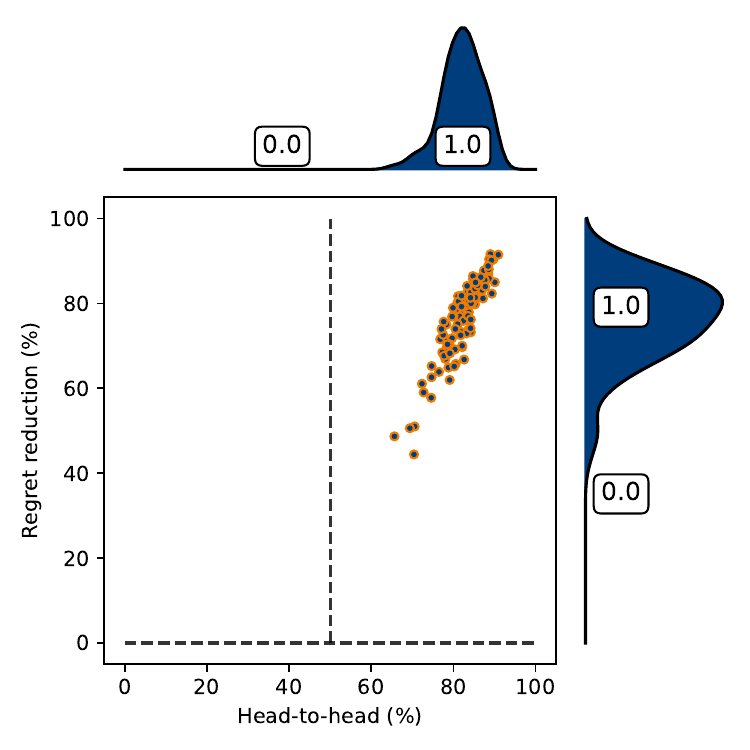} 
	\vspace{-0.15cm}
	\caption{Performance of DDR against SLO models: OLS (left), RF (middle), and XGBoost (right)} 
	\vspace{-0.2cm}
	\label{fig:DDR-3by3-mu=0.75-lamb=0.8}
\end{figure}

From the Figure \ref{fig:DDR-3by3-mu=0.75-lamb=0.8}, we can see that DDR outperforms all of the SLO models tested. OLS is the closest in performance to DDR; $63\%$ of the time, DDR obtains a better regret over OLS. On the other hand, RF and XGBoost models do not perform well, with DDR outperforming them in every instance. This runs contrary to expectations as RF and XGBoost come across as more `advanced' machine learning models, and thus may be expected to have stronger performance. To understand this, we zoom in to the in- and out-of-sample prediction accuracy of the SLO benchmarks in Table \ref{tab.accuracy} below. Here, as the experiment is repeated over multiple data sets, we present the means and extreme percentiles. 

\begin{table}[H]
\centering
\caption{Prediction accuracy of SLO models in terms of root mean square error (RMSE) across multiple datasets}\label{tab.accuracy}
\vspace{0.20cm}
\begin{tabular}{l||c|c|c||c|c|c}
\toprule
\multirow{2}{*}{\textbf{Models}} & \multicolumn{3}{c||}{\textbf{In-sample RMSE}} &\multicolumn{3}{c}{\textbf{Out-of-sample RMSE}}  \\ \cline{2-7}
& $5^{\scriptscriptstyle{\text{th}}}\%$-tile & Mean & $95^{\scriptscriptstyle{\text{th}}}\%$-tile & $5^{\scriptscriptstyle{\text{th}}}\%$-tile & Mean & $95^{\scriptscriptstyle{\text{th}}}\%$-tile \\
\midrule
OLS  & 3.8393 & 3.9416 & 4.0682 & 4.1279 & 4.1846 & 4.2395 \\\hline
RF   & 1.5872 & 1.6531 & 1.7039 & 4.3479 & 4.4255 & 4.5155 \\\hline
XGBoost  & 2.9835 & 3.1140 & 3.2322 & 4.4822 & 4.5833 & 4.6913 \\\hline
DDR  & 3.8418 & 3.9440 & 4.0705 & 4.1286 & 4.1830 & 4.2352 \\
\bottomrule
\end{tabular}
\end{table}

The in-sample accuracies of RF and XGBoost are better than OLS, but they possess a higher out-of-sample error, indicating over-fitting. The impact of this over-fitting results in a large degradation in the decision quality. \rvedit{To further confirm that this behavior is not an artifact of a specific hyperparameter choice, we report in Appendix~\ref{subsec.rmse-depth} (Table~\ref{tab.accuracy.detailed}) a finer-grained breakdown in which the maximum tree depth of RF and XGBoost is swept across multiple values. The in-sample vs.~out-of-sample gap persists across all tested depths, indicating that the deterioration in decision quality is driven by over-fitting of the predictive model rather than by a single hyperparameter setting.}
For more discussion on this from the perspective of regularization, please refer to Appendix \ref{append.relationship}. In contrast, DDR does not differ very much from OLS in terms of prediction accuracy, while making gains in terms of performance.

\smallskip
\noindent\ul{Comparison with ILO benchmarks}. We next examine the performance of DDR against ILO models (SPO+, PG, and LTR).
Figure~\ref{fig:DDR-EPO-3by3-mu=0.75-lamb=0.8} illustrates the performance comparison. 
\begin{figure}[htbp]
	\centering
	\includegraphics[width=0.32\linewidth]{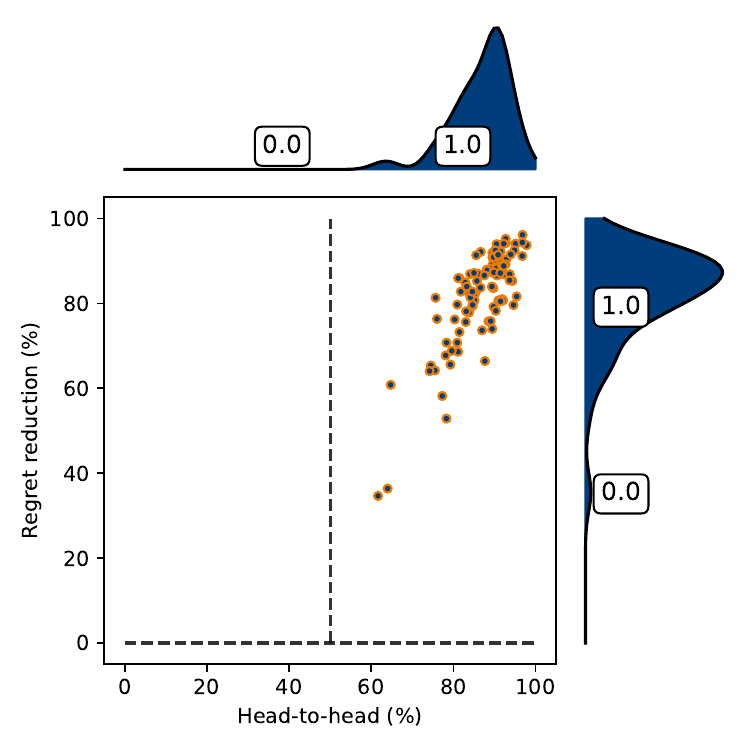}
    \includegraphics[width=0.32\linewidth]{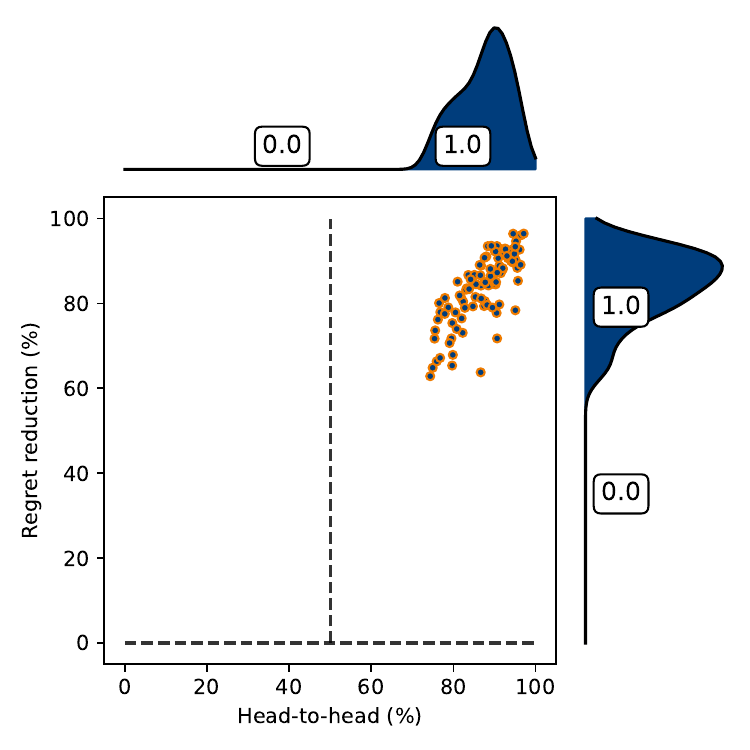}
	\includegraphics[width=0.32\linewidth]{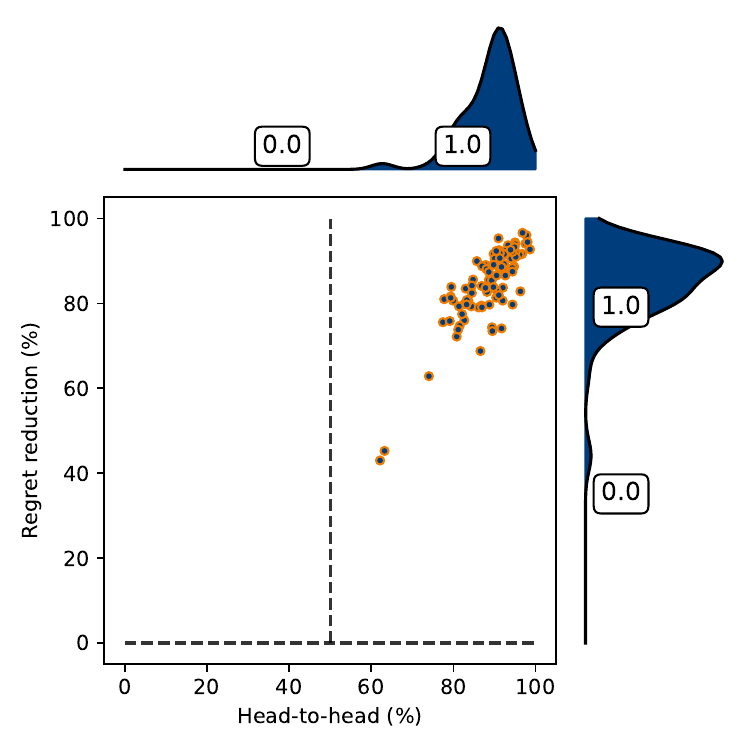} 
	\vspace{-0.15cm}
	\caption{Performance of DDR against ILO models: SPO+ (left), PG (centre) and LTR (right)}
	\vspace{-0.2cm}
	\label{fig:DDR-EPO-3by3-mu=0.75-lamb=0.8}
\end{figure}

Under no misspecification, as is the case here, ILO benchmarks do not perform well. DDR makes better decisions on every single data point in the testing data set compared to ILO benchmarks. Since SPO$+$ and these two decision-focused learning models (PG and LTR) utilize the empirical expected regret as their objective, they (\textit{i}) do not benefit from the improvement in precision in the regret function afforded by learning, and (\textit{ii}) are exposed to the possibility that their regret function departs from the true regret as there is no control over prediction accuracy. In the no (or low) misspecification regime, the error in the empirical outcomes $\mbt{z}$ is dominated by the error term $\bmt \epsilon$, resulting in over-fitting. We will see later that in the high misspecification regime, the error is dominated by misspecification error, and ILO benchmarks, in particular SPO+, will consequently regain better performance.

\smallskip
\noindent\ul{Impact of Network Structures}.
We also conduct numerical experiments to assess the performance of the \ref{model:ddr_empirical} model across various network structures. Specifically, we examine four grid networks, ranging from a $2 \times 2$ grid up to a $5 \times 5$ grid. The data generation process and parameter settings follow those described in \S\ref{section:shortest-baseline}.

Our simulation study explores multiple combinations of the \ref{model:ddr_empirical} hyperparameters $(\mu, \lambda)$, where both $\mu$ and $\lambda$ are selected from the set $\{0.2, 0.25, \ldots, 0.95\}$. For each combination, we compute the regret reduction based on 100 experimental runs. The \ref{model:ddr_empirical} models achieving optimal regret performance for each network configuration are illustrated in the left panel of Figure~\ref{fig:regret-reduction-networks}.

\begin{figure}[htbp]
	\centering
	\includegraphics[width=0.45\linewidth]{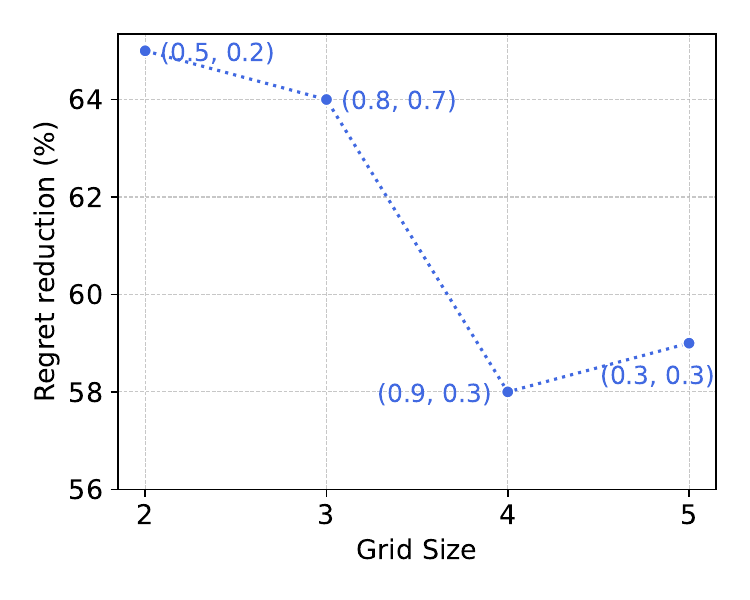}
    \includegraphics[width=0.45\linewidth]{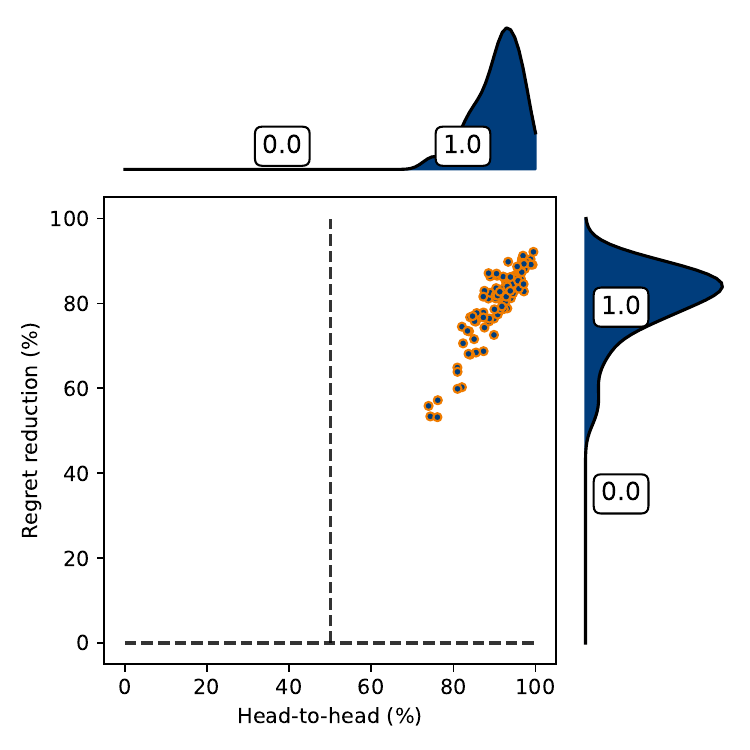}
	\vspace{-0.15cm}
	\caption{Regret reduction of the \ref{model:ddr_empirical} against OLS under different network structures (left panel) and comparison of~\ref{model:ddr_empirical} against~\ref{model:spo+} for a $5 \times 5$ grid network (right panel)}
	\vspace{-0.2cm}
	\label{fig:regret-reduction-networks}
\end{figure}

We observe that the advantage of the \ref{model:ddr_empirical} model tends to decrease as the network size grows, with a modest improvement noted from the $4 \times 4$ to the $5 \times 5$ grid. Nonetheless, the \ref{model:ddr_empirical} model consistently outperforms the OLS approach in terms of regret performance across all examined network sizes. We also compare the performance of~\ref{model:ddr_empirical} against~\ref{model:spo+} for a $5\times5$ grid, and the results are plotted in the right panel of Figure~\ref{fig:regret-reduction-networks}. We observe that our \ref{model:ddr_empirical} approach continues to outperform the~\ref{model:spo+} approaches under larger network structures.

\subsection{The misspecified setting}\label{subsec.misspecified}
Here, we compare the performance of DDR against ILO benchmarks in the misspecified setting, varying $\beta \in \{0.4,0.6,0.8,1.0,1.5,2.0,4.0,8.0\}$. To better differentiate the performance among different misspecified settings, we set $N = 500$; at the previous training dataset size of $N=100$, none of the ILO benchmarks have a stronger performance than DDR for any values of $\beta \leq 8.0$. 

\begin{figure}[htbp]
	\centering
	\includegraphics[width=0.32\linewidth]{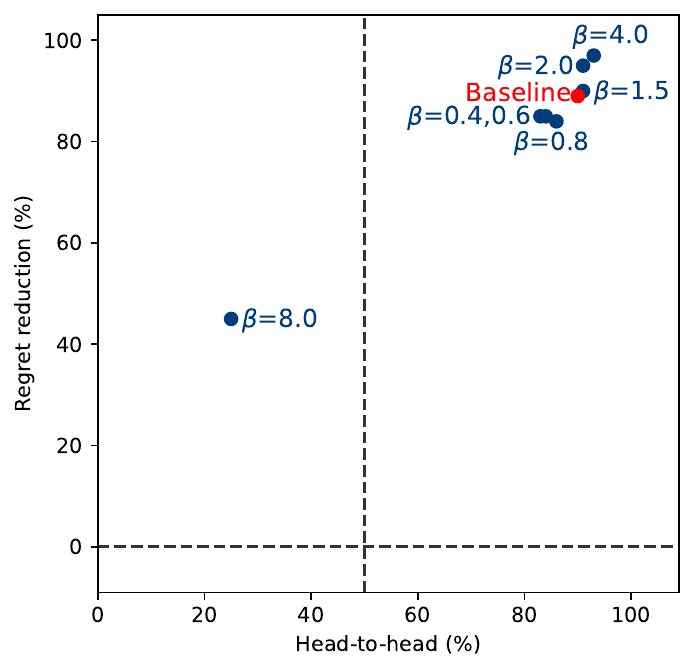}
    \includegraphics[width=0.32\linewidth]{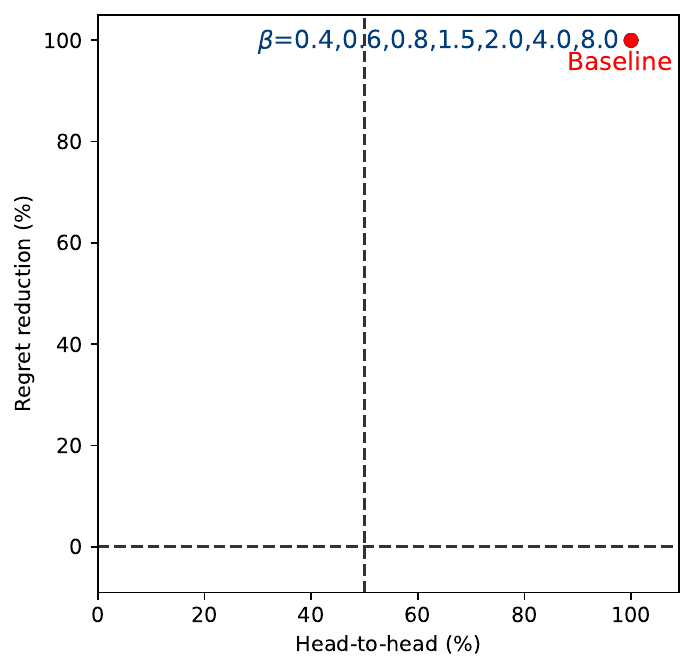}
	\includegraphics[width=0.32\linewidth]{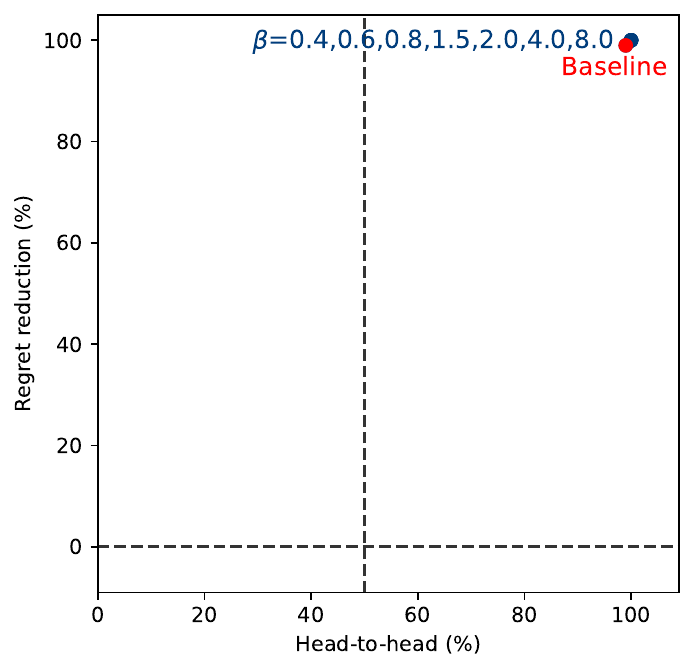} 
	\vspace{-0.15cm}
	\caption{Performance of DDR against ILO models: SPO+ (left), PG (centre) and LTR (right) in the misspecified setting}
	\vspace{-0.2cm}
	\label{fig:DDR-EPO-3by3-mu=0.75-lamb=0.8-mis}
\end{figure}

We can see that when misspecification rises to around $\beta = 8.0$, the performance of SPO+ eventually catches up to DDR; PG and LTR never do. Note that, for convenience of comparison, we have fixed a choice of $\lambda$ and $\mu$ and it would still be possible to further improve the performance of DDR by calibrating for $\lambda$ and $\mu$. Because empirical outcomes $\bmt z$ are heavily used in SPO+, this greatly reduces their susceptibility to misspecification as observed data $\mbt z = \bm g(\mbt x) + \bm \epsilon$ inherently includes information about the true relationship $\bm g(\cdot)$ which captures the misspecification. Instead, reliance on estimated $\bmh z = \bm{f}(\bm x; \bmh w)$ leads to a large misspecification error.  

Nonetheless, we stress that such levels of misspecification are high---large enough that a simple plot of the data points would easily detect it, notwithstanding the range of statistical tools and tests, such as residual analysis, that could easily point to the existence of misspecification. 
We aim to prove our point here, by zooming in to SPO+ as it is the only ILO model tested that beats DDR at misspecification level $\beta = 8.0$. Their relative performance is plotted in Figure \ref{fig:DDR-EPO-3by3-mu=0.75-lamb=0.8-mis} (left). Suppose now, that the decision-maker is cognizant of misspecification and introduces quadratic terms as features into the learner. Specifically, given the original feature set $x_{1}, \ldots,x_{p}$, we augmented the feature space by introducing quadratic interaction terms $x_{i}x_{j}$ for all $i,j \in [p]$. Note that the true misspecification level is $\beta = 8.0$, so incorporating quadratic terms does not lead to no misspecification, but simply just reduces the degree of misspecification. In Figure \ref{fig:DDR-EPO-3by3-mu=0.75-lamb=0.8-mis} (right), we examine the effect of doing so, when quadratic terms have been supplied to both DDR and the SPO+ benchmark.

\begin{figure}[htbp]
	\centering
	\includegraphics[width=0.45\linewidth]{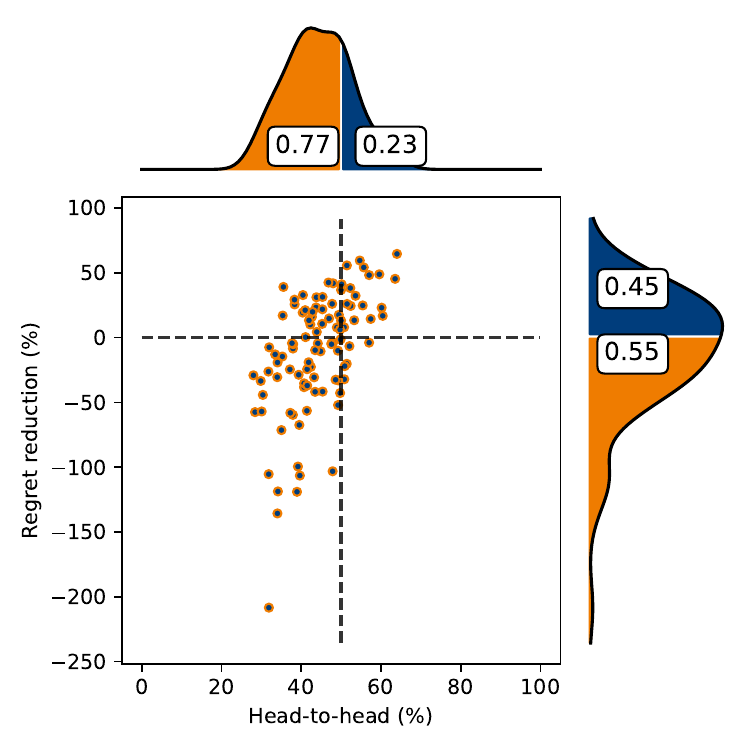}
    \includegraphics[width=0.45\linewidth]{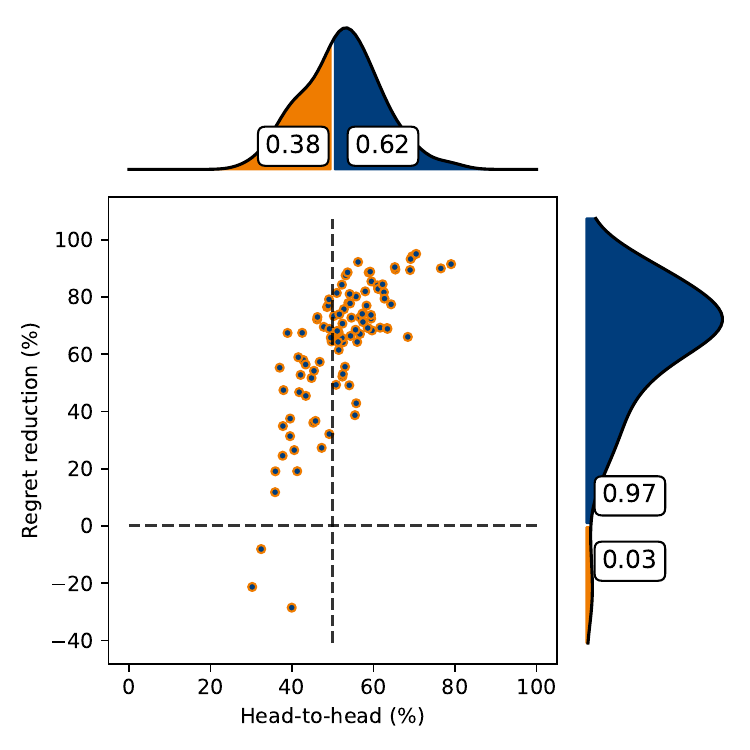}
	\vspace{-0.15cm}
	\caption{Performance of DDR against SPO+ in the misspecified setting $\beta=8.0$ and $N=500$ without (left) and with (right) quadratic terms} 
	\vspace{-0.2cm}
	\label{fig:DDR-EPO-3by3-mu=0.75-lamb=0.8-mis}
\end{figure}

Immediately, we notice that DDR rapidly regains superior performance against the SPO+ model. There are always natural ways to improve a model that is misspecified by better capturing the relationships and expanding the generality of the learner. However, the converse cannot be done---there are no natural ways to take a model that only performs well under high misspecification and somehow compel it to do better at low misspecification. We would like to stress that it is the low misspecification regime that is important, (\textit{i}) in light of current trends towards highly expressive models such as neural networks that would be misspecified with low probability, and (\textit{ii}) because it would otherwise be difficult to justify why the particular chosen learner is reasonable should the model only out-perform under high misspecification.

\section{Conclusion}\label{section::conclusion}

We propose a blended bi-objective framework, termed decision-driven regularization, for the contextual stochastic optimization problem. A key focus of our work is on the role that predictive accuracy plays in ILO, and how a lack of control for it can lead to over-fitting. We also illustrate the importance of applying bias-variance trade-off to improve the estimation of the cost function. We believe both of these discussions add important perspectives to the ILO literature when constructing ILO models. Because of these specific design elements in our model, we believe it explains DDR's superior numerical performance over benchmark models. We also illustrate the relationship between DDR and models crafted from the robust and regret perspectives, and as such generalizes SPO+. 

While we present our numerical evidence using a linear learner for DDR, we explained that DDR is compatible with any general learner, such as a tree-based learner that we illustrate in Appendix \ref{append.tree}. Nonetheless, there is much work ahead to examine the specialization of DDR to other learners popular in the machine learning literature, such as neural networks.

\bibliographystyle{informs2014} 
\bibliography{Regularization_abbrv} 

@article{mckenzie2024differentiating,
  title={Differentiating Through Integer Linear Programs with Quadratic Regularization and Davis-Yin Splitting},
  author={McKenzie, D and Wu\_Fung, S and Heaton, H},
  journal={Transactions on machine learning research},
  year={2024},
  publisher={OpenReview. net}
}

@inproceedings{mandi2025minimizing,
  title={Minimizing Surrogate Losses for Decision-Focused Learning Using Differentiable Optimization},
  author={Mandi, Jayanta and Mahmutogullari, Irfan and Guns, Tias and others},
  booktitle={28th European Conference on Artificial Intelligence, Including 14th Conference on Prestigious Applications of Intelligent Systems (PAIS 2025)},
  volume={413},
  pages={3888--3895},
  year={2025},
  organization={IOS Press}
}

@inproceedings{
    berden2026solverfree,
    title={Solver-Free Decision-Focused Learning for Linear Optimization Problems},
    author={Senne Berden and Ali {\.I}rfan Mahmuto{\u{g}}ullar{\i} and Dimos Tsouros and Tias Guns},
    booktitle={The Thirty-ninth Annual Conference on Neural Information Processing Systems},
    year={2026},
    url={https://openreview.net/forum?id=xMcKyUGTt1}
}

@article{Mandi_2024decision,
  title={Decision-focused learning: Foundations, state of the art, benchmark and future opportunities},
  author={Mandi, Jayanta and Kotary, James and Berden, Senne and Mulamba, Maxime and Bucarey, Victor and Guns, Tias and Fioretto, Ferdinando},
  journal={Journal of Artificial Intelligence Research},
  volume={80},
  pages={1623--1701},
  year={2024}
}

@article{Sadana_et_al_2024survey,
  title={A survey of contextual optimization methods for decision-making under uncertainty},
  author={Sadana, Utsav and Chenreddy, Abhilash and Delage, Erick and Forel, Alexandre and Frejinger, Emma and Vidal, Thibaut},
  journal={European Journal of Operational Research},
  year={2024},
  volume={320},
  number={2},
  pages={271--289},
  publisher={Elsevier}
}

@article{elmachtoub2023estimate,
  title={Estimate-then-optimize versus integrated-estimation-optimization versus sample average approximation: a stochastic dominance perspective.},
  author={Elmachtoub, Adam N and Lam, Henry and Zhang, Haofeng and Zhao, Yunfan},
  pages={\textit{Available at arXiv: 2304.06833}},
  year={2023}
}

@inproceedings{Mandi_Demirovic_Stuckey_Guns_2019smart,
  title={Smart predict-and-optimize for hard combinatorial optimization problems},
  author={Mandi, J. and Stuckey, P.J. and Guns, T.},
  booktitle={Proceedings of the AAAI Conference on Artificial Intelligence},
  volume={34},
  pages={1603--1610},
  year={2020}
}

@article{huang2024decision,
  title={Decision-focused learning with directional gradients},
  author={Huang, Michael and Gupta, Vishal},
  journal={Advances in Neural Information Processing Systems},
  volume={37},
  pages={79194--79220},
  year={2024}
}

@inproceedings{mandi2022decision,
  title={Decision-focused learning: Through the lens of learning to rank},
  author={Mandi, Jayanta and Bucarey, V{\i}ctor and Tchomba, Maxime Mulamba Ke and Guns, Tias},
  booktitle={International Conference on Machine Learning},
  pages={14935--14947},
  year={2022},
  organization={PMLR}
}

@article{tang2024pyepo,
  title={Pyepo: A pytorch-based end-to-end predict-then-optimize library for linear and integer programming},
  author={Tang, Bo and Khalil, Elias B},
  journal={Mathematical Programming Computation},
  volume={16},
  number={3},
  pages={297--335},
  year={2024},
  publisher={Springer}
}

@article{Notz_Pibernik_2022,
  title     = {Prescriptive Analytics for Flexible Capacity Management},
  author    = {Notz, Pascal M. and Pibernik, Richard},
  journal   = {Management Science},
  volume    = {68},
  number    = {3},
  pages     = {1756--1775},
  year      = {2022},
  month     = {March},
  publisher = {INFORMS}
}

@article{Ban_Gallien_Mersereau_2019dynamic,
  title={Dynamic procurement of new products with covariate information: The residual tree method},
  author={Ban, G.Y. and Gallien, J. and Mersereau, A. J},
  journal={Manufacturing \& Service Operations Management},
  volume={21},
  number={4},
  pages={798--815},
  year={2019},
  publisher={INFORMS}
}

@article{Lin_et_al_2022data,
  title={Data-driven newsvendor problems regularized by a profit risk constraint},
  author={Lin, Shaochong and Chen, Youhua and Li, Yanzhi and Shen, Zuo-Jun Max},
  journal={Production and Operations Management},
  volume={31},
  number={4},
  pages={1630--1644},
  year={2022},
  publisher={SAGE Publications Sage CA: Los Angeles, CA}
}

@article{Kannan_Bayraksan_Luedtke_2024residuals,
  title={Residuals-based distributionally robust optimization with covariate information},
  author={Kannan, Rohit and Bayraksan, G{\"u}zin and Luedtke, James R},
  journal={Mathematical Programming},
  volume={207},
  number={1},
  pages={369--425},
  year={2024},
  publisher={Springer}
}

@article{Sim_Tang_Zhou_Zhu_2024analytics,
  title={The analytics of robust satisficing: predict, optimize, satisfice, then fortify},
  author={Sim, Melvyn and Tang, Qinshen and Zhou, Minglong and Zhu, Taozeng},
  pages={\textit{Operations Research}},
  year={2024},
  publisher={INFORMS}
}

@article{Bertsimas_VanParys2022bootstrap,
  title={Bootstrap robust prescriptive analytics},
  author={Bertsimas, Dimitris and Van Parys, Bart},
  journal={Mathematical Programming},
  volume={195},
  number={1},
  pages={39--78},
  year={2022},
  publisher={Springer}
}

@article{Ferreira_Lee_Simchi_Levi_2016analytics,
  title={Analytics for an online retailer: Demand forecasting and price optimization},
  author={Ferreira, K.J. and Lee, B.H.A. and Simchi-Levi, D.},
  journal={Manufacturing \& Service Operations Management},
  volume={18},
  number={1},
  pages={69--88},
  year={2016},
  publisher={INFORMS}
}

@article{Hannah_et_al_2010nonparametric,
  title={Nonparametric density estimation for stochastic optimization with an observable state variable},
  author={Hannah, Lauren and Powell, Warren and Blei, David},
  journal={Advances in Neural Information Processing Systems},
  volume={23},
  year={2010}
}

@article{HoNguyen_KilincKarzan_2022risk,
  title={Risk guarantees for end-to-end prediction and optimization processes},
  author={Ho-Nguyen, Nam and K{\i}l{\i}n{\c{c}}-Karzan, Fatma},
  journal={Management Science},
  volume={68},
  number={12},
  pages={8680--8698},
  year={2022},
  publisher={INFORMS}
}

@article{Chan_Mahmood_Zhu_2025inverse,
  title={Inverse optimization: Theory and applications},
  author={Chan, Timothy CY and Mahmood, Rafid and Zhu, Ian Yihang},
  journal={Operations Research},
  volume={73},
  number={2},
  pages={1046--1074},
  year={2025},
  publisher={INFORMS}
}

@article{Kannan_et_al_2025data,
  title={Data-driven sample average approximation with covariate information.},
  author={Kannan, Rohit and Bayraksan, G{\"u}zin and Luedtke, James R},
  pages={\textit{Operations Research}},
  year={2025},
  publisher={INFORMS}
}

@article{Esteban_Morales_2022distributionally,
  title={Distributionally robust stochastic programs with side information based on trimmings},
  author={Esteban-P{\'e}rez, Adri{\'a}n and Morales, Juan M},
  journal={Mathematical Programming},
  volume={195},
  number={1},
  pages={1069--1105},
  year={2022},
  publisher={Springer}
}

@article{Perakis_Sim_Tang_Xiong_2023joint,
  title={Robust pricing and production with information partitioning and adaptation},
  author={Perakis, G. and Sim, M. and Tang, Q. and Xiong, P.},
  journal={Management Science},
  volume={69},
  number={3},
  pages={1398--1419},
  year={2023},
  publisher={INFORMS}
}

@article{Srivastava_et_al_2021data,
  title={On data-driven prescriptive analytics with side information: A regularized {N}adaraya-{W}atson approach.},
  author={Srivastava, Prateek R and Wang, Yijie and Hanasusanto, Grani A and Ho, Chin Pang},
  pages={\textit{Available at arXiv:2110.04855}},
  year={2021}
}

@article{Bengio_1997using,
  title={Using a financial training criterion rather than a prediction criterion},
  author={Bengio, Yoshua},
  journal={International Journal of Neural Systems},
  volume={8},
  number={04},
  pages={433--443},
  year={1997},
  publisher={World Scientific}
}

@inproceedings{Amos_Kolter_2017optnet,
  title={Optnet: Differentiable optimization as a layer in neural networks},
  author={Amos, Brandon and Kolter, J Zico},
  booktitle={International Conference on Machine Learning},
  pages={136--145},
  year={2017},
  organization={PMLR}
}

@article{Agrawal_et_al_2019differentiable,
  title={Differentiable convex optimization layers},
  author={Agrawal, Akshay and Amos, Brandon and Barratt, Shane and Boyd, Stephen and Diamond, Steven and Kolter, J Zico},
  journal={Advances in Neural Information Processing Systems},
  volume={32},
  year={2019}
}

@article{Vlastelica_et_al_2019differentiation,
  title={Differentiation of blackbox combinatorial solvers.},
  author={Vlastelica, Marin and Paulus, Anselm and Musil, V{\'\i}t and Martius, Georg and Rol{\'\i}nek, Michal},
  pages={\textit{Available at arXiv:1912.02175}},
  year={2019}
}

@article{Wilder_et_al_2019end,
  title={End to end learning and optimization on graphs},
  author={Wilder, Bryan and Ewing, Eric and Dilkina, Bistra and Tambe, Milind},
  journal={Advances in Neural Information Processing Systems},
  volume={32},
  year={2019}
}

@inproceedings{Jeong_et_al_2022exact,
  title={An exact symbolic reduction of linear smart predict+ optimize to mixed integer linear programming},
  author={Jeong, Jihwan and Jaggi, Parth and Butler, Andrew and Sanner, Scott},
  booktitle={International Conference on Machine Learning},
  pages={10053--10067},
  year={2022},
  organization={PMLR}
}

@article{Chung_et_al_2022decision,
  title={Decision-aware learning for optimizing health supply chains},
  author={Chung, Tsai-Hsuan and Rostami, Vahid and Bastani, Hamsa and Bastani, Osbert},
  journal={Available at arXiv:2211.08507},
  year={2022}
}

@article{Lawless_Zhou_2022note,
  title={A note on task-aware loss via reweighing prediction loss by decision-regret.},
  author={Lawless, Connor and Zhou, Angela},
  pages={\textit{Available at arXiv:2211.05116}},
  year={2022}
}

@article{Munoz_Pineda_Morales_2022bilevel,
  title={A bilevel framework for decision-making under uncertainty with contextual information},
  author={Mu{\~n}oz, Miguel Angel and Pineda, Salvador and Morales, Juan Miguel},
  journal={Omega},
  volume={108},
  pages={102575},
  year={2022},
  publisher={Elsevier}
}

@article{Estes_Richard_2023smart,
  title={Smart predict-then-optimize for two-stage linear programs with side information},
  author={Estes, Alexander S and Richard, Jean-Philippe P},
  journal={INFORMS Journal on Optimization},
  volume={5},
  number={3},
  pages={295--320},
  year={2023},
  publisher={INFORMS}
}

@inproceedings{Sun_et_al_2023maximum,
  title={Maximum optimality margin: A unified approach for contextual linear programming and inverse linear programming},
  author={Sun, Chunlin and Liu, Shang and Li, Xiaocheng},
  booktitle={International Conference on Machine Learning},
  pages={32886--32912},
  year={2023},
  organization={PMLR}
}

@article{Kong_et_al_2022end,
  title={End-to-end stochastic optimization with energy-based model},
  author={Kong, Lingkai and Cui, Jiaming and Zhuang, Yuchen and Feng, Rui and Prakash, B Aditya and Zhang, Chao},
  journal={Advances in Neural Information Processing Systems},
  volume={35},
  pages={11341--11354},
  year={2022}
}

@article{Qi_Grigas_Shen_2021integrated,
  title={Integrated conditional estimation-optimization.},
  author={Qi, Meng and Grigas, Paul and Shen, Zuo-Jun Max},
  pages={\textit{Available at arXiv:2110.12351}},
  year={2021}
}

@article{Bertsimas_Kallus_2020predictive,
	title={From predictive to prescriptive analytics},
	author={Bertsimas, D. and Kallus, N.},
	journal={Management Science},
	volume={66},
    number={3},
    pages={1025--1044},
    year={2020},
    publisher={INFORMS}
}

@article{Elmachtoub_Grigas_2022smart,
  title={Smart “predict, then optimize”},
  author={Elmachtoub, A. and Grigas, P.},
  journal={Management Science},
  volume={68},
  number={1},
  pages={9--26},
  year={2022},
  publisher={INFORMS}
}

@article{Tulabandhula_Rudin_2013machine,
	title={Machine learning with operational costs},
	author={Tulabandhula, T. and Rudin, C.},
	journal={Journal of Machine Learning Research},
	volume={14},
	number={25},
	pages={1989--2028},
	year={2013}
}

@article{Zhu_Xie_Sim_2022joint,
  title={Joint estimation and robustness optimization},
  author={Zhu, T. and Xie, J. and Sim, M.},
  journal={Management Science},
  volume={68},
  number={3},
  pages={1659--1677},
  year={2022},
  publisher={INFORMS}
}

@article{Ban_ElKaroui_Lim_2018machine,
  title={Machine learning and portfolio optimization},
  author={Ban, G.-Y. and {El~Karoui}, N. and Lim, A. EB},
  journal={Management Science},
  volume={64},
  number={3},
  pages={1136--1154},
  year={2018},
  publisher={INFORMS}
}

@inproceedings{el_Elmachtoub_Grigas_2019generalization,
  title={Generalization bounds in the predict-then-optimize framework},
  author={{El~Balghiti}, O. and Elmachtoub, A.N. and Grigas, P. and Tewari, A.},
  booktitle={Advances in Neural Information Processing Systems},
  pages={14412--14421},
  year={2019}
}

@article{Liu_He_Shen_2021time,
	title={On-time last mile delivery: Order assignment with travel time predictors},
	author={Liu, S. and He, L. and Shen, Z.J.M.},
	journal={Management Science.},
	volume={67},
    number={7},
    pages={4095--4119},
    year={2021}
}

@article{Liyanage_Shanthikumar_2005_practical,
  title={A practical inventory control policy using operational statistics},
  author={Liyanage, L.H. and Shanthikumar, G.},
  journal={Operations Research Letters},
  volume={33},
  number={4},
  pages={341--348},
  year={2005},
  publisher={Elsevier}
}

@article{Qi_2023practical,
  title={A practical end-to-end inventory management model with deep learning},
  author={Qi, M. and Shi, Y. and Qi, Y. and Ma, C. and Yuan, R. and Wu, D. and Shen, Z. M.},
  journal={Management Science},
  volume={69},
  number={2},
  pages={759--773},
  year={2023},
  publisher={INFORMS}
}

@article{Siege_Wagner_l2020profit,
  title={Profit Estimation Error in the Newsvendor Model Under a Parametric Demand Distribution},
  author={Siegel, A. F and Wagner, M. R},
  journal={Management Science.},
  volume={67},
  number={8},
  pages={4863--4879},
  year={2021},
  publisher={INFORMS}
}

@article{Wang_Chen_Wang_2024Contextual,
  title={Contextual optimization under covariate shift: A robust approach by intersecting {W}asserstein balls},
  author={Wang, Tianyu and Chen, Ningyuan and Wang, Chun},
  pages={\textit{Available at arXiv: 2406.02426}},
  year={2024}
}

@article{Hertog_Postek_2016_bridging,
  title={Bridging the gap between predictive and prescriptive analytics-new optimization methodology needed.},
  author={{den~Hertog}, D. and Postek, K.},
  year = {2016},
  pages = {\textit{Available at Optimization Online}}
}

@phdthesis{Mundru_2019predictive,
  title={Predictive and prescriptive methods in operations research and machine learning: an optimization approach},
  author={Mundru, N.},
  year={2019},
  school={Massachusetts Institute of Technology}
}

@article{Athey_Tibashirani_Stefan_2019generalized,
	title={Generalized Random Forests},
	author={Athey, S. and Tibshirani, J. and Wager, S.},
	journal={Annals of Statistics},
	volume={47},
	number={2},
	pages={1148--78},
	year={2019}
}

@article{BenTal_etal_2013robust,
  title={Robust solutions of optimization problems affected by uncertain probabilities},
  author={Ben-Tal, A. and {Den~Hertog}, D. and {De~Waegenaere}, A. and Melenberg, B. and Rennen, G.},
  journal={Management Science},
  volume={59},
  number={2},
  pages={341--357},
  year={2013},
  publisher={INFORMS}
}

@article{Delage_Ye2010distributionally,
	title={Distributionally robust optimization under moment uncertainty with application to data-driven problems},
	author={Delage, E. and Ye, Y.},
	journal={Operations Research},
	volume={58},
	number={3},
	pages={595--612},
	year={2010},
	publisher={INFORMS}
}

@article{Esfahani_Kuhn2018data,
	title={Data-driven distributionally robust optimization using the {W}asserstein metric: Performance guarantees and tractable reformulations},
	author={{Mohajerin~Esfahani}, P. and Kuhn, D.},
	journal={Mathematical Programming},
	volume={171},
    number={1},
    pages={115--166},
    year={2018},
    publisher={Springer}
}

@article{Gao_Chen_Kleywegt_2017_wasserstein,
	title={Wasserstein distributional robustness and regularization in statistical learning},
  author={Gao, Rui and Chen, Xi and Kleywegt, Anton J},
  journal={Available at arXiv:1712.06050},
  volume={2},
  number={4},
  year={2017}
}

@inproceedings{Kingma_Ba_2015_Adam,
  author    = {Diederik P. Kingma and Jimmy Ba},
  title     = {Adam: A Method for Stochastic Optimization},
  booktitle = {International Conference on Learning Representations},
  year      = {2015}
}

@inproceedings{Amos_Kolter_2017_ICNN,
  title={Input convex neural networks},
  author={Amos, Brandon and Xu, Lei and Kolter, J Zico},
  booktitle={International Conference on Machine Learning},
  pages={146--155},
  year={2017},
  organization={PMLR}
}

@inproceedings{Wong_Kolter_2018_Polytope,
  title={Provable defenses against adversarial examples via the convex outer adversarial polytope},
  author={Wong, Eric and Kolter, Zico},
  booktitle={International Conference on Machine Learning},
  pages={5286--5295},
  year={2018},
  organization={PMLR}
}

@article{Bertsimas_Copenhaver_2018characterization,
  title={Characterization of the equivalence of robustification and regularization in linear and matrix regression},
  author={Bertsimas, D. and Copenhaver, M.S.},
  journal={European Journal of Operational Research},
  volume={270},
  number={3},
  pages={931--942},
  year={2018},
  publisher={Elsevier}
}

@article{Bertsimas_Dunn_2017_classification,
  title={Optimal Classification Trees},
  author={Bertsimas, D. and Dunn, J.},
  journal={Machine Learning},
  volume={106},
  number={7},
  pages={1039--82},
  year={2017}
}

@article{Xu_Caramanis_Mannor_2010robust,
  title={Robust regression and {LASSO}},
  author={Xu, H. and Caramanis, C. and Mannor, S.},
  journal={IEEE Transactions on Information Theory},
  volume={56},
  number={7},
  pages={3561--74},
  year={2010}
}

@article{Poursoltani_Delage_2022adjustable,
  title={Adjustable Robust Optimization Reformulations of Two-Stage Worst-case Regret Minimization Problems},
  author={Poursoltani, M. and Delage, E.},
  journal={Operations Research},
  volume={70},
  number={5},
  pages={2906--2930},
  year={2022},
  publisher={INFORMS}
}

@article{Perakis_Roels_2008regret,
  title={Regret in the newsvendor model with partial information},
  author={Perakis, G. and Roels, G.},
  journal={Operations Research},
  volume={56},
  number={1},
  pages={188--203},
  year={2008},
  publisher={INFORMS}
}

@article{Blanchet_Kang_Karthyek_2019robust,
	title={Robust {W}asserstein profile inference and applications to machine learning},
	author={Blanchet, J. and Kang, Y. and Murthy, K.},
	journal={Journal of Applied Probability},
	volume={56},
	number={3},
	pages={830--857},
	year={2019},
	publisher={Cambridge University Press}
}

@article{Donti_Amos_Kolter_2017task,
  title={Task-based end-to-end model learning in stochastic optimization},
  author={Donti, P. and Amos, B. and Kolter, J.Z.},
  journal={Advances in Neural Information Processing Systems},
  volume={30},
  pages={5484--5494},
  year={2017}
}

@article{Deng_Sen_2018learning,
	title={Learning Enabled Optimization: Towards a Fusion of Statistical Learning and Stochastic Programming.},
	author={Deng, Y. and Sen, S.},
	pages={\textit{Available at Optimization Online}},
	year={2018},
note = {}
}

@article{Hu_Kallus_Mao_2022fast,
  title={Fast rates for contextual linear optimization},
  author={Hu, Y. and Kallus, N. and Mao, X.},
  journal={Management Science},
  volume={68},
  number={6},
  pages={4236--4245},
  year={2022},
  publisher={INFORMS}
}

@article{Kallus_Mao_2023stochastic,
  title={Stochastic optimization forests},
  author={Kallus, N. and Mao, X.},
  journal={Management Science},
  volume={69},
  number={4},
  pages={1975--1994},
  year={2023},
  publisher={INFORMS}
}

\newpage
\setcounter{page}{1}
{\centering\noindent\large\textbf{E-Companion of \\``Decision-Driven Regularization: A Blended Model for Predict-then-Optimize"\\}}
\vspace{0.5cm}

\renewcommand\thefigure{\thesection.\arabic{figure}}    
\setcounter{figure}{0}

\renewcommand{\thesection}{EC\arabic{section}}
\setcounter{section}{0}

\setcounter{equation}{0}
\renewcommand{\theequation}{\thesection.\arabic{equation}}

\setcounter{proposition}{0}
\renewcommand{\theproposition}{\thesection.\arabic{proposition}}
    
\setcounter{table}{0}
\renewcommand{\thetable}{\thesection.\arabic{table}}

\begin{APPENDICES}

\section{Omitted Proofs}\label{append.proofs}
In this segment, we present all deferred proofs from the main text.

\subsection{Proof of Claim 1 in Illustration \ref{illust.bad_weights}}\label{Proof of Claim 1}

\rvedit{The least squares minimization problem for OLS has the solution: $\bm W^ = (\bm X^{\top} \bm X)^{-1} (\bm X^{\top} \bm Z)$, where $\bm X = \begin{bmatrix} \mbt x_{1}^{\top} & \mbt x_{2}^{\top} & \mbt x_{3}^{\top} & \mbt x_{4}^{\top} \end{bmatrix}^{\top}$ and $\bm Z = (\mbt z_{1}^{\top},\mbt z_{2}^{\top},\mbt z_{3}^{\top},\mbt z_{4}^{\top})^{\top}$. It is easy to verify that these equations return $\mbh W^{\scriptscriptstyle{\text{SLO}}} = \mbc W = \begin{bmatrix} 1 & 0 \\ 0 & 1.5 \end{bmatrix}$. Hence, SLO picks route $b$ if and only if $1.5\hat{x}_2 \leq \hat{x}_1$.

To verify the conclusion for ILO, note that for any given feature vector $\bmh x$ and coefficient matrix $\bm W$, route $b$ is chosen if and only if $(\bm W \bmh x)_{1} - (\bm W \bmh x)_{2} = w_{11}\hat x_1 + w_{12}\hat x_2 - w_{21}\hat x_1 - w_{22}\hat x_2 \geq 0$. For convenience, let $\bar{w}:= w_{11}-w_{21}$ and $\underbar{$w$} := w_{22}-w_{12}$. Hence, the optimal decision $\bm y^\star$ will pick route $b$ if and only if $\underbar{$w$}\hat x_2 \leq \bar{w} \hat x_1$.  

For ILO, in order to get the minimum objective, ILO chooses $\mbh W^{\scriptscriptstyle{\text{ILO}}}$ such that the smaller of the two components of $\mbt{z}_{n}$ is chosen. 
Specifically, to minimize ILO loss, we need to pick route $b$ for data points 1, 3 and 4, but pick route $a$ for data point 2. Hence, optimal $\mbh W^{\scriptscriptstyle{\text{ILO}}}$ is attained when 
\begin{align*}
    \bar{w} - \underbar{$w$} \geq 0, \\
    2 \bar{w} - \underbar{$w$} \geq 0, \\
    \bar{w} - 2\underbar{$w$} \leq 0, \\
    3\bar{w} - 2\underbar{$w$} \geq 0.
\end{align*}
Subtracting the LHS of the first inequality from the third one, we have $\bar w-\underbar{$w$}- (\bar w-2\underbar{$w$})\ge0$, i.e.,  $\underbar{$w$} \geq 0$. Then the above four inequalities are equivalent to the condition  $1 \leq \bar{w} / \underbar{$w$} \leq 2$.
In other words, any matrix $\mbh W^{\scriptscriptstyle{\text{ILO}}}$ satisfying $1 \leq \bar{w} / \underbar{$w$} \leq 2$, $\underbar{$w$} \geq 0$ is optimal for the ILO loss function. Thus, ILO chooses route $b$ if and only if $\hat x_2 \leq (\bar{w} / \underbar{$w$}) \hat x_1$ for some $1 \leq \bar{w} / \underbar{$w$} \leq 2$ that is degenerate and indifferentiable in-sample under the ILO loss. \hfill\Halmos
}

\subsection{Proof of Proposition \ref{prop.bv}}
\begin{align*}
 & \E_{\mathcal{D}^N}\big[\big(\bm y^\top\E_{\bm z|\bm x}[\bm z]-\bm y^\top\bm{f}(\bm x; \hat{\bm w})\big)^2\big]\\
 =  & \E_{\mathcal{D}^N}\big[\big(\bm y^\top g(\bm x)-\bm y^\top\bm{f}(\bm x; \hat{\bm w})\big)^2\big]  \\
 = &   \E_{\mathcal{D}^N}\big[\bm y^\top g(\bm x)\big]^2-2\E_{\mathcal{D}^N}\big[\bm y^\top g(\bm x)\bm y^\top\bm{f}(\bm x; \hat{\bm w})\big]+\E_{\mathcal{D}^N}\big[(\bm y^\top\bm{f}(\bm x; \hat{\bm w}))^2\big]  \\
 = &[\bm y^\top g(\bm x)]^2-2\bm y^\top g(\bm x)\E_{\mathcal{D}^N}[\bm y^\top\bm{f}(\bm x; \hat{\bm w})]+\E_{\mathcal{D}^N}\big[(\bm y^\top\bm{f}(\bm x; \hat{\bm w}))^2\big] -(\E_{\mathcal{D}^N}\big[\bm y^\top\bm{f}(\bm x; \hat{\bm w})\big])^2+(\E_{\mathcal{D}^N}\big[\bm y^\top\bm{f}(\bm x; \hat{\bm w})\big])^2\\
 =& \big[\bm y^\top g(\bm x) - \E_{\mathcal{D}^N}\bm y^\top\bm{f}(\bm x; \hat{\bm w}) \big]^2+\E_{\mathcal{D}^N}\big[(\bm y^\top\bm{f}(\bm x; \hat{\bm w}))^2\big] -(\E_{\mathcal{D}^N}\big[\bm y^\top\bm{f}(\bm x; \hat{\bm w})\big])^2.
\end{align*}
where the first equality holds due to $\E_{\bm z|\bm x}[\bm z]=g(\bm x)$, the third equality holds because $\bm y$ and $\bm x$ are deterministic with respect to the data $\mathcal{D}^N$ and the last equality comes from rearranging terms.

\subsection{Proof of Proposition \ref{prop.valueequiv}}

Let $(\alpha_1, \alpha_2)$ be the dual variables, then the Lagrangian relaxation of Problem \eqref{eq.valueconst} is given by:
\begin{align*}
&\min\limits_{\bm y \in \cy}  \quad \Big\{\bm y^\top\bm{f}(\bmt x, \bm w) + \alpha_1 \Big( \bm y^\top \mbt{z} - \bm y^\top\bm{f}(\bmt x, \bm w) - \gamma \Big) + \alpha_2 \Big(- \bm y^\top \mbt{z} + \bm y^\top\bm{f}(\bmt x, \bm w) - \gamma \Big)\Big\}\\
=& \min\limits_{\bm y \in \cy} \quad \Big\{(\alpha_1 - \alpha_2) \bm y^\top \mbt{z} + (1 - \alpha_1 + \alpha_2) \bm y^\top\bm{f}(\bmt x, \bm w) - \gamma (\alpha_1 + \alpha_2)\Big\}.
\end{align*}
Setting $\mu = \alpha_1 - \alpha_2$, we recover the statement of the Proposition \ref{prop.valueequiv}, where the term associated with $\gamma$ is dropped as it do not influence the choice of decisions $\bm{y}$. 

\subsection{Proof of Proposition \ref{prop.compute}}
When $c(\bm y; \bm z) = \bm y^\top \bm z$ and $\cy = \{\mb{A}\bm{y} \leq \mb{b}, \bm{y} \geq \mb{0}\}$, the inner supreme problem of the DDR  becomes 
\begin{align*}
    \max\limits_{\bm y_n \geq 0}~& \bm{y}_n^\top\big(-\mu\bmt z_n - (1-\mu)\bm{f}(\mbt x_n;\bm{w}) \big)\\
    {\rm s.t.}~& \bm A \bm y_n \leq \bm b, 
\end{align*}
which is a linear programming problem and the dual of it is
\begin{align*}
    \min\limits_{\bm \kappa_n \geq 0}~& \bm \kappa_n^\top \bm b\\
    {\rm s.t.}~& \bm A^\top \bm \kappa_n \geq -\mu\bmt z_n - (1-\mu)\bm{f}(\mbt x_n;\bm{w}). 
\end{align*}\hfill\Halmos

\subsection{Motivation for the loss function uncertainty set} \label{app.loss function}
The following illustration gives some explanation about why this geometry for the uncertainty set makes sense. 

\begin{illustration}\label{illust.neypear}
\emph{\textbf{[as seen in \cite{Zhu_Xie_Sim_2022joint}]}}.
Suppose $L(\bm{w}) = -2\log\Big(\prod_n p_{\bm{z}|\bm{x}}(\bm{z}_n; \mb{x}_n, \bm{w})\Big)$ was chosen as the log-likelihood function, where $p_{\bm{z}|\bm{x}}$ is the density of $\bm{z}|\mb{x}$. Then the uncertainty set $\mathcal{U}(\rho) := \{\bm{w}: L(\bm{w}) - L(\bmt w) \leq \rho\}$ reduces to the set of all weights $\bm{w}$, which likelihood ratio against $\mbt{w}$ is no greater than some bound:
\begin{equation}
   \cu(\rho) := \left\{\bm{w}: LR(\bmt w; \bm{w}) := \frac{\prod\limits_{n}p_{\bm{z}|\bm{x}}(\bm{z}_n; \mb{x}_n, \bmt w)}{\prod\limits_{n}p_{\bm{z}|\bm{x}}(\bm{z}_n; \mb{x}_n, \bm{w})} \leq e^{\rho/2}\right\}.
\end{equation}

Suppose we interpret the null hypothesis as $\mathcal{H}_0 : \mbc{w} = \mbt w$ and the alternative hypothesis as $\mathcal{H}_1 : \mbc{w} = \bm{w} \neq \mbt w$. Then Neyman-Pearson Lemma gives that there is some confidence level $\alpha(\rho)$, corresponding to $e^{\rho/2}$, under which, the likelihood ratio test grants the highest power, that is, probability of rejecting $\mathcal{H}_0$. In other words, this uncertainty set is equivalent to considering the maximal set of all weights $\bm{w}$ that, if true, have any chance at all of rejecting $\mbt{w}$ under the significance level corresponding to $e^{\rho/2}$, given the existing dataset. 

\end{illustration}

\subsection{Proof of Theorem \ref{thm.rddr} }

Before proving Theorem \ref{thm.rddr}, we first prove the following lemma.

\begin{lemma} \label{lemma.concavity of valuation function}
The valuation function $v_{\mu}(\bm{w})$ is concave in $\bm{w}$. 
\end{lemma}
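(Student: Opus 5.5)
The plan is to write the valuation function as a pointwise infimum of concave functions of $\bm w$ and use the fact that such an infimum stays concave. The setting is that of Theorem~\ref{thm.rddr}: $\bm f(\bm x;\bm w)$ is concave in $\bm w$ for every $\bm x\in\cx$, $\cy$ is convex and compact, and $\mu\in[0,1)$. For the sample version,
\[
\nu_\mu(\bm w)=\frac{1}{N}\sum_{n\in[N]}\min_{\bm y_n\in\cy}\Big\{\mu\,\bm y_n^\top\mbt z_n+(1-\mu)\,\bm y_n^\top\bm f(\mbt x_n;\bm w)\Big\}.
\]
A nonnegative average of concave functions is concave, so it is enough to treat one data point $n$ at a time.

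Fix $n$ and a decision $\bm y\in\cy$, and set $h_{\bm y}(\bm w):=\mu\,\bm y^\top\mbt z_n+(1-\mu)\,\bm y^\top\bm f(\mbt x_n;\bm w)$. The first term does not depend on $\bm w$. For the second term we need $\bm y^\top\bm f(\mbt x_n;\cdot)=\sum_j y_j f_j(\mbt x_n;\cdot)$ to be concave. This holds when $\bm y\ge\bm 0$, since then it is a nonnegative combination of concave components. It holds for every $\bm y$ when $\bm f$ is affine in $\bm w$. Since $1-\mu>0$, $h_{\bm y}$ is then concave in $\bm w$.

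The inner minimum is $\min_{\bm y\in\cy}h_{\bm y}(\bm w)$, a pointwise infimum of the family $\{h_{\bm y}\}_{\bm y\in\cy}$. It can be checked directly: for any $\bm w_1,\bm w_2$ and $\theta\in[0,1]$, and for each fixed $\bm y$,
\[
h_{\bm y}(\theta\bm w_1+(1-\theta)\bm w_2)\ge\theta h_{\bm y}(\bm w_1)+(1-\theta)h_{\bm y}(\bm w_2)\ge\theta\min_{\bm y'}h_{\bm y'}(\bm w_1)+(1-\theta)\min_{\bm y'}h_{\bm y'}(\bm w_2).
\]
Taking the minimum over $\bm y$ on the left gives concavity. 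Compactness of $\cy$ together with continuity of $h_{\bm y}$ in $\bm y$ ensures the minimum is attained and finite, so $\nu_\mu$ is a finite-valued concave function. The same argument applies to the expectation form, because expectation preserves concavity.

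The main obstacle is the sign requirement in the second step. Concavity of each coordinate $f_j$ does not make $\bm y^\top\bm f$ concave when some $y_j<0$. I would settle this in one of two ways. The first is to note that in the intended settings $\cy\subseteq\mathbb R^d_+$: the polyhedron of Proposition~\ref{prop.compute}, and the shortest path and knapsack problems. The second is to observe that in the linear/affine case emphasised after Theorem~\ref{thm.rddr}, $h_{\bm y}$ is affine in $\bm w$ for every $\bm y$, so no sign condition is needed. I would state explicitly whichever of these is assumed. The restriction $\mu<1$ is used exactly at this step, since it guarantees $1-\mu>0$.
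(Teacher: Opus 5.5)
Your proposal is correct and follows the same route as the paper's proof: for each fixed $\bm y\in\cy$ the function $\mu\,\bm y^\top\mbt{z}+(1-\mu)\,\bm y^\top\bm f(\mbt{x};\bm w)$ is concave in $\bm w$, and concavity is preserved under the pointwise infimum over $\bm y$. Your sign caveat improves on the paper, which asserts that $\bm y^\top\bm f(\bm x;\bm w)$ is concave in $\bm w$ without noting that this needs $\cy\subseteq\mathbb{R}^d_+$ or an affine $\bm f$; one small correction is that this step only needs $1-\mu\ge 0$, so $\mu\le 1$ suffices rather than strictly $\mu<1$.
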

\noindent \textit{Proof}.   
Under the assumption that $\bm f(\bm x; \bm w)$ is concave in $\bm w$ for all $\bm x \in \mathcal{X}$, we see that $\bm y^\top \bm{f}(\bm x, \bm w) $ is convex in $\bm y$ and concave in $\bm w$ for all $\bm x$. It follows that $\bm y^\top \bmt z$ is convex in $\bm y$. Now for any $\mu\in[0,1]$, it is easy to see that the objective function $\mu \bm y^\top \bmt z + (1 - \mu) \bm y^\top\bm{f}(\bm x; \bm w)$ is convex in $\bm y$ and concave in $\bm{w}$. The concavity of $v_{\mu}(\bm{w})$ over $\bm{w}$ arises from the infimum operator over $\bm y$.\Halmos

Now we are ready to prove Theorem \ref{thm.rddr}.
For \ref{model:ddr_empirical}, we have
\begin{align}
    &\min\limits_{\bm{w}} \ \Big\{L(\bm w) - \lambda  \frac{1}{N}\sum_{n \in [N]} \min\limits_{\bm y_n\in\cy} \Big\{ \mu \bm y_n^\top \bmt z_n + (1-\mu) \bm y_n^\top\bm{f}(\bmt x_n; \bm w) \Big\}\Big\} \label{model::DDR reformulated}\\
    &=\min\limits_{\bm{w}}\, \bigg\{\frac{1}{N}\sum_{n \in [N]} \max\limits_{\bm y_n \in\cy}  \ \Big\{\ell\big(\bmt z_n;\bm{f}(\bmt x_n; \bm w)\big) - \lambda \Big( \mu \bm y_n^\top \bmt z_n + (1-\mu) \bm y_n^\top\bm{f}(\bmt x_n; \bm w)\Big) \Big\} \bigg\}\nonumber.
\end{align}

In what follows, we denote $\sigma = L(\bmt w)$, which is a known constant given the data. For~\ref{model:robustness_ddr}, we can write the Lagrangian of it as
\begin{align}
    &\max\limits_{\bm y_n\in\cy,\, n \in [N], \alpha \geq 0} \min\limits_{\bm w} \,\,\,\,\,\,  \quad  \alpha\Big(L(\bm w)-\sigma-\rho\Big) - \frac{1}{N}\sum_{n \in [N]} \Big[\mu \bm y_n^\top \bmt z_n + (1-\mu) \bm y_n^\top\bm{f}(\bmt x_n; \bm w) \Big]  \nonumber\\
    &=\quad\min\limits_{\bm w}\max\limits_{\bm y_n\in\cy,\, n \in [N], \alpha \geq 0} \quad \alpha\Big(L(\bm w)-\sigma-\rho\Big) - \frac{1}{N}\sum_{n \in [N]} \Big[\mu \bm y_n^\top \bmt z_n + (1-\mu) \bm y_n^\top\bm{f}(\bmt x_n; \bm w) \Big]  \label{eqn.rddr_ref}\\
    &=\quad\min\limits_{\bm w}\frac{1}{N}\sum_{n \in [N]}\max\limits_{\bm y_n\in\cy, \alpha \geq 0} \quad  \Big\{\alpha \ell\big(\bmt z_n;\bm{f}(\bmt x_n; \bm w)\big) -  \mu \bm y_n^\top \bmt z_n - (1-\mu) \bm y_n^\top\bm{f}(\bmt x_n; \bm w) \Big\} - \alpha(\sigma + \rho)\nonumber.
\end{align}

Notice that $\bm w = \bmt w$ is always a feasible solution by assumption. Moreover, $\rho > 0$, hence $\bm w = \bmt w$ is an interior point in the feasibility region. Additionally, the objective function of the inner maximization problem permits convexity in $\bm w$ and concavity in $(\bm y_n, \alpha)$.
This achieves Slater's condition, and therefore strong duality holds. It is easy to see that the solution of Problem \eqref{model::DDR reformulated} coincides with that of Problem \eqref{eqn.rddr_ref} when letting $\alpha^\star = 1/\lambda$.

\begin{remark}\label{rem.mu=1}
Here, we make a quick comment about the limiting case $\mu = 1$. Notice that if $\mu = 1$, the regularization term does not involve the weights $\bm{w}$ in any form, hence the argmin obtained would coincide with $\displaystyle\argmin_{\bm{w}} L(\bm{w})$, which is just \mbh{w}. However, it is not clear, that given a fixed $\lambda$ bounded away from $0$, that as $\mu \nearrow 1$, it is necessary for $\bm{w}(\mu) \rightarrow \mbh{w}$ uniformly. This is because the behaviour of $\mu$ is asymptotic at $\mu = 1$; beyond $\mu = 1$, there are no consistent ways for ensuring that Lemma~\ref{lemma.concavity of valuation function} holds. Indeed, we see from our numerical simulations that as $\mu$ gets closer to $1$, we do not recover $\mbh{w}$, as long as $\lambda$ does not also uniformly decrease to $0$. \hfill\Halmos
\end{remark}

\subsection{Proof of Proposition \ref{prop.jeroequiv} and Corollary \ref{coro.rddr}}

We prove both the proposition and the corollary simultaneously. 

\begin{definition}[Robustness DDR]\label{model.rddr}
The Robustness Decision-driven Regularization (RDDR, for short) model is defined as the following problem for $\mu\in[0,1)$:
\begin{align}
\max~ &\quad \rho \nonumber \\
\mbox{s.t.}~ & \quad \frac{1}{N}\sum_{n \in [N]}\Big[\mu \bm y_n^\top \bmt z_n + (1 - \mu) \bm y_n^\top\bm{f}(\bmt x_n; \bm w) \Big] \leq \tau~ &&\forall \bm{w} \in\cu(\rho)  \label{model::rddr}\tag{\code{RDDR}}\\
& \quad \rho>0;\, \bm y_n \in\cy &&\forall n \in [N]\nonumber,
\end{align}
\end{definition}

Note that when $\mu = 0$, it recovers the \ref{model:jerolike} model. Once again, denote $\sigma = L(\bmt w)$. Now for the robust counterpart of the first constraint, it can be written as:
\begin{align*}
\max_{\bm w} & \quad \frac{1}{N}\sum_{n \in [N]}\Big[\mu \bm y_n^\top \bmt z_n + (1 - \mu) \bm y_n^\top\bm{f}(\bmt x_n; \bm w) \Big]\\
\text{s.t.} & \quad L(\bm{w}) \le \sigma + \rho,
\end{align*}
the dual of which is 
\begin{align*}
\min_{\alpha \geq 0} \max_{\bm{w}} & \quad \frac{1}{N}\sum_{n \in [N]}\Big[\mu \bm y_n^\top \bmt z_n + (1 - \mu) \bm y_n^\top\bm{f}(\bmt x_n; \bm w) \Big] + \alpha \Big(\sigma + \rho - L(\bm{w})\Big).
\end{align*}

Note that $\bm w = \bmt w$ is feasible by definition. Moreover, since $\rho > 0$, $\bm w = \bmt w$ is an interior point. Hence, Slater's condition is achieved, and strong duality holds. Further, $\rho > 0$ suffices to consider only $\alpha > 0$. Hence, using $\alpha = 1/\beta$ where $\beta>0$,
\begin{align*}
\min_{\alpha > 0} \max_{\bm{w}} & \quad \frac{1}{N}\sum_{n \in [N]}\Big[\mu \bm y_n^\top \bmt z_n + (1 - \mu) \bm y_n^\top\bm{f}(\bmt x_n; \bm w) \Big] + \alpha(\sigma + \rho - L(\bm{w}))\le \tau && \\
\Leftrightarrow\min_{\tfrac{1}{\beta}}\max_{\bm{w}}& \quad \beta \frac{1}{N}\sum_{n \in [N]}\Big[\mu \bm y_n^\top \bmt z_n + (1 - \mu) \bm y_n^\top\bm{f}(\bmt x_n; \bm w) \Big] + \sigma + \rho - L(\bm{w})-\beta \tau\le 0
\end{align*}
it follows that
\begin{equation*}
\rho \le  \max_{\tfrac{1}{\beta}}\min_{\bm{w}}{-\beta \frac{1}{N}\sum_{n \in [N]} \Big[\mu \bm y_n^\top \bmt z_n + (1 - \mu) \bm y_n^\top\bm{f}(\bmt x_n; \bm w)\Big] - \sigma + L(\bm{w}) + \beta \tau}.
\end{equation*}
This allows us to reformulate the \ref{model.rddr} problem as:
\begin{equation*}
\max_{\bm y_n\in\cy,\, n \in [N], \tfrac{1}{\beta}}\min_{\bm{w}}{-\beta \frac{1}{N}\sum_{n \in [N]} \Big[\mu \bm y_n^\top \bmt z_n + (1 - \mu) \bm y_n^\top\bm{f}(\bmt x_n; \bm w) \Big]-\sigma +L(\bm{w})+\beta \tau}.
\end{equation*}

On the other hand, by Theorem \ref{thm.rddr}, \ref{model:ddr_empirical} can be reformulated as:
\begin{equation*}
 \max_{\bm y_n \in \cy,\, n \in [N]} \min_{\bm{w}}\Big\{L(\bm{w}) + \lambda \frac{1}{N}\sum_{n \in [N]} \Big[\tau  - \mu \bm y_n^\top \bmt z_n + (1 - \mu) \bm y_n^\top\bm{f}(\bmt x_n; \bm w) \Big]\Big\}.
\end{equation*}

It follows that by setting $\lambda = \beta^\star$---the optimal $\beta$ attained under the reformulated \ref{model::rddr}---the solution of \ref{model::rddr} coincides with that of~\ref{model:ddr_expectation_form}. \hfill\Halmos

\begin{remark}
Theorem \ref{thm.rddr} guarantees that, out-of-sample, if the optimal objective value of Robust-DDR is $\Gamma^\star$, then $v_{\mu}(\bmc w) \leq \Gamma^\star$, with probability $\mathbb{P}[\bmc w \in \mathcal{U}(\rho)]$, 
is related to the power of the learning model if $L$ is the log-likelihood function. This is almost as good as saying that the out-of-sample costs are upper bounded with high probability, until we recall that the valuation function is only a surrogate for the true cost function.
\end{remark}

\subsection{Proof of Theorem \ref{thm.regret ddr}}
Define
\begin{equation*}
    \cz^\star(\bm w) = \left\{ \bmc z := \big\{\bmc z_n \in \bbr^s \big\}_{ n\in [N]} ~\left|~
    \begin{aligned}
        &\exists~ \bm y_n\neq \bm y^\star(\bmc z_n),\,\, n \in [N]:\\
        &\bigg|\frac{1}{N} \sum_{n \in [N]} \Big[\ell( \bmc z_n; \bmt z_n) - \ell(\bmc z_n;\bm{f}(\bmt x_n; \bm w)) \Big]\bigg| \leq t\\
        &\Big|\bm y_n^\top \bmc z_n - \bm y_n^\top \bmt z_n\Big| \leq \phi \quad && \forall n\in[N] \\
        &\Big|\bm y_n^\top \bmc z_n -  \bm y_n^\top\bm{f}(\bmt x_n; \bm w)\Big| \leq \psi \quad &&\forall n\in[N]\\
        &\Big|\bm y^\star(\bmc z_n)^\top \bmc z_n - \bm y^\star(\bmc z_n)^\top \bmt z_n\Big| \leq \phi &&\forall n\in[N]\\
        &\Big|\bm y^\star(\bmc z_n)^\top \bmc z_n -  \bm y^\star(\bmc z_n)^\top\bm{f}(\bmt x_n; \bm w)\Big| \leq \psi  &&\forall n\in[N]\\
        &\Big|\bm y^\star(\bmc z_n)^\top \bmt z_n -  \bm y^\star(\bmt z_n)^\top \bmt z_n \Big| \leq \eta &&\forall n\in[N]
    \end{aligned}\right. 
    \right\}.
\end{equation*}
Notice that $\cz^\star(\bm w)$ is just $\cz(\bm w)$, where we preserve the global condition and then demand that good estimates of the cost function are satisfied on two decision points, $\bm y^\star(\bm z_n)$ and $\bm y_n$, which will be the optimal recourse solution obtained for the optimization problem at the end. As such, $\cz^\star(\bm w)$ relaxes the conditions for $\bm z_n$, thus $\cz^\star(\bm w) \supseteq \cz(\bm w)$, and the supremum over the latter is bounded above by the supremum of the former. Henceforth, we are interested in bounding the supremum over $\cz^\star(\bm w)$.

For ease of notations, hereinafter in this proof, we let $\displaystyle\bmh y_n \in \argmin_{\bm y \in \cy} \bm y^\top\bm{f}(\bmt x_n;\bm w), \bmc y_n \in \argmin_{\bm y \in \cy} \bm y^\top \bmc z_n$, and $\displaystyle \bmt y_n \in \argmin_{\bm y \in \cy} \bm y^\top \bmt z_n$ for $n \in [N]$. In what ensues, we shall bound the supremum by decomposing $\cz^\star(\bm w)$ into two steps. Denote 
\begin{equation*}
    \cz_0(\bm w) = \left\{ \bmc z := \big\{\bmc z_n \in \bbr^s \big\}_{ n\in [N]} ~\left|~
    \begin{aligned}
        &\Big|\bmc y_n^\top \bmc z_n - \bmc y_n^\top \bmt z_n\Big| \leq \phi && \forall n\in[N]\\
        &\Big|\bmc y_n^\top \bmc z_n -  \bmc y_n^\top\bm{f}(\bmt x_n; \bm w)\Big| \leq \psi  && \forall n\in[N]\\
        &\Big|\bmc y_n^\top \bmt z_n -  \bmt y_n^\top \bmt z_n\Big| \leq \eta  && \forall n\in[N]
    \end{aligned}\right. 
    \right\},
\end{equation*}
and 
\begin{equation*}
    \cz_1(\bm w) = \left\{ \bmc z := \big\{\bmc z_n \in \bbr^s \big\}_{ n\in [N]} ~\left|~
    \begin{aligned}
        &\exists~ \bm y_n\neq \bmc y_n,\,\, n \in [N]:\\
        &\Big|\bm y_n^\top \bmc z_n - \bm y_n^\top \bmt z_n\Big| \leq \phi \quad && \forall n\in[N] \\
        &\Big|\bm y_n^\top \bmc z_n -  \bm y_n^\top\bm{f}(\bmt x_n; \bm w)\Big| \leq \psi \quad &&\forall n\in[N]\\
                &\bigg|\frac{1}{N} \sum_{n \in [N]} \Big[\ell( \bmc z_n; \bmt z_n) - \ell(\bmc z_n;\bm{f}(\bmt x_n; \bm w)) \Big]\bigg| \leq t
    \end{aligned}\right. 
    \right\},
\end{equation*}
such that $\cz^\star(\bm w) = \cz_0(\bm w) \cap \cz_1(\bm w)$. In other words, the worst case regret is:
\begin{align}
     &\displaystyle\max_{\bmc z_n \in \cz(\bm w)} \bigg\{\dfrac{1}{N}\sum_{n\in[N]}\Big[\bmh y_n^\top \bmc z_n -\bmc y_n^\top \bmc z_n \Big] \bigg\} \nonumber \\
     \leq &\displaystyle\max_{\bmc z_n \in \cz^\star(\bm w)} \bigg\{\dfrac{1}{N}\sum_{n\in[N]}\Big[\bmh y_n^\top \bmc z_n -\bmc y_n^\top \bmc z_n \Big] \bigg\} \nonumber \\
     =&\displaystyle\max_{\bmc z_n \in \cz_0(\bm w)\cap\cz_1(\bm w)} \bigg\{\dfrac{1}{N}\sum_{n\in[N]}\Big[\bmh y_n^\top \bmc z_n -\bmc y_n^\top \bmc z_n \Big] \bigg\} \nonumber \\
     =&\displaystyle\max\limits_{ \bmc z_n\in \cz_1(\bm w)}~  \bigg\{\dfrac{1}{N}\sum_{n\in[N]}\Big[\bmh y_n^\top \bmc z_n -\bmc y_n^\top \bmc z_n \Big] \bigg\} \label{eq.firststep} \\
    &\quad {\rm s.t.}~\Big|\bmc y_n^\top \bmc z_n - \bmc y_n^\top \bmt z_n\Big| \leq \phi  \qquad \qquad\quad \forall n \in [N]; \nonumber\\
        &\qquad~~\Big|\bmc y_n^\top \bmc z_n -  \bmc y_n^\top\bm{f}(\bmt x_n; \bm w)\Big| \leq \psi  \qquad\, \forall n \in [N]; \nonumber\\
         &\qquad~~\Big|\bmc y_n^\top \bmt z_n -  \bmt y_n^\top \bmt z_n\Big| \leq \eta \qquad\qquad\quad\, \forall n \in [N].\nonumber 
\end{align}

Here, we consider the Lagrangian relaxation of (\ref{eq.firststep}). Let $\bm \alpha = (\bm\alpha^1, \bm\alpha^2)$, $\bm \beta = (\bm\beta^1, \bm\beta^2)$, $\bm \gamma = (\bm\gamma^1_n, \bm\gamma^2_n)$ be the Lagrange multipliers. Then, 
\begin{align} 
&\max_{\bmc z \in \cz^\star(\bm w)} \bigg\{\dfrac{1}{N}\sum_{n\in[N]}\Big[\bmh y_n^\top \bmc z_n - \bmc y_n^\top \bmc z_n \Big] \bigg\} \nonumber \\ 
\leq &\min_{\bm \alpha, \bm \beta, \bm\gamma \ge \bm 0} \max_{\bmc z\in\mathcal{Z}_1(\bm w)} \quad \bigg\{ 
  \dfrac{1}{N}\sum_{n\in[N]} \Big[ \bmh y_n^\top \bmc z_n-\bmc y_n^\top \bmc z_n \label{eq.lagrange1}\\
  &\quad+\alpha^1_n \Big( \phi - \bmc  y_n^\top \bmc z_n + \bmc  y_n^\top \bmt z_n \Big)
    +\alpha^2_n \Big( \phi + \bmc  y_n^\top \bmc z_n -   \bmc  y_n^\top \bmt z_n \Big) \nonumber\\
&\quad + \beta^1_n \Big( \psi - \bmc  y_n^\top \bmc z_n + \bmc y_n^\top\bm{f}(\bmt x_n; \bm w)\Big)
    + \beta^2_n \Big( \psi + \bmc  y_n^\top \bmc z_n - \bmc y_n^\top\bm{f}(\bmt x_n; \bm w)\Big)\nonumber\\
     &\quad + \gamma^1_n \Big( \eta - \bmc y_n^\top \bmt z_n + \bmt y_n^\top \bmt z_n\Big)
    + \gamma^2_n \Big( \eta + \bmc y_n^\top \bmt z_n - \bmt y_n^\top \bmt z_n\Big) \Big] \nonumber \bigg\}\\
= &\min_{\bm \alpha, \bm \beta, \bm\gamma \ge \bm 0}\max_{\bmc z\in\mathcal{Z}_1(\bm w)} \quad \bigg\{\dfrac{1}{N}\sum_{n\in[N]} \Big[\bmh y_n^\top \bmc z_n 
+ (\alpha^2_n-\alpha^1_n + \beta^2_n -\beta^1_n - 1) \bmc  y_n^\top \bmc z_n  
\nonumber\\
& \quad  + (\alpha^1_n - \alpha^2_n - \gamma^1_n + \gamma^2_n) \bmc  y_n^\top \bmt z_n + (\beta^2_n-\beta^1_n) \Big( \bmh y_n^\top\bm{f}(\bmt x_n; \bm w)-\bmc y_n^\top\bm{f}(\bmt x_n; \bm w) \Big) \nonumber\\
& \quad -(\beta^2_n-\beta^1_n) \bmh y_n^\top\bm{f}(\bmt x_n; \bm w) + (\gamma^1_n - \gamma^2_n) \bmt y_n^\top \bmt z_n + (\alpha^1_n + \alpha^2_n)\phi + (\beta^1_n+\beta^2_n)\psi + (\gamma^1_n+\gamma^2_n)\eta  \Big] \bigg\} \nonumber\\
 \le &  \max_{\bmc z\in\mathcal{Z}_1(\bm w)} \quad \bigg\{\dfrac{1}{N}\sum_{n\in[N]} \Big[ \bmh y_n^\top \bmc z_n -2 \bmh y_n^\top\bm{f}(\bmt x_n; \bm w)\Big] + a_0\bigg\}\label{eq.lagrange2}\\
 \le &  \max_{\bmc z\in\mathcal{Z}_1(\bm w);\ \bm y_n \in \cy,  n \in [N]} \quad \bigg\{\dfrac{1}{N}\sum_{n\in[N]} \Big[ \bm y_n^\top \bmc z_n-2 \bm y_n^\top\bm{f}(\bmt x_n; \bm w)\Big]\bigg\} + a_0 \label{eq.lagrange3},
\end{align}
where $\displaystyle a_0 = \phi + 2\psi + \eta + \frac{1}{N}\sum_{n\in[N]}\bmt y_n^\top \bmt z_n$. Here, inequality (\ref{eq.lagrange1}) follows from weak duality. Inequality (\ref{eq.lagrange2}) follows from two facts. First, we make the following choice of dual variables
\[
\big(\alpha^1_n, \alpha^2_n,\beta^1_n, \beta^2_n, \gamma^1_n, \gamma^2_n\big) = \Big(1,0, 0,2, 1,0\Big) \quad n \in [N].
\]
Second, note that $\bmh y_n^\top\bm{f}(\bmt x_n; \bm w)-\bmc y_n^\top\bm{f}(\bmt x_n; \bm w) \leq 0$, as $\bmh y_n\in \argmin_{\bm y_n} \bm y_n^\top\bm{f}(\bmt x_n; \bm w)$ by definition. Finally, inequality (\ref{eq.lagrange3}) follows trivially from the fact that $\bmh y_n \in \cy$ for all $n \in [N]$. 

Having relaxed $\bmh y_n$ to $\bm y_n$, we once again perform another Lagrangian relaxation, this time on $\mathcal{Z}_1(\bm w)$. Let $\bmb \alpha = (\bmb \alpha^1, \bmb \alpha^2)$,  $\bmb \beta = (\bmb \beta^1, \bmb \beta^2)$, $\bmb \gamma = (\bar \gamma^1, \bar \gamma^2)$ be the Lagrange multipliers. It follows that: 
\begin{align} 
&\max_{\bmc z \in \cz_1(\bm w);\ \bm y_n \in \cy, n\in [N]} \bigg\{\dfrac{1}{N}\sum_{n\in[N]} \Big[ \bm y_n^\top \bmc z_n
-2 \bm y_n^\top\bm{f}(\bmt x_n; \bm w)\Big]\bigg\}\nonumber\\
&\leq \min_{\bmb \alpha, \bmb \beta, \bmb \gamma \ge \bm 0} \max_{\bmc z_n;\ \bm y_n \in \cy, n \in [N]} \quad\bigg\{ \dfrac{1}{N}\sum_{n\in[N]} \Big\{ \bm y_n^\top \bmc z_n - 2 \bm y_n^\top\bm{f}(\bmt x_n; \bm w) \label{eq.lagrange4}\\ 
&\quad +\bar \alpha^1_n \Big( \phi - \bm y_n^\top \bmc z_n + \bm y_n^\top \bmt z_n \Big)
    +\bar \alpha^2_n \Big( \phi + \bm y_n^\top \bmc z_n -   \bm y_n^\top \bmt z_n \Big)\nonumber\\
&\quad + \bar \beta^1_n \Big( \psi - \bm y_n^\top \bmc z_n + \bm y_n^\top\bm{f}(\bmt x_n; \bm w)\Big)
    + \bar \beta^2_n \Big( \psi + \bm y_n^\top \bmc z_n - \bm y_n^\top\bm{f}(\bmt x_n; \bm w)\Big) \Big\}\nonumber\\
& \quad + \bar \gamma^1 \Big(t - \frac{1}{N} \sum_{n \in [N]} \big[\ell( \bmt z_n; \bmc z_n) - \ell\big(\bm{f}(\bmt x_n; \bm w); \bmc z\big)\big] \Big)
    + \bar \gamma^2 \Big( t + \frac{1}{N} \sum_{n \in [N]} \big[\ell( \bmt z_n; \bmc z_n) - \ell\big(\bm{f}(\bmt x_n; \bm w); \bmc z\big) \big]\Big)
    \bigg\}\nonumber\\
&= \min_{\bmb \alpha, \bmb \beta, \bmb \gamma \ge \bm 0}\max_{\bmc z_n;\ \bm y_n \in \cy, n \in [N]} \quad \bigg\{\dfrac{1}{N}\sum_{n\in[N]} \Big[ (1-\bar \alpha^1_n + \bar \alpha^2_n - \bar \beta^1_n+\bar \beta^2_n) \bm  y_n^\top \bmc z_n + (\bar \alpha^1_n - \bar \alpha^2_n) \bm  y_n^\top \bmt z_n \nonumber\\
& \quad  + (\bar \beta^1_n-\bar \beta^2_n-2)\bm y_n^\top\bm{f}(\bmt x_n; \bm w)  
+ (\bar \gamma^1 - \bar \gamma^2) \big( \ell\big(\bm{f}(\bmt x_n; \bm w); \bmc z_n\big) - \ell( \bmt z_n; \bmc z_n)\big) \Big] \bigg\}\nonumber\\ 
& \quad + \dfrac{1}{N}\sum_{n\in[N]} \Big[ (\bar \alpha^1_n+\bar \alpha^2_n)\phi + (\bar \beta^1_n+\bar \beta^2_n)\psi  \Big] + (\bar \gamma^1 +\bar \gamma^2)t \nonumber\\
&\leq \bar{\lambda} \frac{1}{N} \sum_{n \in [N]} \ell\big(\bm{f}(\bmt x_n; \bm w); \bmt z_n\big) +  \dfrac{1}{N}\sum_{n \in [N]} \max_{\bm y_n \in \cy} \Big[ -\mu \bm  y_n^\top \bmt z_n-(1-\mu) \bm y_n^\top\bm{f}(\bmt x_n; \bm w) \Big] + a_1\label{eq.lagrange5},
\end{align}
where $a_1 = \phi + 2\psi + t$. The inequality \eqref{eq.lagrange4} is once again due to the weak duality, and inequality \eqref{eq.lagrange5} is because of the triangle inequality assumption on $\ell(\bm u; \bm v)$ and the following choice of dual variables,
\[
\big(\bar \alpha^1_n, \bar \alpha^2_n, \bar \beta^1_n, \bar \beta^2_n, \bar \gamma^1, \bar \gamma^2\big) = \Big( \frac{1-\mu}{2}, \frac{1+\mu}{2}, \frac{3+\mu}{2}, \frac{1-\mu}{2}, \frac{1+\bar{\lambda}}{2}, \frac{1-\bar{\lambda}}{2}\Big) \quad n \in [N]
\]
for some $\mu \in [-1,1)$ and $\bar{\lambda} \in (0,1]$. 

Therefore, we have 
\begin{align*}
	&\max_{\bmc z \in \cz(\bm w)} \bigg\{ \dfrac{1}{N}\sum_{n\in[N]}\Big[\bm y^\star (\bmh z_n)^\top \bmc z_n - \min_{\bm y_n \in \cy}~\bmc y_n^\top \bmc z_n \Big] \bigg\}\\
    \leq &\max_{\bmc z\in\mathcal{Z}_1(\bm w);\ \bm y_n \in \cy,  n \in [N]} \quad \bigg\{\dfrac{1}{N}\sum_{n\in[N]} \Big[ \bm y_n^\top \bmc z_n-2 \bm y_n^\top\bm{f}(\bmt x_n; \bm w)\Big] + a_0 \bigg\}\\
    \leq & \bar{\lambda} \frac{1}{N} \sum_{n \in [N]} \ell\big(\bm{f}(\bmt x_n; \bm w); \bmt z_n\big) -  \dfrac{1}{N}\sum_{n \in [N]} \min_{\bm y_n \in \cy} \Big[ \mu \bm  y_n^\top \bmt z_n+(1-\mu) \bm y_n^\top\bm{f}(\bmt x_n; \bm w) \Big] + a_0 + a_1= \frac{1}{\lambda} V(\bm w) + a,
\end{align*}
with $\lambda  = 1/\bar{\lambda}$ and $\displaystyle a := a_0 + a_1 =  2\phi + 4\psi + \eta  + t + \frac{1}{N}\sum_{n\in[N]}\bmt y_n^\top \bmt z_n$. Notice here that $\displaystyle \frac{1}{N}\sum_{n\in[N]}\bmt y_n^\top \bmt z_n$ is evaluated only over the training data $\bmt z_n$ and hence is a constant given the training dataset.

Hence, in particular, if we choose $\bm w = \bmh w^{\rm DDR}$, we have
\begin{align*}
    \max_{\bmc z \in \cz(\bm w^{\rm DDR})} \bigg\{ \frac{1}{N}\sum_{n\in N} \Big[\bm y^\star(\bmh z_n)^\top \bmc z_n - \min_{\bm y_n \in \cy}\bm y_n^\top \bmc z_n \Big] \bigg\} \leq \frac{1}{\lambda}V(\bmh w^{\rm DDR}) + a.
\end{align*}
This completes the proof. \hfill\Halmos

\subsection{Proof of Proposition \ref{prop.spoequiv}}
When $\displaystyle L(\bm{w}) = (1-\mu) \frac{1}{N} \sum_{n \in [N]} \bm{y}^\star(\bmt z_n)^\top \bm{f}(\bmt x_n; \bm{w})$, $\lambda = 1$, and $\mu = -1$, \ref{model:ddr_empirical} becomes
\begin{align*}
    &\min\limits_{\bm{w}} \ (1-\mu) \frac{1}{N} \sum_{n \in [N]} \bm{y}^\star(\bmt z_n)^\top \bm{f}(\bmt x_n; \bm{w})- \frac{1}{N}\sum_{n \in [N]} \min\limits_{\bm y_n\in\cy} \Big\{ - \bm y_n^\top \bmt z_n + 2\bm y_n^\top\bm{f}(\bmt x_n; \bm w) \Big\},
\end{align*}
which is exactly the \ref{model:spo+} model. \hfill\Halmos

\begin{remark}\label{rem.canonical_SPO}
\ref{model:spo+} occurs when $\mu = -1$. In this case, \ref{model:ddr_empirical} reduces to, for some $\lambda \geq 0$:
\begin{equation}\label{model:spo+like}
\min_{\bm w}~
L(\bm w) - \lambda \bigg(\dfrac{1}{N}\sum_{n \in [N]}\min\limits_{\bm y_n \in \cy} \Big\{ 2\bm y^{\top}  \bm{f}(\mbt{x}_n, \bm{w})-\bm y^{\top}  \mbt{z}_n \Big\}\bigg).
\end{equation}
One way to understand the motivation of such a model, is to notice that the expression within the infimum has the following decomposition into the estimated cost and its estimation error when compared against the empirical cost:
\begin{equation}\label{model:spo+_decom}
\underbrace{\bm y^{\top}\bm{f}(\mbt{x}_n, \bm{w}) - \bm{y}^{\top} \mbt{z}}_{\text{estimation error}} + \underbrace{\bm y^{\top}\bm{f}(\mbt{x}_n,\bm{w})}_{\text{best estimate}}.
\end{equation}

We remark here that if we let $\phi = 0, \eta = 0$ and $t, \psi$ to be sufficiently large such that the first and the third conditions in $\mathcal{Z}(\bm w)$ no longer exist, {\em i.e.}, 
\begin{equation*}
    \cz(\bm w) = \left\{ \bm z := \big\{\bm z_n \in \bbr^s \big\}_{ n\in [N]} ~\left|~
    \begin{aligned}
        &\forall~ \bm y_n\in \mathcal{Y},\,\, n \in [N]:\\
        &\Big|\bm y_n^{\top} \bm z_n - \bm y_n^{\top} \bmt z_n\Big| = 0 \quad && \forall n\in[N] \\
        &\Big|\bm y^\star(\bm z_n)^{\top} \bmt z_n -  \bm y^\star(\bmt z_n)^{\top} \bmt z_n\Big| = 0 &&\forall n\in[N] 
    \end{aligned}\right. 
    \right\},
\end{equation*}
\ref{model::regret_ddr} is exactly \ref{model:spoloss}. Furthermore, if one follows the proof for Theorem \ref{thm.regret ddr} with these specifications of the parameters, then one will also recover the construction of \ref{model:spo+} from \ref{model:spoloss}. This once again reinforces the notion that the construction of \ref{model:ddr_empirical} as an approximation to \ref{model::regret_ddr} is as canonical as the construction of \ref{model:spo+} as an approximation to \ref{model:spoloss}.
\end{remark}

\section{Sub-optimality of the SLO approach and relationship of DDR to other regularizers} \label{append.relationship}

\subsection{Sub-optimality of the SLO approach}
To better understand why directly minimizing prediction error is sub-optimal, consider the toy knapsack problem of picking smallest cost $\bm z$, where cost function is $c(\bm y; \bm z) = \bm y^{\top}\bm z$, the feasible set is $\mathcal{Y} = \{ \bm y \in [0,1]^N, \bm y^{\top} \bm 1 = 1\}$, and $\bm z$ is generated from some true linear relationship $\bm z = \mbc w^{\top} \bm x + \bm \epsilon$, with true parameter $\check{\bm w}$. Typically, prediction error is measured globally. Thus, the residuals $\bm \epsilon$ of a model that minimizes global prediction error, will be smaller over less extreme-valued regions of $\bm x$ and vice versa, as illustrated by the dark blue shaded confidence interval in Figure \ref{fig.prediction accuracy} below. However, in the knapsack problem, we are concerned about extreme values of $\bm z$, which occur over the region of extreme values of $\bm x$ where the prediction error is currently largest. In other words, more generally, the cost minimization problem induces a region $\overline{\mathcal{X}} \subset \mathcal{X}$ over context $\bm x$, corresponding to a region $\overline{\mathcal{Z}} \subset \mathcal{Z}$ of the uncertainty $\bm z$, where optimal decisions to the cost minimization problem are more likely to occur than not. Moreover, this region $\overline{\mathcal{X}}$ is often extreme in relation to the support $\mathcal{X}$ as optimization, by definition, seeks extreme results. If one is able to improve the predictive accuracy over this region $\overline{\mathcal{Z}}$, then the optimization model is better able to differentiate between alternative decisions $\bm y$. Thus, a model with confidence interval indicated by the broken line in Figure \ref{fig.prediction accuracy} is expected to have higher decision power than its shaded blue counterpart, as it incurs smaller prediction error over the region that matters---the small values of $\bm z$. It may have a larger overall prediction error, but that results from sacrificing prediction accuracy over non-critical regions of $\mathcal{X}$---that lead to large values of $\bm z$---in favour of $\overline{\mathcal{X}}$.

\begin{figure}[htbp]
	\centering
	\includegraphics[width=0.40\linewidth]{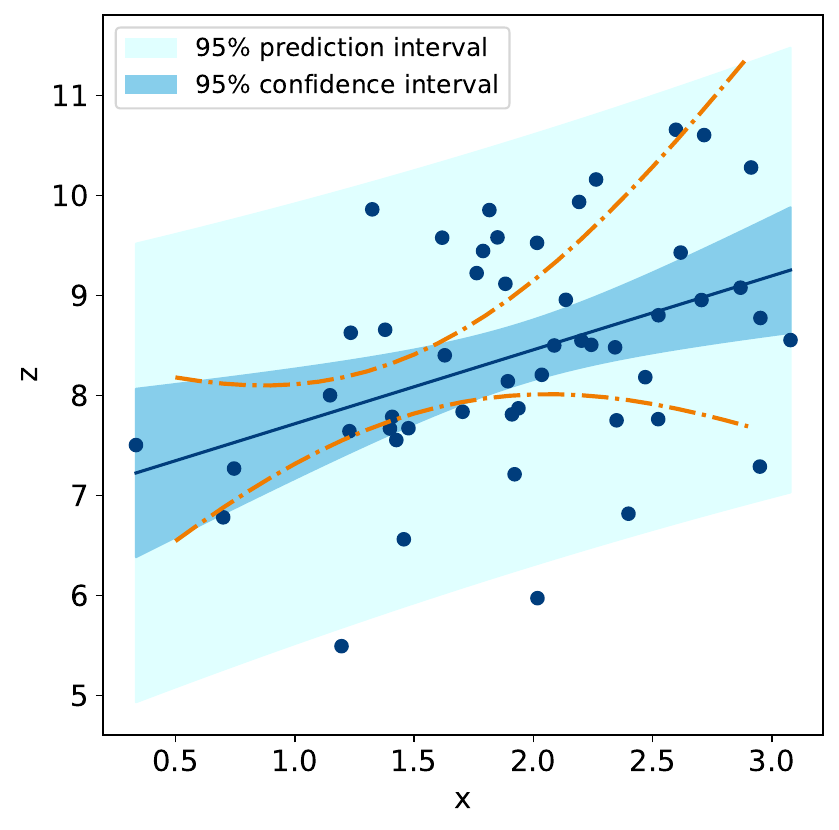}
	\vspace{-0.15cm}
	\caption{One-dimensional illustrative of centred (dark blue shaded area) and non-centred (broken orange line) confidence intervals}
	\vspace{-0.2cm}
	\label{fig.prediction accuracy}
\end{figure}

\subsection{Relationship to other regularizers}
In the \ref{model:ddr_empirical} model, the regularizer $R_{\mu}(\bm w) := - \frac{1}{N}\sum_n \nu_{\mu}^n(\bm w)$ is chosen as the sample average approximation of the valuation function $\nu_{\mu}(\bm w)$. Here, we shorthand $\nu_{\mu}^n(\bm w)$ to mean the valuation function evaluated on the single data point indexed by $n$, $(\bmt x, \bmt z) = (\bmt x_n, \bmt z_n)$.
In both Lasso and Ridge regressions, the regularizer is chosen as some norm: $R(\bm w) = ||\bm w||_q$. Nonetheless, recall that each norm can be represented as the optimizer of some optimization problem. By the definition of dual norm, we have
\begin{align}
    R(\bm w) = ||\bm w||_q =  \max\limits_{\bm \xi} & \;\sum\limits_{i\in [p]} \sum\limits_{j\in [d]} w_{ij}\xi_{i,j}, \label{eq.normdual}\\
    \mbox{s.t.} &\; \mb{\xi} \in \Xi := \big\{\bm \xi \in\mathbb{R}^{p\times d} : ||\mb{\xi}||_{q_*} \leq 1\big\} \nonumber
\end{align}
where $||\cdot||_{q_*}$ represents the dual norm of $||\cdot||_q$, and $\frac{1}{q} + \frac{1}{q_*} = 1$.

Consider the case where the learner is linear, \emph{i.e.}, $\bmt z :=\bm{f}(\bm x; \bm w) = \bm w^{\top} \bm x$. In the case where $\mu = 0$,  the decision-driven regularizer is 
\begin{align*}
    R_0(\bm w) & = \frac{1}{N} \sum_n \nu_0^n(\bm w) =  \max\limits_{y_n\in\mathcal{Y}}- \frac{1}{N} \sum_n \bm{y}_n^{\top} \bm w^{\top} \bm x_n \\
    & = \max\limits_{y_n\in\mathcal{Y}} \sum\limits_{i\in [p]} \sum\limits_{j\in [d]} w_{ij} \left(-\frac{1}{N} \sum_n x_{n,i}y_{n,j} \right).
\end{align*}
Thus, we may express $R_0(\bm w)$ as the following:
\begin{align}
    R_0(\bm w) =  \max\limits_{\bm \xi} & \;\sum\limits_{i\in [p]} \sum\limits_{j\in [d]} w_{ij}\xi_{i,j}, \label{eq.ddr_to_regu}\\
    \mbox{s.t.} &\; \mb{\xi} \in \Xi_0^N := \left\{\mb{\xi} \in\mathbb{R}^{p\times d} : \exists \bm{y}_n\in\mathcal{Y}, n\in [N], \xi_{i,j} = -\frac{1}{N} \sum_n x_{n,i}y_{n,j} \right\} \nonumber
\end{align}
that depends on the dataset $\mathcal{D}^N$. Denote $X^N$ as the $p \times N$ matrix that collects all $p$-dimensional feature vectors in the dataset of $N$ points, and denote $\mathcal{Y}^N := \mathcal{Y} \times \cdots \times \mathcal{Y}$ as the product space of $\mathcal{Y}$. Then, we can notice that $\Xi_0^N = \frac{1}{N}X^{N\top} \mathcal{Y}^N := \{\frac{1}{N}X^{N\top}\bm y : \bm y \in \mathcal{Y}^N \}$, which is the linear transformation of the feasibility space by the data matrix $X^N$.

The forms of \eqref{eq.normdual} and \eqref{eq.ddr_to_regu} allow us to draw the connection between decision-driven regularization and norm-based regularizers like Lasso and Ridge regressions. We wish to analyze this relationship by assuming that we are currently at the OLS solution, \emph{i.e.},  $\bm w  = \hat{\bm w}^{\text{OLS}}$, and investigating what impact adding either of the regularizers has on the performance of the solution -- specifically, how the base solution $\hat{\bm w}^{\text{OLS}}$, results in different weights, such as $\hat{\bm w}^{\text{Lasso}}$ in the case of Lasso. Here, we examine locally around the neighbourhood of OLS weights, $\hat{\bm w}^{\text{OLS}}$, the directions of improving (decreasing) $L(\bm w) + \lambda R(\bm w)$ for different regularizers $R(\cdot)$. As $\hat{\bm w}^{\text{OLS}}$ is the minimizer of $L(\bm w)$ (hence gradient of $L$ at $\bm w = \hat{\bm w}^{\text{OLS}}$ is $0$), the action of adding $R(\bm w)$ is to draw the solution in the direction of the gradient of $R(\bm w)$. Closer examination of problems \eqref{eq.normdual} and  \eqref{eq.ddr_to_regu} reveals that these directions are approximated by the optimal solutions of $\bm \xi$ at $\bm w = \hat{\bm w}^{\text{OLS}}$. This is because $\bm \xi $ are the dual variables in the convex conjugate forms of $R(\bm w)$. In other words, around the neighborhood of base solution $\hat{\bm w}^{\text{OLS}}$, the base solution is pushed along the optimal $\bm \xi$'s to problems \eqref{eq.normdual} and  \eqref{eq.ddr_to_regu} at $\bm w = \hat{\bm w}^{\text{OLS}}$ towards the norm-based regularizer and \ref{model:ddr_empirical} weights, respectively. However, problems \eqref{eq.normdual} and  \eqref{eq.ddr_to_regu} have the same objective, and hence the same optimizing direction, but with different feasibility sets $\Xi$ and $\Xi_0^N$. Thus, the eventual optimal solution of $\bm \xi$ depends on how different these feasibility sets are.

Our goal is to examine the impact on the performance of the eventual decisions, of pulling the OLS weights in the directions of the solutions to \eqref{eq.normdual} and \eqref{eq.ddr_to_regu}. If the solution $\bm \xi$ to problem \eqref{eq.ddr_to_regu} is a local improving direction for the performance, \emph{i.e.}, that \ref{model:ddr_empirical} improves on the performance of OLS on the eventual cost function, and if the angle between the two solutions $\bm \xi$ to problem \eqref{eq.normdual} and \eqref{eq.ddr_to_regu} is fairly large, then we can conclude that the direction in which norm-based regularizers pull the OLS weights towards, \emph{i.e.}, the solution to problem \eqref{eq.normdual}, may very likely be a local worsening direction for the performance of the eventual cost function.

This actually turns out to be the case in our simulations. We see that \ref{model:ddr_empirical} improves over OLS, and that the angles of the local directions from \ref{model:ddr_empirical} to Lasso and Ridge regressions are around 60\degree. In Table \ref{tab.angles}, we present the summary statistics of these angles amongst $100$ simulations. 
These angles are significantly large---in other words, the regularizers in Lasso and Ridge regressions are pulling the OLS weights away in very different directions from the decision-driven regularizer. As a consequence, we discover, ironically, that while Lasso and Ridge regressions both improve the prediction accuracy over OLS, they both lead to significantly poorer performance on the cost function.
\begin{table}[H]
\centering
\caption{Angles of local directions of Lasso and Ridge to \ref{model:ddr_empirical} in $100$ simulations}\label{tab.angles}
\vspace{0.20cm}
\begin{tabular}{l|c|c}
\toprule
\textbf{Angles to \ref{model:ddr_empirical}} & \textbf{Lasso} &\textbf{Ridge}  \\
\midrule
Mean            &  82\degree   &   72\degree         \\\hline
10th Percentile &   80\degree  &     72\degree         \\\hline
90th Percentile & 85\degree   &    73\degree         \\
\bottomrule
\end{tabular}
\end{table}

In conclusion, what we have explained here is that blindly adopting regularizers, even if for the purposes of improving prediction accuracy, might not lead to better performance. In fact, we can potentially see the opposite, where they lead to poorer performance, pulling the optimal solution away from the direction of improving performance. 

\section{Pseudo-code for Solving \ref{model::rddr}} \label{append.pseudo}

\begin{corollary}\label{coro.rddr}
Under the same assumptions as in Theorem \ref{thm.rddr}, then for all $\mu \in [0,1)$ and $\tau$ for which $\exists\bm{y}_n \in\cy,\, n \in [N]$ such that the target is attained $\frac{1}{N}\sum_{n \in [N]} \Big[\mu \bm y_n^\top \bmt z_n + (1 - \mu) c\big(\bm y_n;\bm{f}(\bmt x_n, \bmt w) \big)\Big] \leq \tau$, under learning optimal weights $\bmt w = \argmin L(\bm w)$, there exists some $\lambda := \lambda(\tau) \geq 0$ for which the solutions of {\rm \ref{model:ddr_empirical}} and the worst-case weights attained under optimal uncertainty set size $\rho^\star$ from {\rm \ref{model::rddr}} coincide. 
\end{corollary}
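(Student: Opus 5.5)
The plan is to dualize the semi-infinite constraint of \ref{model::rddr}, turn the resulting problem into a saddle-point problem in $(\bm y, \bm w)$, and recognize the \ref{model:ddr_empirical} objective inside it with $\lambda$ equal to the optimal dual ratio. This follows the route of Theorem~\ref{thm.rddr} and Proposition~\ref{prop.jeroequiv}, now for general $\mu \in [0,1)$. Write $\sigma = L(\bmt w)$ and $h(\bm y,\bm w) := \frac{1}{N}\sum_{n}\big[\mu \bm y_n^\top \bmt z_n + (1-\mu)\bm y_n^\top \bm f(\bmt x_n;\bm w)\big]$. By the concavity of $\bm f$ in $\bm w$ (as in Lemma~\ref{lemma.concavity of valuation function}), $h$ is concave in $\bm w$ for fixed $\bm y$ and linear in $\bm y$.

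First, for fixed $(\bm y,\rho)$, I rewrite the robust constraint $\max\{h(\bm y,\bm w): L(\bm w)\le \sigma+\rho\}\le\tau$ via Lagrangian duality as $\min_{\alpha\ge0}\max_{\bm w}\, h(\bm y,\bm w)+\alpha(\sigma+\rho-L(\bm w))\le\tau$.

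\begin{itemize}
\item Strong duality holds by Slater's condition. The problem is convex (concave objective, convex $L$), and since $\rho>0$ the point $\bmt w$ is strictly feasible.
\item The multiplier $\alpha$ is strictly positive at the optimum. If $\alpha=0$, the inner maximum is unbounded or independent of $\rho$, so the constraint is not binding at the largest $\rho$. I substitute $\alpha = 1/\beta$ with $\beta>0$.
\item Multiplying through by $\beta$, the constraint becomes equivalent to $\rho \le \sup_{\beta>0}\min_{\bm w}\{L(\bm w) - \beta h(\bm y,\bm w)\} + \beta\tau - \sigma$.
\end{itemize}

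Second, maximizing $\rho$ jointly over $\bm y\in\cy^N$ and $\beta$ gives the reformulation
$\max_{\bm y,\beta>0}\min_{\bm w}\{L(\bm w)-\beta h(\bm y,\bm w)+\beta\tau-\sigma\}$.
The hypothesis that $\tau$ is attainable at $\bmt w$ ensures the optimal value $\rho^\star$ is nonnegative and the problem is feasible. Let $(\bm y^\star,\beta^\star)$ be optimal. I take $\lambda(\tau) := \beta^\star$. With $\beta$ fixed at $\beta^\star$, the terms $\beta^\star\tau-\sigma$ are constants, so the remaining problem is $\max_{\bm y}\min_{\bm w}\{L(\bm w)-\lambda h(\bm y,\bm w)\}$.

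Third, I invoke Theorem~\ref{thm.rddr}. Its proof shows that this $\max_{\bm y}\min_{\bm w}$ equals \ref{model:ddr_empirical}. The minimax interchange is justified because $\cy$ is compact and convex, and the objective is convex in $\bm w$ and linear in $\bm y$ (Sion's theorem). Pulling $\max_{\bm y}$ inside the average gives $L(\bm w)-\lambda\frac1N\sum_n\min_{\bm y_n}\{\cdots\}$. Hence the minimizing $\bm w$ of the saddle problem at $\bm y^\star$, which is the worst-case weight in $\cu(\rho^\star)$ by complementary slackness, coincides with $\bmh w^{\rm DDR}$ for $\lambda=\beta^\star$.

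The main obstacle is the second step. The joint problem in $(\bm y,\beta)$ is not obviously concave, since $\beta h$ is bilinear in $(\beta,\bm y)$. So existence of an optimal $\beta^\star$, and the legitimacy of fixing $\beta=\beta^\star$ before exchanging min and max, need care. My first attempt would be to fix $\beta$, apply the argument above for each $\beta$, and then argue that the outer one-dimensional maximization attains its maximum at a finite positive $\beta^\star$. Finiteness should come from growth of $L$. Positivity should come from $\rho^\star>0$ together with the attainability of $\tau$. Coincidence of the solutions is then read off at that $\beta^\star$.
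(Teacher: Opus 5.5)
Your proposal follows the paper's own proof step for step: Slater at $\bmt w$, the substitution $\alpha=1/\beta$, the rewriting of \ref{model::rddr} as $\max_{\bm y,\beta}\min_{\bm w}\{L(\bm w)-\beta h(\bm y,\bm w)+\beta\tau-\sigma\}$, and matching this with the max--min form of \ref{model:ddr_empirical} from Theorem~\ref{thm.rddr} at $\lambda=\beta^\star$; the paper simply asserts that $\beta^\star$ is attained, so the obstacle you flag is left open there too. You can close it by eliminating $\bm y$ for each fixed $\beta>0$ via Sion's theorem, so the outer problem becomes maximizing the concave dual function $\beta\mapsto\inf_{\bm w}\{L(\bm w)-\sigma-\beta(\bar\nu_\mu(\bm w)-\tau)\}$ of $\min\{L(\bm w)-L(\bmt w):\bar\nu_\mu(\bm w)\ge\tau\}$, where $\bar\nu_\mu(\bm w):=\frac{1}{N}\sum_{n\in[N]}\nu_\mu^n(\bm w)$; hence $\beta^\star$ is that program's Lagrange multiplier, positive because $\rho^\star>0$ (as you say), but finite only under a constraint qualification such as $\bar\nu_\mu(\bm w)>\tau$ for some $\bm w$, not because of any growth of $L$.
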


In short, we seek the best radius of the uncertainty set $\rho$, via bisection search, wherein we solve a feasibility sub-problem in each iteration. Here, we present the pseudo-code for the algorithm, however, for more details, the reader can be directed to \cite{Zhu_Xie_Sim_2022joint}, as the procedure for solving the model follows identically.

\begin{algorithm}
    {\footnotesize
    \caption{RDDR}
    \label{algo.rddr}
    \begin{algorithmic}
        \STATE{\bf Input} $\tau$ and $\mbt{z}$. 
        \STATE {\bf Initialization}: $\rho_1 = 0, \rho_2 = \bar{\rho}$, where $\bar{\rho}$ is a sufficiently large number.
        \WHILE{$\rho_2 - \rho_{1} > \epsilon$}
            \STATE $\rho:= (\rho_1+r_2)/2$
            \STATE Solve the subproblem \ref{eq.subrddr}, obtain optimal value $\delta^\star$ and optimal decisions $\bm y_n^\star$.
            \IF{$\delta^\star>0$}
                \STATE $\rho_2 = \rho$
            \ELSE
                \STATE $\rho_1=\rho$
            \ENDIF
        \ENDWHILE
        \STATE {\bf Output}: Optimal $\rho^\star = (\rho_1+\rho_2)/2$ and optimal decision $\bm y^{\rm RDDR}_n = \bm y^\star_n$.
        \STATE{\bf Input} Optimal $\rho = \rho^\star$ and optimal decision $\bm y = \bm y_n^\star$
        \STATE{\bf Do} Solve the Problem \eqref{eq.worstw}, and obtain $\bm{w}^\star$
        \STATE{\bf Output} Worst scenario $\hat{\bm{w}}^{\sstext{RDDR}} = \bm{w}^\star$
    \end{algorithmic} }
\end{algorithm}

In this algorithm, the sub-problem is described as:
\begin{align}
    \min\limits_{t} &\quad t-\tau \label{eq.subrddr}\tag{\code{Sub-RDDR}}\\
    \mbox{s.t.} & \quad \frac{1}{N}\sum_{n \in [N]} \Big[\mu \bm y_n^\top \bmt z_n + (1 - \mu) \bm y_n^\top\bm{f}(\bmt x_n; \bm w)\Big] \leq t, \qquad \forall \bm{w}\in\mathcal{U}(\rho) \nonumber \\
    \mbox{where} & \quad\cu(\rho) := \{\bm{w}:L(\bm{w}) - L(\bmb w) \leq \rho\}. \nonumber
\end{align}

We finally utilize $\rho^\star$ and $\bm y^\star$ derived from the overarching problem to solve for the worst case $\bmh w^{\rm RDDR}$. This is done via,
\begin{align}
    \argmax\limits_{\bm{w}\in\mathcal{U}(\rho^\star)} & \ \frac{1}{N} \sum_{n \in [N]} \Big[\mu\, c(\bm{y}_n^\star; \mbt{z}_n) + (1-\mu)\, c\big(\bm{y}_n^\star;\bm{f}(\bmt x_n; \bm w)\big)\Big] \label{eq.worstw}\tag{\code{ROBUSTW}}
\end{align}

\section{Additional Simulation Results}\label{append.more_simu} 

In this Appendix, we present further results from the numerical illustration that were omitted from the main text for brevity. This includes the robustness of our findings to different parameters and a short study into the case when the network size is varied.

\rvedit{
\subsection{A Closer Look at Over-fitting in the Tree-based SLO Benchmarks}\label{subsec.rmse-depth}

In \S\ref{section:shortest-baseline} of the main text, Table~\ref{tab.accuracy} reports the in- and out-of-sample prediction accuracy of the SLO benchmarks (OLS, RF, XGBoost) and DDR under their default settings, which points to potential over-fitting in the two tree-based learners. To further elucidate the extent to which such over-fitting depends on model complexity, Table~\ref{tab.accuracy.detailed} refines the entries for RF and XGBoost by sweeping the maximum tree depth across $\{2,4,6,8,\text{Default}\}$, while keeping OLS and DDR unchanged as references. We can see that changing the hyperparameters might not resolve over-fitting, but is rather an inherent feature of the learner.


\begin{table}[H]
\centering
\caption{\rvedit{Prediction accuracy of SLO models in terms of root mean square error (RMSE) across multiple datasets}}\label{tab.accuracy.detailed}
\vspace{0.20cm}
\scalebox{0.9}{
\begin{tabular}{l||c|c|c||c|c|c}
\toprule
\multirow{2}{*}{\textbf{Models}} & \multicolumn{3}{c||}{\textbf{In-sample RMSE}} &\multicolumn{3}{c}{\textbf{Out-of-sample RMSE}}  \\ \cline{2-7}
& $5^{\scriptscriptstyle{\text{th}}}\%$-tile & Mean & $95^{\scriptscriptstyle{\text{th}}}\%$-tile & $5^{\scriptscriptstyle{\text{th}}}\%$-tile & Mean & $95^{\scriptscriptstyle{\text{th}}}\%$-tile \\
\midrule
OLS  & 3.8393 & 3.9416 & 4.0682 & 4.1279 & 4.1846 & 4.2395 \\\hline
DDR  & 3.8418 & 3.9440 & 4.0705 & 4.1286 & 4.1830 & 4.2352 \\\hline
RF(Maximum depth = 2)& 4.0068 &	4.1499 & 4.2974	& 4.3693 & 4.4779 &	4.5875 \\\hline
RF(Maximum depth = 4)& 3.36	  & 3.4682 & 3.5785	& 4.3047 & 4.3904 & 4.4816 \\\hline
RF(Maximum depth = 6)& 2.4572 &	2.6123 & 2.782	& 4.3191 & 4.3939 & 4.4766 \\\hline
RF(Maximum depth = 8)& 1.8514 &	1.9727 & 2.1223	& 4.338	 & 4.4143 &	4.504 \\\hline
RF(Default)   & 1.5872 & 1.6531 & 1.7039 & 4.3479 & 4.4255 & 4.5155 \\\hline
XGBoost(Maximum depth = 2)& 3.924 &	4.0634 & 4.221 & 4.3788 & 4.4943 & 4.6057 \\\hline
XGBoost(Maximum depth = 4)& 3.38  & 3.5202 & 3.6778 & 4.4097 & 4.517 & 4.6255 \\\hline
XGBoost(Maximum depth = 6)& 2.9835 & 3.114 & 3.2322	& 4.4822 & 4.5833 &	4.6913 \\\hline
XGBoost(Maximum depth = 8)& 2.8146 & 2.9393 & 3.0729 & 4.5155 &	4.6173 & 4.7142\\\hline
XGBoost(Default)  & 2.9835 & 3.1140 & 3.2322 & 4.4822 & 4.5833 & 4.6913 \\
\bottomrule
\end{tabular}}
\end{table}
}

\subsection{Robustness of Our Model Across Various Settings}
This section presents a series of numerical experiments designed to evaluate the robustness of the proposed \ref{model:ddr_empirical} model under varying conditions. Specifically, we investigate the influence of training data size, feature dimensionality, noise level, and model misspecification.
To assess the impact of training data size, the sample size $N$ is varied across the set $\{50, 100, 200, 500, 1000\}$, while all other parameters are held constant at their baseline values. Likewise, the number of features $p$ is selected from $\{1, 3, 5, 7, 10, 15\}$. To evaluate the effect of model misspecification, the parameter $\beta$ is varied over the set $\{0.4, 0.8, 1.0, 1.4, 1.6, 2.0, 3.0, 4.0\}$. Finally, the noise level $\bar{\epsilon}$ is examined by selecting values from $\{0.25, 0.5, 0.75, 1.0\}$. The comparison between \ref{model:ddr_empirical} and OLS is summarized in Figure~\ref{fig:DDR-OLS-3by3-mu=0.75-lamb=0.8_various_settings}.
\begin{figure}[htbp]
\vspace{-0.45cm}
	\centering
	\includegraphics[width=0.4\linewidth]{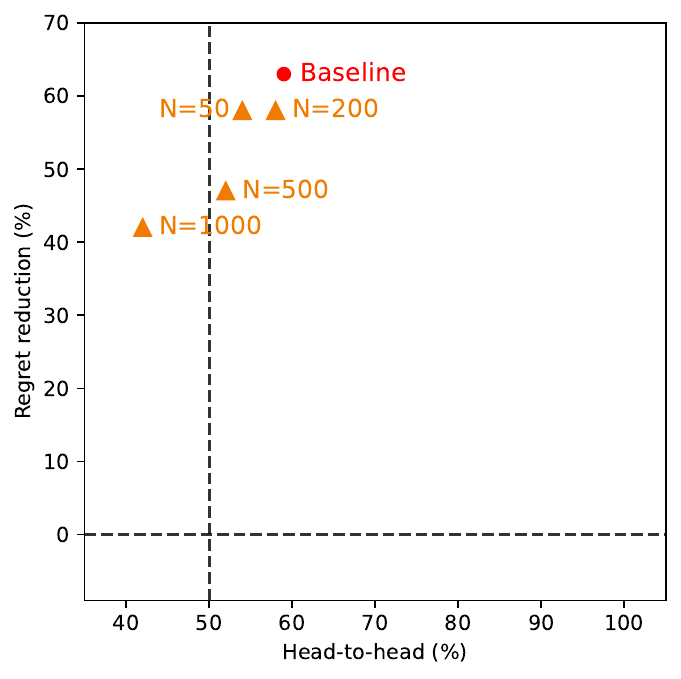}
	\includegraphics[width=0.4\linewidth]{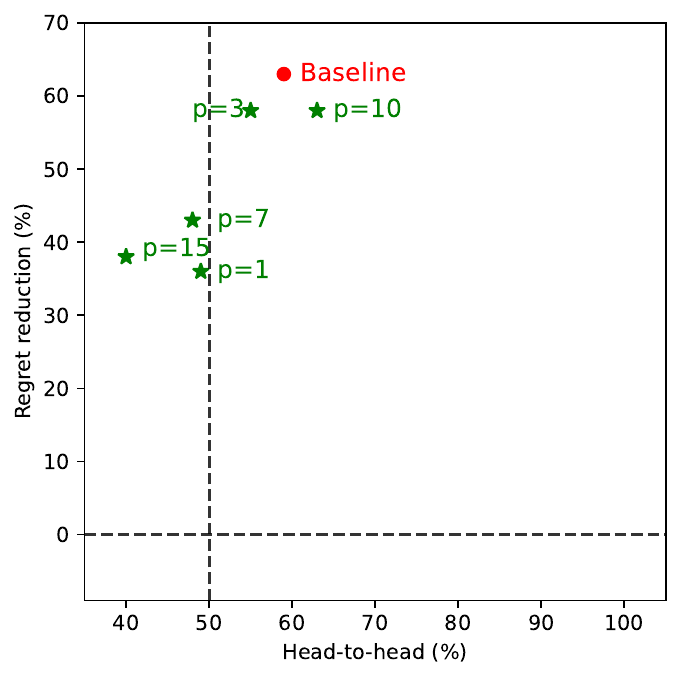}

        \includegraphics[width=0.4\linewidth]{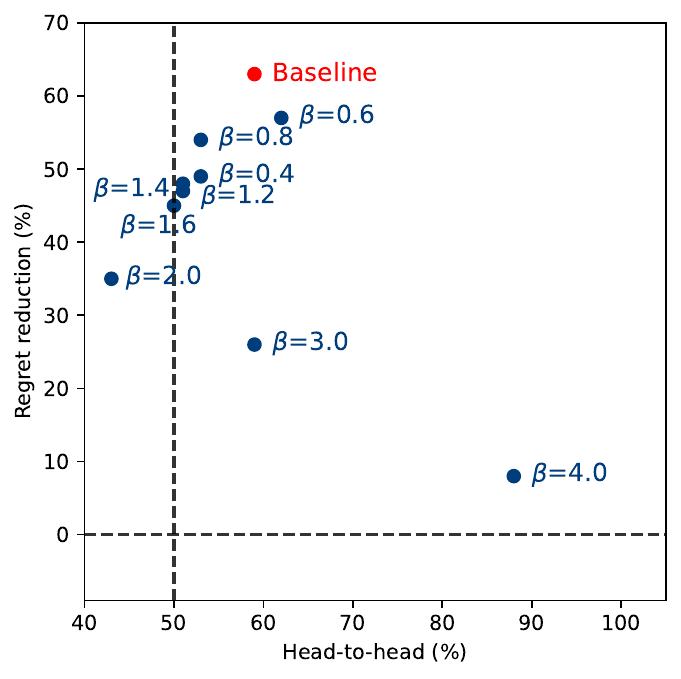}
	\includegraphics[width=0.4\linewidth]{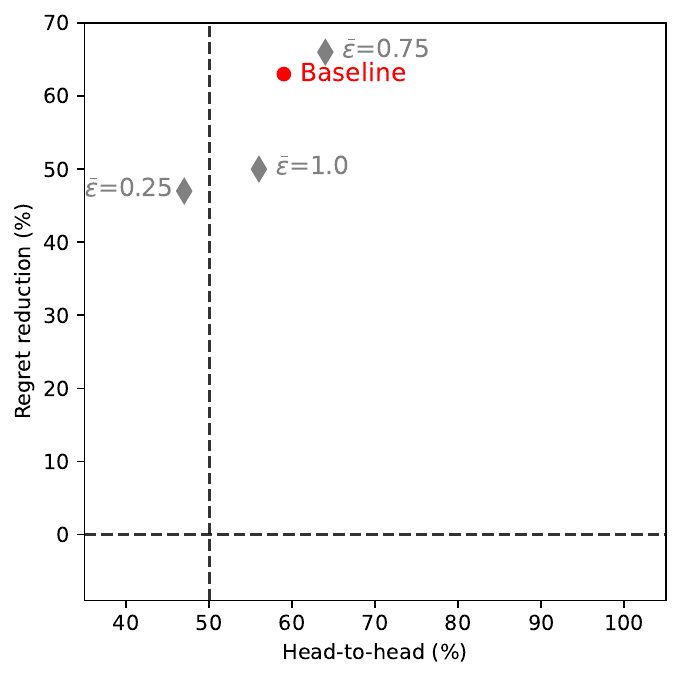} 
	\caption{Performance of \ref{model:ddr_empirical} against OLS under $3 \times 3$ grid and $\mu=0.75$ and $\lambda=0.8$ under various settings}
	\label{fig:DDR-OLS-3by3-mu=0.75-lamb=0.8_various_settings}
\end{figure}

From Figure~\ref{fig:DDR-OLS-3by3-mu=0.75-lamb=0.8_various_settings}, we first observe that the \ref{model:ddr_empirical} model consistently attains superior regret performance, as indicated by positive regret reduction across all evaluated settings. This finding highlights the advantage of employing the \ref{model:ddr_empirical} model. However, it is also apparent that in certain scenarios, the head-to-head performance of the \ref{model:ddr_empirical} model does not surpass that of OLS. For instance, when assessing the impact of training data size, the \ref{model:ddr_empirical} model exhibits improved head-to-head performance across all sample sizes except at $N = 1000$. This outcome is anticipated, as the OLS estimator's accuracy improves with larger sample sizes, thereby reducing the performance gap between \ref{model:ddr_empirical} and OLS in such cases. Nevertheless, Figure~\ref{fig:DDR-OLS-3by3-mu=0.75-lamb=0.8_various_settings} demonstrates that our \ref{model:ddr_empirical} model outperforms in terms of both regret reduction and head-to-head metrics in the majority of cases.

Similarly, the comparison between the \ref{model:ddr_empirical} and \ref{model:spo+} approaches is presented in Figure~\ref{fig:DDR-SPO-3by3-mu=0.75-lamb=0.8_various_settings}. The results indicate that the \ref{model:ddr_empirical} model consistently outperforms \ref{model:spo+} across all settings in terms of both regret reduction and head-to-head performance metrics, thereby demonstrating the superiority of the \ref{model:ddr_empirical} model over the \ref{model:spo+} baseline.

\begin{figure}[htbp]
    \vspace{-0.45cm}
	\centering
	\includegraphics[width=0.4\linewidth]{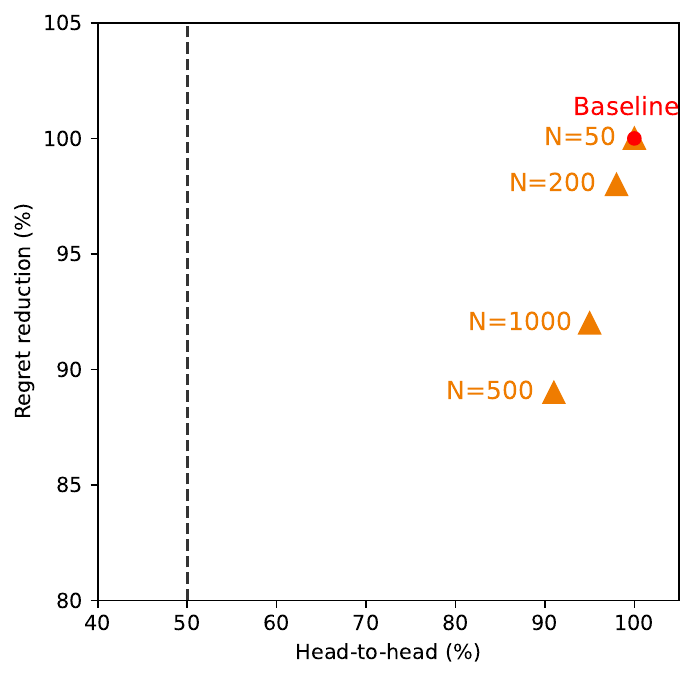} 
	\includegraphics[width=0.4\linewidth]{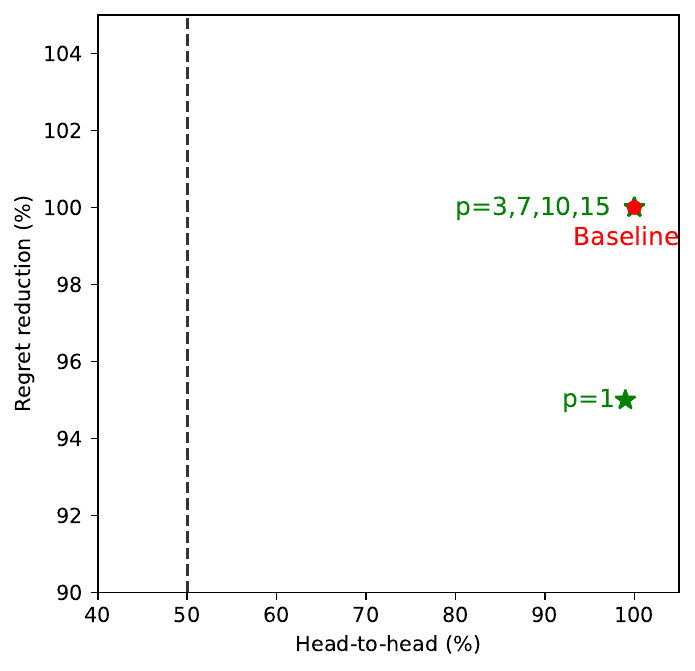} 

        \includegraphics[width=0.4\linewidth]{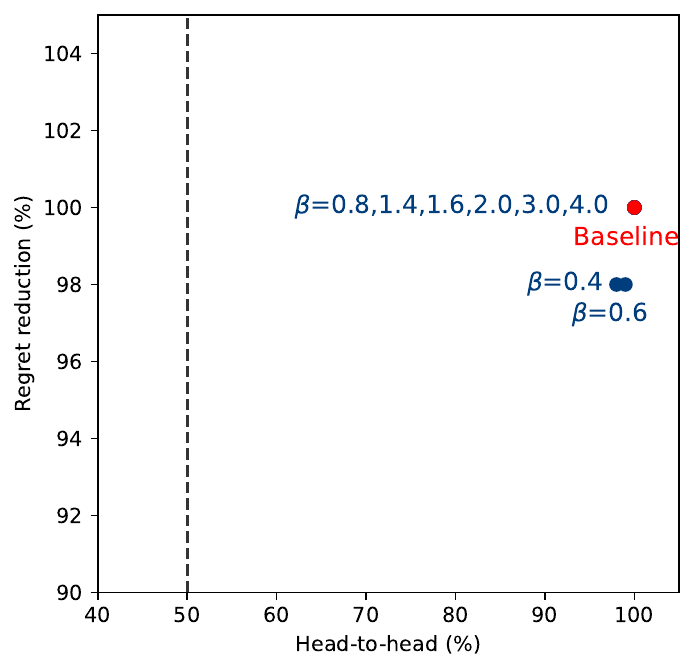} 
        \includegraphics[width=0.4\linewidth]{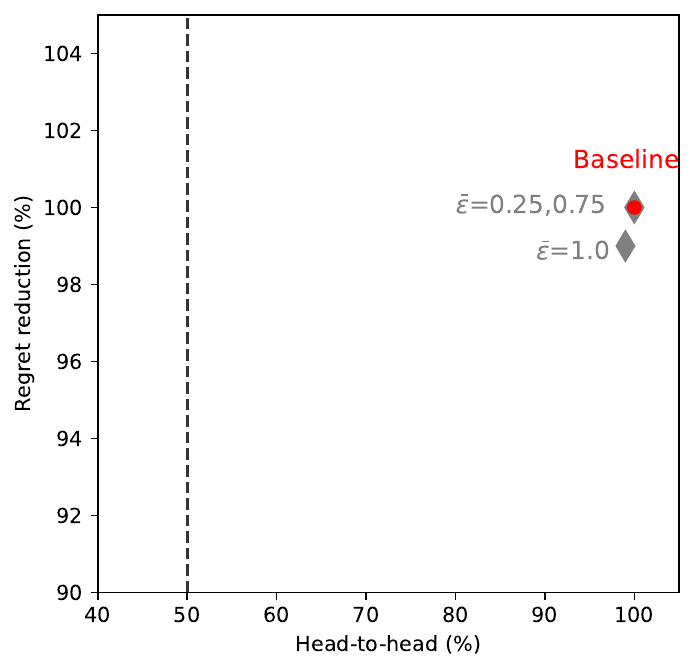}
	\caption{Performance of \ref{model:ddr_empirical} against SPO+ under $3 \times 3$ grid and $\mu=0.75$ and $\lambda=0.8$ under various settings}
	\label{fig:DDR-SPO-3by3-mu=0.75-lamb=0.8_various_settings}
	\vspace{-0.25cm}
\end{figure}

\section{DDR Tree}\label{append.tree}
While we have elected to use a linear learner in our numerical results, \ref{model:ddr_empirical} can more generally be applied to other learners. To illustrate this, in this section, we consider a tree-based learner in \ref{model:ddr_empirical}, and term it as DDR tree.

Consider a decision tree with $K$ leaf nodes, where the predictions $\bmh z$ are constrained to be equal if they fall into the same leaf node. We model this by defining $R_{k}$ the set of samples assigned to leaf node $k$, and predictions $\bmh z_k$ as being indexed by the leaf nodes $k \in [K]$. We write the DDR loss objective function as:
$$
L_{\rm DDR}(R_k, \bmh z_{k}) = \sum\limits_{k \in [K]}\sum\limits_{n \in R_{k}}\|\bmt z_{n} - \bmh z_{k}\|_{2}^{2} + \frac{\lambda}{N}\sum_{k \in [K]}\sum_{n \in R_{k}}\max_{\bm y_{n} \in \mathcal{Y}}\big\{-\mu \bm y_{n}^\top \bmt z_{n} - (1-\mu)\bm y_{n}^\top \bmh z_{k}\big\}
$$
where the first term represents the prediction accuracy and the second term represents the decision cost. Here, sets $R_{k}$ and predictions $\bmh z_{k}$ are the parameters (``weights") of the tree to be learnt and determined.

Using Proposition \ref{prop.compute}, the DDR loss function can be further reformulated as:
\begin{equation*}
    \begin{array}{llll}
         L_{\rm DDR}(R_k, \bmh z_{k})  & \displaystyle = \frac{\lambda}{N}\sum\limits_{k \in [K]}\sum\limits_{n \in R_{k}}\left( \frac{N}{\lambda}\|\bmt z_{n} - \bmh z_{k}\|_{2}^{2} + \min\limits_{\bm A^\top \bm \kappa_{n} \geq -\mu \bmt z_{n} - (1-\mu) \bmh z_{k}}\bm \kappa_{n}^\top \bm b \right).
    \end{array}
\end{equation*}
Accordingly, the prediction cost vectors for the leaf nodes, given the partitioning of observations into leaf nodes, \emph{i.e.}, the sets 
$R_{k}$, are determined by solving the following constrained optimization problem:
\begin{align*}
\bmh z^{\rm DDR} = \argmin\limits_{ \bmh z_{k} \; : \; \forall k \in [K]} & \left\{ \frac{\lambda}{N}\sum\limits_{k \in [K]}\sum\limits_{n \in R_{k}} \left( \frac{N}{\lambda}\|\bmt z_{n} - \bmh z_{k}\|_{2}^{2} + \bm \kappa_{n}^\top \bm b \right) \,\middle|\, \bm A^\top \bm \kappa_{n} \geq -\mu \bmt z_{n} - (1-\mu) \bmh z_{k}, \forall n \in R_{k},k \in [K] \Bigg. \right\}
\end{align*}
If the sets $R_{k}$ are known, the above optimization problem is easy. Instead, we propose the following two methodologies to optimize for $R_k$.

\subsection{Recursive Partitioning Approach.}
Emulating the CART algorithm, we construct the decision tree via a greedy recursive partitioning strategy. Let $x_{nj}$ denote the $j^{\scriptscriptstyle{\text{th}}}$ feature component of the $n^{\scriptscriptstyle{\text{th}}}$ observation in the training data. Starting from the entire training dataset, we consider binary splits defined by pairs $(j,s)$, where $j$ indicates the selected feature and $s$ denotes the split threshold. This split partitions the data into two disjoint subsets:
$$
R_{1}(j,s) = \{n \in [N] \,|\, x_{nj} \leq s \} \quad \text{ and } \quad R_{2}(j,s) = \{n \in [N] \,|\, x_{ij} > s \}.
$$
The optimal first split of the decision tree is determined by selecting the pair $(j,s)$ that minimizes the total $L_{\scriptscriptstyle{\text{DDR}}}$ loss over both subsets:
\begin{align*}
    &&\min\limits_{ \bmh z_{1}, \bmh z_{2}} & \quad \frac{\lambda}{N} \left[\sum\limits_{n \in R_{1}} \left( \frac{N}{\lambda}\|\bmt z_{n} - \bmh z_{1}\|_{2}^{2} + \bm \kappa_{n}^\top \bm b \right) + \sum\limits_{n \in R_{2}} \left( \frac{N}{\lambda}\|\bmt z_{n} - \bmh z_{2}\|_{2}^{2} + \bm \kappa_{n}^\top \bm b \right) \right] \\
    && \mbox{s.t.} & \quad \bm A^\top \bm \kappa_{n} \geq -\mu \bmt z_{n} - (1-\mu) \bmh z_{k}, \qquad \forall n \in R_{k}, k = 1,2 
\end{align*}
Once the optimal split is selected, the procedure is then recursively applied greedily to obtain binary splits in the resulting leaves until the stopping criteria is met. The prediction cost vectors $\bmh z^{\rm DDR}$ for each leaf node are then optimized simultaneously, after the complete tree structure has been determined.

\subsection{Integer Programming Approach.}
Motivated by recent advances in decision tree construction through optimization techniques \citep{Bertsimas_Dunn_2017_classification}, DDR tree can be solved as an integer programming model. Specifically, we aim to construct a decision tree based on a pre-specified tree depth $H$ and a minimum number of training observations $N_{min}$ required in each leaf node.

Note that not all leaf nodes are required to be active (\textit{i.e.}, contain training observations), and similarly, not all branch nodes need to be active splits (\textit{i.e.}, nodes that partition the data). 
Denote the index sets of leaf and branch nodes by 
$\mathcal{T}_{K}= \{1,\ldots,K\}$ and $\mathcal{T}_{B} = \{1, \ldots, B\}$, respectively. Let 
$g_{k} = \mathbb{I}\{\text{leaf~} k\text{~is not empty}\}$ and $d_{t} = \mathbb{I}\{\text{branch node}~t~\text{is an active split}\}$ be binary variables that indicate the activation status of the corresponding nodes.

Each node $t$ is associated with a split defined by decision variables $\bm a_{t} \in \{0,1\}^{p}$ and $b_{t} \in [0,1]$, where $\bm a_{t}$ indicates which feature component is involved with the split, and $b_t$ indicates the split point, assuming that all feature components have been normalized to the interval $[0,1]$. Note that $\bm a_{t}$ and $b_{t}$ are the decision variables.

Let $p(t)$ denote the parent node of $t$, and $A_{L}(t)$ the set of left ancestor nodes of node $t$, \emph{i.e.}, the set of ancestors of $t$ whose left branch has been followed on the path from the root to $t$. Similarly, we let $A_{R}(t)$ as the set of right ancestor nodes of $t$.

To ensure sufficient separation between data points in the splitting process, we define the minimum resolution for each feature dimension. Let $\varepsilon_{j} = \min \{x_{j}^{(n+1)} - x_{j}^{(n)}\left| x_{j}^{(n+1)} \neq x_{j}^{(n)}, n \in [N-1]\right.\}$ is the smallest nonzero difference between observed value of feature component $j$, where $x_{j}^{(n)}$ is the $n$-th largest value in the $j$-th feature. In addition, we denote $\varepsilon_{max} = \max_{j} \varepsilon_{j}$.

The integer programming formulation of the DDR decision tree is given as follows:
\begin{equation*}
        \begin{array}{llll}
        \min & \displaystyle \frac{\lambda}{N}\sum\limits_{k \in [K]}\sum\limits_{n \in [N]}r_{nk}\Big( \frac{N}{\lambda}\|\bmt z_{n} - \bmh z_{k}\|_{2}^{2} + \bm \kappa_{n}^\top \bm b \Big) \\
         \text{s.t }& \bm A^\top \bm \kappa_{n} \geq -\mu \bmt z_{n} - (1-\mu) \bmh z_{k} + (1-r_{nk})M, &\forall n \in [N],k \in [K] \\
        & \sum\limits_{k \in [K]}r_{nk} = 1, & \forall n \in [N]  \\
        & r_{nk} \leq g_{k}, & \forall n \in [N], l \in \mathcal{T}_{K} \\
        & \sum\limits_{n \in [N]}r_{nk} \geq N_{min}g_{k}, & \forall l \in \mathcal{T}_{K} \\
        & \bm a_{m}^\top \bmt x_{n} \geq b_{m} - (1 - r_{nk}), &\forall l \in \mathcal{T}_{K}, n \in [N], m \in A_{R}(k) \\
        & \bm a_{m}^\top (\bmt x_{n} + \bm \varepsilon) \leq b_{m} + (1 + \varepsilon_{max})(1 - r_{nk}), &\forall l \in \mathcal{T}_{K}, n \in [N], m \in A_{L}(k) \\
        & \sum\limits_{j \in [p]}a_{jt} = d_{t}, &\forall t \in \mathcal{T}_{B} \\
        & 1 - d_{t} \leq b_{t} \leq 1, & \forall t \in \mathcal{T}_{B} \\
        & d_{t} \leq d_{p(t)}, & \forall t \in \mathcal{T}_{B}\setminus \{1\} \\
        & a_{jt},d_{t} \in \{0,1\}, & \forall j \in [p], t \in \mathcal{T}_{B} \\
        & r_{nk},g_{k} \in \{0,1\}, & \forall n \in [N], l \in \mathcal{T}_{K}
    \end{array}
\end{equation*}
The objective function and the first four constraints form the DDR optimization problem, while the remaining constraints are adapted from the requirements of the optimal classification tree. For more details about the tree constraints, the reader can consult \cite{Bertsimas_Dunn_2017_classification}.

The objective function
$\frac{\lambda}{N}\sum\limits_{k \in [K]}\sum\limits_{n \in [N]}r_{nk}\Big( \frac{N}{\lambda}\|\bmt z_{n} - \bmh z_{k}\|_{2}^{2} + \bm \kappa_{n}^\top \bm b \Big)$ can be further reformulated as the following inequalities:
\begin{equation*}
    \begin{array}{llll}
         &  \displaystyle \frac{\lambda}{N}\sum\limits_{k \in [K]}\sum\limits_{n \in [N]}r_{nk}\Big( \frac{N}{\lambda}\varrho_{nk} + \bm \kappa_{n}^\top \bm b \Big)\\
         & (\varrho_{nk} + 1)^{2} \geq (\varrho_{nk} - 1)^{2} + 4\varpi^{2}\\
         & \varpi^{2} \geq \|\bmt z_{n} - \bmh z_{k}\|_{2}^{2}
    \end{array}
\end{equation*}
Moreover, the bilinear terms $r_{nk} \varrho_{nk} $ and $r_{nk} \kappa_{n}$ can be linearized by introducing additional variables and applying standard linearization techniques, such as McCormick envelopes.

\end{APPENDICES}
\end{document}